\documentclass[11pt]{article}
\usepackage[top=1in, left=1in, right=1in, bottom=1in, headheight=15pt,headsep=10pt]{geometry}

\usepackage{anyfontsize}
\usepackage{titlesec} 
\usepackage{titletoc} 
\usepackage{lmodern}
\usepackage{graphicx}
\usepackage{subfigure}
\usepackage{booktabs} 
\usepackage{fancyhdr}
\usepackage[skip=0pt]{subcaption}
\usepackage{diagbox}
\usepackage[round]{natbib}
\usepackage[backref=page]{hyperref}

\usepackage{amsmath}
\usepackage{amssymb}
\usepackage{mathtools}
\usepackage{amsthm}
\usepackage{mathrsfs}
\usepackage{appendix}
\usepackage{cancel}    
\usepackage{url}            
\usepackage{booktabs}       
\usepackage{microtype}      
\usepackage{setspace}
\usepackage{amssymb}
\usepackage{url}
\usepackage{lettrine}
\usepackage{framed}
\usepackage{color}
\usepackage{xfrac}
\usepackage{nicefrac}
\usepackage[svgnames, x11names]{xcolor}
\usepackage{algorithm}
\usepackage{algpseudocodex}
\usepackage{overpic}
\usepackage{dsfont}
\usepackage{booktabs} 
\usepackage{tikz}
\usepackage{wrapfig}
\usepackage{mathbbol}
\usepackage{multirow}
\usetikzlibrary{patterns}
\usetikzlibrary{decorations.pathreplacing}
\newlength{\bibitemsep}
\newlength{\bibparskip}
\let\oldthebibliography\thebibliography
\renewcommand\thebibliography[1]{%
  \oldthebibliography{#1}%
  \setlength{\parskip}{\bibitemsep}%
  \setlength{\itemsep}{\bibparskip}%
}
\usepackage{subfigure}
\usepackage{tabularx}

\usepackage{hyperref}
\hypersetup{
	colorlinks=true,
	linkcolor=Blue4,
	urlcolor=blue,
	citecolor=Blue4,
	linkbordercolor={0 0 1}
}
\usepackage[capitalize]{cleveref}
\crefname{ALC@unique}{Line}{lines}
\theoremstyle{plain}
\newtheorem{theorem}{Theorem}[section]
\newtheorem{proposition}{Proposition}[section]
\newtheorem{lemma}{Lemma}[section]

\newtheorem{example}{Example}

\theoremstyle{definition}
\newtheorem{definition}{Definition}
\newtheorem{assumption}{Assumption}
\newtheorem{claim}{Claim}

\theoremstyle{remark}

\usepackage{hyperref}
\usepackage{url}
\usepackage{booktabs}
\usepackage[justification=raggedright,skip=2pt]{caption}
\newcommand\numberthis{\addtocounter{equation}{1}\tag{\theequation}}
\newcommand{\ind}[1]{\mathbb{I}{\left[#1\right]}}

\newcommand{\argmax}{\mathop{\mathrm{argmax}}}

\def\R{\mathbb{R}}
\def\N{\mathbb{N}}
\def\P{\mathbb{P}}
\def\HP{\widehat{\mathbb{P}}}

\def\E{\mathbb{E}}
\def\Z{\mathbb{Z}}
\def\x{\mathbf{x}}

\def\B{\mathbf{B}}
\def\v{\mathbf{v}}
\def\W{\mathbf{W}}

\def\l{\ell}

\def\bidclass{\mathcal{B}}
\def\val{\mathsf{V}} 
\def\price{\mathsf{P}} 
\def\ut{\mathsf{U}}
\def\numbid{m}
\def\allbids{\boldsymbol{\beta}}
\def\ibid{\mathbf{b}}

\def\path{\mathfrak{p}}

\def\maxbid{M}

\def\safepolicy{\Pi}

\usepackage{tikz}
\usetikzlibrary{positioning, arrows.meta}

\newcommand{\DAG}[1]{\mathcal{G}^{#1}(\mathsf{N}_\v, \mathsf{E}_\v, \omega_\v)}

\newcommand{\optufclass}[1]{\mathcal{S}_{#1}^*}

\newcommand{\otherbid}[1]{\boldsymbol{\beta}_{-}^{#1}}
\newcommand{\ordotherbid}[2]{\boldsymbol{\beta}_{-, #1}^{-(#2)}}

\newcommand{\xDAG}[2]{\mathcal{G}^{#1}(\mathsf{N}_{#2}, \mathsf{E}_{#2}, \omega_{#2})}

\allowdisplaybreaks[4]

\NewDocumentCommand{\ARef}{ s s m }{%
    \IfBooleanTF{#2}{}{%
        \cref{#3}%
    }%
    \IfBooleanT{#1}{%
        \IfBooleanF{#2}{%
            , %
        }%
        Line~\ref{#3}%
    }%
}

\titleformat{\section}
  {\normalfont\bfseries\sffamily\large}
  {\thesection.}{0.5em}{}

\titleformat{\subsection}
  {\normalfont\bfseries\sffamily\normalsize}
  {\thesubsection.}{0.5em}{}

\titleformat{\subsubsection}
  {\normalfont\bfseries\sffamily\normalsize}
  {\thesubsubsection.}{0.5em}{}
  
\title{\vspace{-1cm}Learning to Bid in Discriminatory Auctions with Budget Constraints\footnote{Accepted at the 29th International Conference on Artificial Intelligence and Statistics (AISTATS) 2026.}}
\author{
  Negin Golrezaei\\
  \footnotesize Sloan School of Management, Massachusetts Institute of Technology, \href{mailto:golrezae@mit.edu}{\textsf{golrezae@mit.edu}}\\[1ex]
  Sourav Sahoo\\
  \footnotesize Operations Research Center, Massachusetts Institute of Technology, \href{mailto:sourav99@mit.edu}{\textsf{sourav99@mit.edu}}
}

\date{}
\begin{document}
\maketitle

\begin{abstract}
We study repeated bidding in multi-unit discriminatory (pay-as-bid) auctions for a single bidder with per-round utility equal to value minus $\alpha$ times payment, where $\alpha\in[0,1]$ is a cost-of-capital parameter. The bidder aims to maximize cumulative utility over $T$ rounds subject to a total budget $B$. The problem is challenging even without budgets: the action space is exponential in the bidder’s maximum demand \(\maxbid\), and the valuation vector (context) varies over time. Exploiting a decomposition of utility across units, we develop polynomial-time learning algorithms based on shortest paths in a directed acyclic graph, obtaining sublinear regret under both full-information and bandit feedback. In the bandit setting, the regret is independent of the number of contexts due to complete cross-learning: observing the utility of the chosen action under the realized context reveals the utility for the same action under all counterfactual contexts. With budget constraints, when the average normalized per-round budget \(\rho=\frac{B}{\maxbid T}<1\), we design a coupled primal-dual algorithm in which the DAG-based procedure uses dual-adjusted edge weights for primal updates, while online gradient descent updates the dual variable, yielding $\rho$-approximate sublinear regret. Finally, we give implementations whose per-round time and space are independent of the number of contexts, enabling scalability to large or even infinite context spaces.
\end{abstract}
\setstretch{1.33}
\vspace{-0.4cm}

\setcounter{tocdepth}{2}
\tableofcontents
\section{Introduction}

Multi-unit discriminatory auctions, also known as pay-as-bid (PAB) auctions, are commonly used in treasury auctions~\citep{brenner2009sovereign} and electricity markets~\citep{maurer2011electricity}. In this format, $K$ identical units of an item are sold to bidders who may demand multiple units, and each winning bidder pays exactly what they bid for each allocated unit. The well-known first-price auction~(FPA) is a special case of PAB auction when $K=1$.

Bidding in PAB auctions is notoriously challenging. Unlike truthful mechanisms such as VCG~\citep{vickrey1961counterspeculation, clarke1971multipart, groves1973incentives}, reporting one’s true valuation is not optimal in PAB auctions~\citep{ausubel2014demand}. Even under restrictive assumptions like common priors and equilibria play, computing optimal strategies is intractable due to the multi-dimensional nature of valuations~\citep{kasberger2025bidding}. The complexity is further amplified in repeated settings with budget constraints. Here, bidders must adapt to strategic or even adversarial competitors, time-varying valuations, navigate an exponentially large bid space (even with discrete bids, as is common in practice), and resolve the ``spend-or-save'' dilemma, where budget allocation today affects outcomes in the future. These challenges underscore the need for efficient and computationally tractable bidding strategies for budget-constrained bidders in repeated PAB auctions.

\subsection{Our Contributions}
We design bidding algorithms for a budget-constrained bidder in repeated multi-unit PAB auctions. 

\textbf{Modeling.} We consider the bidder’s utility as value minus $\alpha$ times payment, where $\alpha \in [0,1]$ is a cost-of-capital parameter~\citep{balseiro2019black}. This unifies two existing common behavioral models: traditional \emph{profit maximizers} $(\alpha=1)$ and \emph{value maximizers} $(\alpha=0)$~\citep{balseiro2021landscape,lv2023utility, deng2022fairness}. 
We model the budget as a \emph{hard constraint}, requiring it to be satisfied for every realized sequence of auctions, as is common in practice~\citep{wang2023learning,castiglioni2023online}. While prior works have focused on the setting where competing bids are drawn i.i.d. from a distribution~\citep{feng2023online,han2024optimal}, we allow the competing bids to be generated adversarially. The individual bids are integral multiples of $\epsilon \in (0,1)$, reflecting the minimum bid increments in real-world auctions such as eBay~\citep{ebay_automatic_bidding_2026} and U.S. Treasury auctions~\citep{hortaccsu2018bid}.

We focus on \emph{no-overbidding (NOB)} strategies, which require that cumulative bids never exceed cumulative valuations~(see \cref{def:overbid}). This ensures that the bidder’s utility is non-negative, regardless of the competing bids. For bidders with $\alpha=1$, prior work often imposes a stricter \emph{per-unit} NOB condition~\citep{markakis2015uniform,galgana2023learning}, which can be highly suboptimal when $\alpha<1$. Our cumulative notion of NOB is therefore more general and necessitates new techniques for designing bidding algorithms.

\textbf{No Budget Constraints.} We first study the offline problem of computing the hindsight-optimal strategy without budget constraints, assuming the competing bids are known in advance. The main challenge is that the number of NOB strategies grows as $O(\epsilon^{-\maxbid})$, where $\maxbid$ is the bidder’s maximum demand. To address this, we exploit that the bidder’s utility decomposes across units. Leveraging this structure, we construct an edge-weighted directed acyclic graph (DAG) of size $\text{poly}(\maxbid,1/\epsilon)$ in which each $s$–$d$ path corresponds to a NOB strategy and the associated path weight is an affine transformation of that strategy’s total utility. So, computing the optimal offline NOB strategy reduces to finding a shortest path in the DAG (\cref{thm:DAG-uf-policy}).

In the online setting, valuation vectors are drawn i.i.d.\ across rounds from a distribution $\mathcal D$ supported on $\mathcal V$. We first consider the case where $|\mathcal V|$ is finite~(and small), and later extend the results to large or infinite context spaces. Treating each valuation vector $\v\in[0,1]^{\maxbid}$ as a context, the bidder maps each context to a NOB strategy~(action) and, building on the offline reduction, maintains one DAG per context. The bidder then performs exponential-weights updates over $s$-$d$ paths using a variant of the weight-pushing method of \citet{takimoto2003path}. Under full-information feedback, all the competing bids are revealed \emph{ex post},\footnote{Strictly speaking, it suffices to reveal the top $K$ \textit{competing} bids (equivalently, the top $K+r_t$ bids overall), where $r_t$ is the number of units won by the bidder in round $t$.} allowing the bidder to update all DAGs. This algorithm does not require knowledge of $\mathcal D$ and achieves $\widetilde{O}(M^{3/2}\sqrt{T})$ regret, independent of $|\mathcal V|$ (\cref{thm:regret-safe}).\footnote{Here, $\widetilde{O}(\cdot)$ hides logarithmic factors in $1/\epsilon$.}$^,$\footnote{As $\v\in[0,1]^{\maxbid}$, the per-round utility is $O(\maxbid)$. Rescaling utilities by $1/\maxbid$ to lie in $[0,1]$ (as is standard in online learning) scales all stated regret bounds by $1/\maxbid$.}
 
In the bandit setting, only the bidder’s allocation is revealed after each auction. Under such limited feedback, a naive contextual bandit approach that updates only the DAG corresponding to the realized context typically incurs an unavoidable $O(\sqrt{|\mathcal V|})$ dependence in the regret bound. In contrast, in our setting the realized utility under the observed context helps infer counterfactual utilities that can be leveraged to update \emph{all} contexts, yielding \emph{complete cross learning} between contexts, a notion introduced by~\citep{balseiro2019contextual}. As a result, our regret bounds are independent of $|\mathcal V|$. Specifically, when the context distribution $\mathcal D$ is known, we exploit this knowledge to construct edge-weight estimators (cf.~\cref{eq:w-hat-estimate-exp3}), obtaining a regret bound of $\widetilde{O}(\maxbid^2\epsilon^{-3/2}\sqrt{T})$ (\cref{thm:regret-safe-bandit}). However, when $\mathcal D$ is unknown, constructing such estimators is more delicate. In this setting, we obtain a slightly weaker regret bound of $\widetilde{O}(\maxbid^2\epsilon^{-1}T^{2/3})$, while the learner overbids only on $\widetilde{O}(\maxbid\epsilon^{-1}T^{2/3})$ exploration rounds with high probability (\cref{thm:regret-safe-bandit-unk}). Finally, we establish a regret lower bound of $\Omega(\maxbid\sqrt{T})$~(\cref{thm:regret-LB}).

\textbf{With Budget Constraints.} Having addressed the unconstrained regime, we turn to the case where the bidder has a total budget $B=\rho\maxbid T$ over $T$ rounds. This introduces a long-term constraint that couples decisions across time. Under NOB and the payment rule, the per-round payment is at most $\maxbid$, so when $\rho\ge 1$ the budget never binds and the problem reduces to the unconstrained setting. The interesting regime is $\rho<1$, where the bidder must actively pace spending. 

Prior work on budget-constrained bidding has primarily focused on single-item auctions and has developed primal-dual based algorithms where the dual variable acts as a \emph{pacing multiplier} that directly scales values to set bids~\citep{balseiro2019learning,gaitonde2023budget,lucier2023autobidders}. In contrast, we develop a coupled primal-dual framework tailored to multi-unit PAB auctions. Building on \citet{castiglioni2022online}, we integrate the dual variable into our DAG-based primal regret minimizer via dual-adjusted edge weights (cf.~\cref{eq:edge-weight-online-PD}), while updating the dual variable using online gradient descent. The resulting algorithm jointly balances utility maximization and budget feasibility, yielding $\rho$-approximate regret of order $\widetilde{O}(R_T/\rho)$, where $R_T$ is the regret of the underlying primal algorithm.

\textbf{Efficient Implementation.} We next give efficient implementations of our learning algorithms for the full information setting and bandit setting with unknown context distribution that preserve the regret guarantees stated above while making the per-round time and space complexity \emph{independent} of the number of contexts $|\mathcal V|$. Thus, the algorithms remain tractable even for very large or infinite context spaces. The key observation is that edge utilities are affine in the context and decompose across edges, allowing us to maintain edge-specific coefficients shared across all contexts. Instead of a separate DAG for each $\v\in\mathcal V$, we use a \emph{super DAG} whose edge set contains the edges of all context-dependent DAGs and has the same order of size. We then update the shared coefficients on the super DAG each round, while keeping the core exponential-weights update via weight-pushing unchanged. The resulting per-round time and space complexity is $O(|\overline{\mathsf E}|)$, where $|\overline{\mathsf E}|=O(\maxbid^2/\epsilon^3)$ is the number of edges in the super DAG (see \cref{sec:infinite} for details).

\subsection{Related Works}

\textbf{Bidding under Constraints.} Bidding in repeated second-price auctions~(SPA) with budgets has been extensively studied in the stochastic setting~\citep{balseiro2015repeated, balseiro2019learning, balseiro2023the, feng2023online}, where the optimal $O(\sqrt{T})$ regret rate is achievable. These works leverage the truthfulness of SPA and adopt primal–dual frameworks in which the dual variables serve as \textit{pacing multipliers}. \citet{chen2024dynamic} studied an alternative budget-management approach known as \textit{throttling} and proposed a near-optimal throttling algorithm for SPA. For non-truthful auctions in the stochastic setting, \citet{gaitonde2023budget} designed a learning algorithm with $O(T^{3/4})$ \textit{pacing regret} relative to the optimal pacing multiplier, while \citet{aggarwal2025no} used the \textit{best Lipschitz bidding function} as a benchmark and achieved $O(\sqrt{T})$ regret. \citet{wang2023learning} proposed a primal–dual algorithm for FPA that attains $O(\sqrt{T})$ regret against the best budget-feasible strategy.

Under adversarially varying competing bids, \citet{balseiro2019learning} showed that no-regret learning is impossible even in SPA: no algorithm can guarantee a competitive ratio better than the average per-round budget, $\rho$. \citet{castiglioni2022online} studied FPA and obtained $\rho$-approximate sublinear regret. \citet{castiglioni2022unifying} extended this line of work by proposing a unified meta-algorithm that achieves \textit{best-of-both-worlds} regret guarantees—simultaneously robust to adversarial sequences and near-optimal in stochastic environments—for both FPA and SPA under budget and return-on-investment constraints. Finally, \citet{castiglioni2023online} showed that when $\rho$ is unknown, similar guarantees can be obtained by using weakly adaptive primal and dual regret minimizers, i.e., algorithms that ensure sublinear regret on every interval $[t_1,t_2]\subseteq[T]$. Our contribution is to extend this line of work to PAB auctions in the adversarial setting.




\textbf{Multi-unit Auctions.} Multi-unit auctions are a special class of combinatorial auctions in which identical units of a single good are allocated to the bidders. Examples of multi-unit auctions include emissions permit auctions \citep{cramton2002tradeable,alvarez2019assigning}, Treasury auctions \citep{nyborg2002bidder, garbade2005treasury, elsinger2019competition}, procurement auctions \citep{cramton2006dynamic}, and wholesale electricity markets \citep{tierney2008uniform, fabra2006designing}. The two most prominent payment rules in multi-unit auctions are the (a) \textit{uniform price} rule, where each bidder pays the same per-unit price~(usually the lowest accepted bid or the first rejected bid) and (b) \textit{discriminatory price} rule, where each bidder pays their bid for each unit they receive. While the revenue ranking between these formats is often ambiguous \citep{baisa2018large, ausubel2014demand}, uniform price auctions are favored for their perceived fairness (since all bidders pay the same per-unit price) and simplified bidding \citep{friedman1959}, whereas discriminatory price auctions are preferred for their transparency in payments. 

Recently, several works have studied learning to bid in repeated multi-unit auctions. For uniform price auctions,  \citet{branzei2023learning} study profit maximizers (\(\alpha=1\)) and obtain sublinear regret bounds; these were later improved in the bandit setting by \citet{potfer2024improved}. \citet{golrezaei2025learning} study the same auctions for value maximizers (\(\alpha=0\)) and give efficient learning algorithms with \(O(\sqrt{T})\) regret under both full-information and bandit feedback. For PAB auctions, \citet{galgana2023learning} provide polynomial-time learning algorithms for profit maximizers, which is the closest related work to ours. While our work builds on and generalizes theirs, there are substantive differences in the problem formulation, assumptions, algorithmic techniques, and online-learning guarantees. We summarize these differences in \cref{tab:gg24-vs-ours}. More recently, \citet{potfer2025comparing} compare the hardness of learning in uniform-price versus PAB auctions for profit maximizers under stochastic competing bids. Our work contributes to this line by studying PAB auctions for bidders with a general cost-of-capital parameter under budget constraints.



%

\begin{table}[!tbh]
\centering
\small
\caption{Comparison between \cite{galgana2023learning} and this paper.}
\begin{tabular}{p{0.22\linewidth} p{0.33\linewidth} p{0.37\linewidth}}
\toprule
\textbf{Facet} & \textbf{\cite{galgana2023learning}} & \textbf{This paper} \\
\midrule
Bidder's objective
& Profit-maximizing~($\alpha=1$) 
& Cost-of-capital parameter $\alpha\in[0,1]$\\
\addlinespace[2pt]
Notion of NOB 
& WLOG, Per-unit NOB ($b_j \le v_j$)
& Must enforce \emph{cumulative} NOB for general $\alpha$; per-unit NOB can lose an $O(\maxbid)$ factor when $\alpha<1$; see \cref{apx:example} \\
\addlinespace[2pt]
Offline optimization 
& Dynamic-programming graph based on per-unit NOB condition
&  Edge-weighted DAG whose nodes encode both bid values and the cumulative sum of bids along the path; edges enforce cumulative NOB. Naive extension of \cite{galgana2023learning} becomes shortest path in a DAG with multiple constraints, which is $\textsf{NP-Hard}$ in general~\citep{garey1979computers}.
\\
\addlinespace[2pt]
Budget constraints 
& --
& Considered \\
\addlinespace[2pt]
Context distribution 
& Known 
& Both known and unknown \\
\addlinespace[2pt]
Horizon knowledge ($T$) 
& Assumes known $T$ (or uses doubling trick for unknown $T$) 
& No prior horizon knowledge required when distribution is known. In the full information setting, this yields a \textit{best-of-both-worlds} behavior: constant regret in the stochastic setting while remaining adversarially robust\\
\bottomrule
\end{tabular}
\label{tab:gg24-vs-ours}
\end{table}

\textbf{Cross Learning in Bandits.} \citet{balseiro2019contextual} introduced \emph{cross learning} in contextual bandits, where the reward observed after playing an action in one context reveals (possibly partial) information about the reward that the same action would have obtained in other contexts. When such counterfactual rewards are revealed only for a subset of contexts, this is referred to as \emph{partial} cross learning; when they are revealed for all contexts, it is termed \emph{complete} cross learning. \citet{balseiro2019contextual} model this structure via a directed graph over contexts and derive sublinear regret guarantees under both stochastic and adversarial reward and context models, with rates governed by graph-dependent parameters. Subsequently, \citet{schneider2023optimal} studied the unknown context distribution setting with adversarial rewards and a polynomial number of arms, achieving an $O(\sqrt{T})$ regret bound (improving over the $O(T^{2/3})$ bound in \citet{balseiro2019contextual} for this regime). More recently, \citet{huang2025high} refined the analysis of the algorithm in \citet{schneider2023optimal} and obtained high-probability guarantees.

\textbf{Online Learning under Resource Constraints.} Bidding in repeated environments naturally fits within the bandits with knapsack (BwK) framework, a multi-armed bandit problem with global resource constraints~\citep{badanidiyuru2018bandits, immorlica2022adversarial, kesselheim2020online}. The BwK model has been generalized to handle arbitrary reward and resource functions~\citep{agrawal2019bandits}, contextual information~\citep{badanidiyuru2014resourceful, agrawal2016efficient}, and combinatorial semi-bandit feedback~\citep{sankararaman2018combinatorial}. In the stochastic setting, BwK admits vanishing $\widetilde{O}(\sqrt{T})$ regret, whereas in the adversarial case, no algorithm can achieve better than an $O(\log T)$ competitive ratio without additional assumptions. When the budget satisfies $B=\Omega(T)$, \citet{castiglioni2022online} obtain $\rho$-approximate sublinear regret, where $\rho=\frac{B}{T}$. A related benchmark, conceptually similar to uniform spending, is studied by \citet{braverman2025new}, who also establish sublinear regret guarantees. 

\section{Model}\label{sec:model}
\textbf{Notations.} For $n\in\N$, let $[n]=\{1,2,\dots,n\}$. Let $\ind{\cdot}$ denote the indicator function, i.e., $\ind{X}=1$ if the proposition $X$ is true and $0$ otherwise. We write $X\lesssim Y$ if $X\le CY$ for some absolute constant $C>0$, and define $X\gtrsim Y$ analogously. For any $\epsilon\in(0,1]$, let $\Z_\epsilon=\{k\epsilon: k\in\Z_{\ge 0}\}$. For a given integer $\maxbid$, define
\(
\bidclass := \left\{\ibid\in \Z_\epsilon^{\maxbid}\cap[0,1]^{\maxbid}:\; b_1\ge \cdots \ge b_{\maxbid}\right\},
\)
i.e., the set of all nonincreasing bid vectors on the $\epsilon$-grid.
\subsection{Pay-as-Bid Auctions}

In a PAB auction, there are $K$ identical units and bidders may demand multiple units. We study the bidding problem from the perspective of a single bidder whose maximum demand is $\maxbid\in[K]$. The bidder has a private valuation vector $\v\in[0,1]^{\maxbid}$ with diminishing marginal returns, i.e., $v_1\ge \cdots \ge v_{\maxbid}$~\citep{goldner2020reducing}. The valuation vector $\v$ is drawn from a distribution $\mathcal D$ supported on $\mathcal V$. For simplicity, we assume $\mathcal V$ is finite; in \cref{sec:infinite} we extend the results to infinite context spaces. For any $\v$, define the cumulative valuation vector $\W\in\R_+^{\maxbid}$ where $W_j=\sum_{\ell=1}^j v_\ell, \forall j\in[\maxbid]$.



\textbf{Allocation and Payment Rule.} The bidder submits a vector of bids $\ibid=[b_1, \dots, b_\maxbid]\in\bidclass$ {sorted in non-increasing order}. Each bid is an integral multiple of $\epsilon$, reflecting the discretization used in real-world auctions (e.g., increments of \$0.01). For $\ibid=[b_1, \dots, b_\maxbid]$, define $\B\in\bidclass$ as the cumulative bid vector where $B_j=\sum_{\l=1}^jb_\l, \forall j$. The competing bids are denoted by $\otherbid{}$ and the bid profile is $\boldsymbol{\beta}:=(\ibid; \otherbid{})$. 
The auctioneer elicits bids from all the bidders and the multiset of the top $K$ bids are called  \textit{winning} bids. Ties are resolved according to a fixed, publicly known deterministic rule, such as favoring lower indexed bidders~\citep{nisan2011best, chiesa2015knightian}. The bidder is allocated one unit for each of their bids in the multiset of winning bids. The total number of units allocated to the bidder under bid profile $\allbids$ is denoted by $x(\allbids)$. If the valuation vector is $\v=[v_1, \dots,v_\maxbid]$, the total value obtained by acquiring $x(\allbids)$ units is $\val_\v(\allbids):=W_{x(\allbids)}=\sum_{j\leq x(\allbids)}v_j$ and total payment is the sum of the accepted bids of the bidder, i.e., $\price(\allbids):=B_{x(\allbids)}=\sum_{j\leq x(\allbids)}b_j$. 

The bidder intends to maximize their quasilinear utility function $\ut_\v(\cdot)$. Formally, for a bid profile $\allbids=(\ibid; \otherbid{})$ and cost of capital $\alpha\in[0, 1]$,
\begin{align}\label{def:utility}
    \ut_\v(\allbids):=\val_\v(\allbids)-\alpha\price(\allbids)\,.
\end{align}
The utility function in \cref{def:utility} captures two well-studied bidder behavioral model. The traditional profit maximizer model is obtained when $\alpha=1$~\citep{borgers2015introduction}, while $\alpha=0$ corresponds to the value maximizer model, which has gained attention in the context of autobidders~\citep{balseiro2021robust,balseiro2021landscape,deng2022fairness}. 

\begin{definition}\label{def:overbid}
  For a given $\v$, $\ibid = [b_1, b_2, \ldots, b_{\maxbid}]$ is a \textit{no-overbidding~(NOB)} strategy if for all $\l \in [\maxbid]$, $B_\l \leq W_\l$, where $B_\l = \sum_{j=1}^\l b_j$ and $W_\l = \sum_{j=1}^\l v_j$. The collection of all NOB strategies corresponding to any $\v\in\mathcal{V}$ is $\bidclass_\v\subseteq\bidclass$.
\end{definition}

\begin{assumption}\label{assum:overbid}
    The bidder follows NOB strategies.
\end{assumption}
Our notion of NOB in \cref{assum:overbid} is standard in multi-unit auctions~\citep{christodoulou2016bayesian}. In prior work on profit-maximizing bidders, a more restrictive \emph{per-unit} NOB condition of $b_j \le v_j$ for all $j$ can be imposed without loss of generality~\citep{markakis2015uniform, galgana2023learning}. But this restriction can be highly suboptimal when $\alpha<1$ (see \cref{apx:example}). \cref{assum:overbid} ensures that $\ut_{\v}(\allbids)\geq 0$ for all $\otherbid{}$ and $\alpha \in [0,1]$.\footnote{If we normalize the utility of the outside option (not participating in the auction) to $0$,  this is referred to as the \textit{individual rationality} (IR) or \textit{participation} constraint. For a broad class of auctions, bidders adopt NOB strategies precisely to satisfy the IR constraint~\citep{bhawalkar2011welfare,de2013inefficiency}.} To see this, fix any $\v$, $\alpha \in [0,1]$, and competing bids $\otherbid{}$. For any NOB strategy $\ibid$, let $\allbids = (\ibid;\otherbid{})$. Then
$
\ut_\v(\allbids) = \val_\v(\allbids) - \alpha \price(\allbids) 
= W_{x(\allbids)} - \alpha B_{x(\allbids)} \geq0$.
Moreover, for any overbidding strategy, i.e., $\exists\l\in[\maxbid]$ such that $B_\l> W_\l$, there exists a competing bid profile for which $\ut_\v(\cdot)<0$ (see \cref{apx:overbidding-failure}). As competing bids can be adversarial, bidders follow NOB strategies to ensure that $\ut_\v(\cdot)\geq0$. 

\begin{example}
    Consider a PAB auction with $K=5$ identical units and $\epsilon=0.1$. The bidder's maximum demand is $\maxbid=5$,  and $\mathbf{v}=[1, 0.9, 0.7, 0.6, 0.4]$. Suppose their submitted strategy is $\ibid=[0.9, 0.9, 0.7, 0.6, 0.5]$. It can be verified that $B_j\leq W_j, \forall j\in[\maxbid]$ implying $\ibid$ is a NOB strategy. Suppose $\otherbid{}=[1, 0.8, 0.4, 0.3, 0.2]$. Thus, the winning bids are $[1, \underline{0.9}, \underline{0.9}, 0.8, \underline{0.7}]$ and the bidder is allocated $3$ units~(corresponding to the underlined bids). Hence, for any $\alpha\in[0, 1]$, $\val_\v(\allbids)=1+0.9+0.7=2.6$ and $\price(\allbids)=0.9+0.9+0.7=2.5$ which implies that $\ut_\v(\allbids)=2.6-2.5\alpha > 0$. 
\end{example}
\subsection{Problem Statement} 
We study the bidding problem for a single budget-constrained bidder in a repeated setting over $T$ rounds, where a PAB auction is conducted in each round. The bidder has a fixed budget $B$, and the total expenditure across the $T$ rounds must not exceed $B$. Following prior work on budget-constrained bidders, we assume that $B=\rho \maxbid T$ for some constant $\rho \geq 0$~\citep{balseiro2019learning,gaitonde2023budget,castiglioni2023online}. 

In round $t \in [T]$, the bidder first observes their valuation vector $\v^t \in \mathcal{V}$, sampled i.i.d.\ from the distribution $\mathcal{D}$, and then submits $\ibid^t$. If the competing bids are $\otherbid{t}$, the resulting bid profile is $\allbids^t = (\ibid^t; \otherbid{t})$. Given cost of capital $\alpha \in [0,1]$ and allocation $x(\allbids^t)$ units, the bidder’s value $\val_{\v^t}(\allbids^t)$, payment $\price(\allbids^t)$, and utility $\ut_{\v^t}(\allbids^t)$ are defined as before. After each auction, the bidder receives feedback and updates their bidding policy for future rounds. We consider two standard feedback models: in the \textit{full-information setting}, all competing bids $\otherbid{t}$ are revealed, whereas in the \textit{bandit setting}, only the allocation $x(\allbids^t)$ is observed.

\textbf{Baseline.} 
We compare the bidder’s performance against the optimal stationary policy subject to budget constraints. Let $\safepolicy$ be the class of policies that maps valuation vectors to bidding strategies:
\begin{align}\label{eq:def-safepolicy}
    \safepolicy = \Big\{\pi: \mathcal{V}\to\bigcup_{\v\in\mathcal{V}}\bidclass_\v, ~\text{s.t.}~ \pi(\v)\in\bidclass_\v\Big\}\,.
\end{align}
The baseline in this setting is the expected utility subject to budget constraints over $T$ rounds:
\begin{align}\label{eq:opt-sto}
\mathsf{OPT}&:=\max_{\pi\in\safepolicy\cup\bot}\sum_{t=1}^T\E\left[\ut_{\v^t}(\pi(\v^t); \otherbid{t})\right]\nonumber\\
    \text{s.t.}\quad&\sum_{t=1}^T\price(\pi(\v^t); \otherbid{t}) \leq \rho\maxbid T\,.\tag{\textsc{Opt}}
\end{align}
Here, $\bot$ is the policy that maps all valuation vectors to a null action $\emptyset$ which satisfies $\val(\emptyset; \otherbid{t})=\price(\emptyset; \otherbid{t})=0, \forall \otherbid{t}$. Recall that the competing bids are adversarially chosen. The baseline assumes prior knowledge of the competing bids and finds the optimal policy $\pi^*\in\safepolicy\cup\bot$, where the expectation is taken over the context distribution $\mathcal{D}$. This baseline is consistent with prior works on repeated auctions~\citep{balseiro2019contextual,schneider2023optimal,kumar2024strategically}.  

\textbf{Performance Metric.}   
We distinguish two regimes depending on $\rho$: (i) $\rho \ge 1$ and (ii) $\rho<1$. When $\rho \ge 1$, the budget constraint is automatically satisfied. This is because, under the NOB assumption and the auction’s payment rule, we have $\price(\allbids^t)\le \val_{\v^t}(\allbids^t)\le \maxbid$ for all $\otherbid{t}$. Hence, the total payment over $T$ rounds is at most $\maxbid T$ implying the budget constraint never binds. Equivalently, we can treat this regime as unconstrained. The benchmark is
\begin{align}\label{eq:opt-sto-NB}
\mathsf{OPT}_{nb}:=\max_{\pi \in \safepolicy}\sum_{t=1}^T \E\left[\ut_{\v^t}(\pi(\v^t);\otherbid{t})\right],\tag{\textsc{Opt-nb}}
\end{align}
and performance is measured via (standard) regret:
\[
\mathsf{Reg}_{nb}(T) := \mathsf{OPT}_{nb} - \sum_{t=1}^T \E\left[\ut_{\v^t}(\allbids^t)\right].
\]
When $\rho<1$, the problem resembles \emph{adversarial bandits with knapsacks} (BwK)~\citep{immorlica2022adversarial, kesselheim2020online}. In this regime, the standard choice of performance metric is \emph{$\rho$-approximate regret}:
\[
\rho\cdot\mathsf{Reg}(T) := \rho\cdot \mathsf{OPT} - \sum_{t=1}^T \E\left[\ut_{\v^t}(\allbids^t)\right].
\]
Our goal in the remainder of the paper is to design algorithms that guarantee sublinear ($\rho$-approximate) regret, i.e., $\mathsf{Reg}_{nb}(T)=o(T)$ and $\rho\cdot\mathsf{Reg}(T)=o(T)$.
\section{No Budget Constraint}\label{sec:learning-safe}
As stated earlier, we first consider the regime $\rho \ge 1$, which is equivalent to the setting without budget constraints. As will become clear in \cref{sec:with-budget}, the algorithms developed for this regime are a crucial building block for handling the case $\rho<1$. 
\vspace{-0.25cm}
\subsection{Offline Setting}
To design our online algorithms, we begin by solving the \emph{offline} optimization problem in \eqref{eq:opt-sto-NB}, which provides key structural insights. 


\begin{lemma}\label{lem:opt-policy}
For $\rho \ge 1$, an optimal stationary policy $\pi^*\in\Pi$ for \eqref{eq:opt-sto-NB} satisfies
\begin{align}\label{eq:offline-1}
 \pi^*(\v)\in\arg\max_{\ibid\in\bidclass_{\v}}\sum_{t=1}^T \ut_{\v}(\ibid;\otherbid{t}),
\qquad \forall \v\in\mathcal V.   
\end{align}
\end{lemma}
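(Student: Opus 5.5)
The plan is to rewrite the objective in \eqref{eq:opt-sto-NB} as a sum over contexts weighted by their probabilities, and then observe that the maximization separates across contexts. Two facts drive this: the competing bids $\otherbid{t}$ are fixed in advance (adversarial but oblivious, so they do not depend on the realized $\v^t$), and $\v^t$ is drawn i.i.d.\ from $\mathcal D$ on the finite set $\mathcal V$.

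\textbf{Step 1 (expand the expectation).} For any $\pi\in\safepolicy$ and each round $t$, the only randomness in $\ut_{\v^t}(\pi(\v^t);\otherbid{t})$ is $\v^t$. Writing $p(\v):=\Pr_{\mathcal D}[\v]$, we get
\begin{align*}
\sum_{t=1}^T \E\left[\ut_{\v^t}(\pi(\v^t);\otherbid{t})\right]
=\sum_{t=1}^T\sum_{\v\in\mathcal V} p(\v)\,\ut_{\v}(\pi(\v);\otherbid{t})
=\sum_{\v\in\mathcal V} p(\v)\sum_{t=1}^T \ut_{\v}(\pi(\v);\otherbid{t}).
\end{align*}
The swap of the two finite sums is harmless. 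Since $\mathcal V$ is finite, no measurability issues arise.

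\textbf{Step 2 (decoupling).} By \eqref{eq:def-safepolicy}, the policy class $\safepolicy$ is exactly the product $\prod_{\v\in\mathcal V}\bidclass_\v$: the value $\pi(\v)$ can be chosen in $\bidclass_\v$ independently for each $\v$, with no cross-context constraint. When $\rho\ge1$ there is no budget coupling either, as argued in the Performance Metric paragraph. Hence the objective is a nonnegatively weighted sum of terms $F_\v(\pi(\v)):=\sum_{t}\ut_\v(\pi(\v);\otherbid{t})$, each depending only on $\pi(\v)$.

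\textbf{Step 3 (conclude).} Each $\bidclass_\v\subseteq\bidclass$ is finite (an $\epsilon$-grid) and nonempty, since it contains the zero bid vector. So each $F_\v$ attains its maximum. The policy $\pi^*(\v)\in\arg\max_{\ibid\in\bidclass_\v}F_\v(\ibid)$ therefore maximizes every summand simultaneously, and thus maximizes the total. This gives an optimal policy satisfying \eqref{eq:offline-1}.

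For completeness, I would also argue the converse. Any optimal $\pi$ must satisfy \eqref{eq:offline-1} at every $\v$ with $p(\v)>0$; otherwise, replacing $\pi(\v)$ by a maximizer strictly increases the objective. Contexts with $p(\v)=0$ can be taken in $\mathcal V$ without loss of generality, or assigned a maximizer arbitrarily.

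The main subtlety, rather than a real obstacle, is Step 1: it requires that $\otherbid{t}$ not depend on $\v^t$, i.e., an oblivious adversary fixing the competing bids before the contexts are drawn. I would state this assumption explicitly, since it is what allows the expectation to factor as above.
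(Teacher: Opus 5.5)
Your proposal is correct and follows essentially the same route as the paper. Both arguments expand the objective as $\sum_{\v\in\mathcal V}\P[\v^t=\v]\sum_{t=1}^T \ut_\v(\pi(\v);\otherbid{t})$ using i.i.d.\ contexts, note that the budget constraint is vacuous when $\rho\ge 1$, and conclude by separability across contexts. Your extra remarks are sound refinements that the paper leaves implicit: that $\bidclass_\v$ is nonempty, the converse for contexts with positive probability, and the explicit statement that the adversary is oblivious.
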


Motivated by \cref{lem:opt-policy}, we study the offline optimization problem above for a fixed valuation vector $\v=[v_1,\ldots,v_{\maxbid}]\in\mathcal V$, which will in turn guide the design of our online algorithms.

Let $\ordotherbid{t}{k}$ denote the $k^{\text{th}}$ smallest bid among the top $K$ competing bids in round $t$, i.e., among all bids excluding those of the bidder under consideration. If the bidder wins $r$ units in round $t$, then for each $s\in[r]$ it must hold that $b_s \ge \ordotherbid{t}{s}$~(assume the tie-breaking rule is incorporated in the indicator function in \cref{eq:utility-decomp}). Therefore, for any $\ibid=[b_1,\dots,b_{\maxbid}]\in\bidclass_\v$,
\begin{align}\label{eq:utility-decomp}
\ut_{\v}(\ibid;\otherbid{t})
= \sum_{j=1}^{\maxbid} (v_j-\alpha b_j)\cdot\ind{b_j \ge \ordotherbid{t}{j}} .
\end{align}
Summing \cref{eq:utility-decomp} over all $t\in[T]$ yields the objective in \cref{eq:offline-1}. Moreover, \cref{eq:utility-decomp} shows that this objective decomposes across units, a property we exploit to solve the offline problem efficiently.

\subsubsection{Constructing the DAG}
Since $|\bidclass_\v|=O(\epsilon^{-\maxbid})$, na\"ively enumerating all strategies is infeasible. Instead, for each context $\v\in\mathcal V$ we construct a context-dependent edge-weighted DAG of size $\text{poly}(\maxbid,1/\epsilon)$, which will be crucial for computing the optimal policy $\pi^*(\v)$ in \cref{lem:opt-policy}. In particular, computing $\pi^*(\v)$ is equivalent to finding a shortest (minimum-weight) $s$--$d$ path in this DAG. Finally, we introduce the notion of a \emph{super DAG}—a slight modification of the context-dependent DAGs—which will be useful for the analysis in \cref{ssec:online} and is central to our efficient implementations in \cref{sec:infinite}.

\textbf{Context-Dependent DAG.} Fix $\v\in\mathcal V$. The (context-dependent) DAG $\DAG{}$ consists of a source node $s$, a destination node $d$, and $\maxbid$ intermediate layers. Layer $\l$ corresponds to the $\l^{\text{th}}$ bid. 

\underline{Nodes.} A node in layer $\l$ is a triple $(\l,b_\l,s_\l)$, where $b_\l\in \Z_\epsilon\cap[0,1]$ and
\begin{align}\label{eq:S-UB}
s_\l\in \Z_\epsilon \quad\text{and}\quad s_\l \le W_\l,
\end{align}
recalling that $W_\l=\sum_{j=1}^\l v_j$ and $\Z_\epsilon$ denotes the set of nonnegative integral multiples of $\epsilon$. The constraint $s_\l\le W_\l$ enforces the NOB assumption. For convenience, define the source as $s=(0,\infty,0)$.

\underline{Edges and weights.} Edges exist only between consecutive layers. A directed edge from $(\l-1,b_{\l-1},s_{\l-1})$ to $(\l,b_\l,s_\l)$ exists if
\begin{align}\label{eq:edge-def-cond}
b_{\l-1}\ge b_\l \quad\text{and}\quad s_\l=s_{\l-1}+b_\l .
\end{align}
The weight edge $e=(\l-1,b_{\l-1},s_{\l-1})\to(\l,b_\l,s_\l)$ is
\begin{align}\label{eq:edge-weight-offline}
\omega_\v(e)
:= \sum_{t=1}^T \frac{1-(v_\l-\alpha b_\l)\cdot\ind{b_\l\ge \ordotherbid{t}{\l}}}{1+\alpha}.
\end{align}
All edges from nodes in layer $\maxbid$ to the destination node $d$ have weight $0$.

\underline{Size of the DAG.} In layer $\l$, we have $b_\l\in\Z_\epsilon\cap[0,1]$ and $s_\l\in\Z_\epsilon$ with $s_\l\le W_\l\le \l$. Thus, the number of possible values for $b_\l$ is $O(1/\epsilon)$ and for $s_\l$ is $O(\l/\epsilon)$, so layer $\l$ contains at most
$O\!\left(\frac{1}{\epsilon}\cdot\frac{\l}{\epsilon}\right)=O\!\left(\frac{\l}{\epsilon^2}\right)$ nodes. Summing over $\l\in[\maxbid]$ yields $|\mathsf N_\v| = O\left(\frac{\maxbid^2}{\epsilon^2}\right)$.

For edges, fix a node $(\l-1,b_{\l-1},s_{\l-1})$ in layer $\l-1$. Condition in \cref{eq:edge-def-cond} implies that the choice of $b_\l$ uniquely determines $s_\l$, so the out-degree is $O(1/\epsilon)$. Therefore, $|\mathsf E_\v| = O\!\left(|\mathsf N_\v|\cdot \frac{1}{\epsilon}\right)
= O\!\left(\frac{\maxbid^2}{\epsilon^3}\right)$, and hence the DAG has size $\text{poly}(\maxbid,1/\epsilon)$.

\begin{figure}[!tbh]
\centering
\resizebox{0.7\columnwidth}{!}{%
\begin{tikzpicture}[
  >=stealth, thick,
  every node/.style={draw, rounded corners, inner sep=2pt, font=\small, minimum width=17mm, minimum height=5.2mm},
  pathnode/.style={draw, rounded corners, inner sep=2pt, font=\small, minimum width=17mm, minimum height=5.2mm, fill=blue!8},
  pathedge/.style={->, very thick, blue}
]

\node[minimum width=18mm] (s) {$s=(0,\infty,0)$};
\node (L1a) [right=1.6cm of s, yshift=20mm]          {$(1,0.4,0.4)$};
\node[pathnode] (L1b) [right=1.6cm of s]             {$(1,0.2,0.2)$};
\node (L1c) [right=1.6cm of s, yshift=-20mm]         {$(1,0,0)$};

\node (L2a) [right=2.0cm of L1a]                     {$(2,0,0.4)$};
\node[pathnode] (L2b) [right=2.0cm of L1b, yshift=8mm]  {$(2,0.2,0.4)$};
\node (L2c) [right=2.0cm of L1b, yshift=-8mm]        {$(2,0,0.2)$};
\node (L2d) [right=2.0cm of L1c]                     {$(2,0,0)$};

\node[minimum width=10mm] (d) [right=1.6cm of L2b, yshift=-6mm] {$d$};
\draw[->]       (s) -- (L1a);
\draw[pathedge] (s) -- (L1b);
\draw[->]       (s) -- (L1c);

\draw[->]       (L1a) -- (L2a);
\draw[pathedge] (L1b) -- (L2b);   
\draw[->]       (L1b) -- (L2c);
\draw[->]       (L1c) -- (L2d);

\draw[->]       (L2a) -- (d);
\draw[pathedge] (L2b) -- (d);     
\draw[->]       (L2c) -- (d);
\draw[->]       (L2d) -- (d);

\end{tikzpicture}%
}
\caption{DAG for $M=2$ with $\epsilon=0.2$. Nodes are $(\l, b_\l, s_\l)$ with $s_\l=\sum_{j=1}^{\l} b_j \le W_\l$, where $W_1=0.5$ and $W_2=0.55$. Each layer $\l$ chooses one bid and enforces monotonicity ($b_{\l-1}\!\ge b_\l$) and NOB condition ($s_\l\!\le W_\l$). The blue path corresponds to $\ibid=[0.2,0.2]$.}
\label{fig:dag-two-layer}
\end{figure}





    


We now state the main result of this section, which establishes a one-to-one correspondence between $s$--$d$ paths in the context-dependent DAG $\DAG{}$ and NOB strategies in $\bidclass_\v$.

\begin{theorem}\label{thm:DAG-uf-policy}
There is a bijection between $s$--$d$ paths in $\DAG{}$ and strategies in $\bidclass_\v$.

\smallskip
\noindent\emph{\textbf{(Path $\leftrightarrow$ strategy).}}
A path $\path = s \to (1,b_1,s_1)\to \cdots \to (\maxbid,b_{\maxbid},s_{\maxbid})\to d$ corresponds to the strategy $\ibid=[b_1,\dots,b_{\maxbid}]\in\bidclass_\v$. Conversely, any strategy
$\ibid=[b_1,\dots,b_{\maxbid}]\in\bidclass_\v$ corresponds to the unique path $\path$ stated earlier with $s_j=\sum_{\l=1}^j b_\l$ for all $j\in[\maxbid]$.

\smallskip
\noindent\emph{\textbf{(Path weight).}}
Let $\path$ be the path corresponding to $\ibid=[b_1,\dots,b_{\maxbid}]$. Then
\[
\omega_\v(\path) := \sum_{e\in\path}\omega_\v(e)
= \frac{\maxbid T - \sum_{t=1}^T \ut_{\v}(\ibid;\otherbid{t})}{1+\alpha}.
\]
In particular, maximizing $\sum_{t=1}^T \ut_{\v}(\ibid;\otherbid{t})$ over $\ibid\in\bidclass_\v$
is equivalent to finding a shortest (minimum-weight) $s$-$d$ path in $\DAG{}$. Moreover, since $|\mathsf N_\v|\lesssim |\mathsf E_\v|$, a shortest path in $\DAG{}$ can be computed in
$O(|\mathsf E_\v|)=O(\maxbid^2/\epsilon^3)$ time and space.
\end{theorem}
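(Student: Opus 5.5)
The plan is to prove the bijection directly from the node and edge definitions, then obtain the path-weight identity by summing \cref{eq:edge-weight-offline} along a path and invoking the decomposition \cref{eq:utility-decomp}, and finally invoke the standard linear-time shortest-path algorithm on a DAG.

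\textbf{Bijection.} For the forward direction, take an $s$--$d$ path $s\to(1,b_1,s_1)\to\cdots\to(\maxbid,b_\maxbid,s_\maxbid)\to d$. Edges only join consecutive layers, so the path visits exactly one node per layer. The edge rule \cref{eq:edge-def-cond} gives $b_{\l-1}\ge b_\l$ for $\l\ge2$; for $\l=1$ the source has $b_0=\infty$, so there is no constraint. By induction from $s_0=0$, the same rule gives $s_\l=\sum_{j\le \l} b_j=B_\l$. The node constraint \cref{eq:S-UB} then yields $B_\l\le W_\l$, and each $b_\l\in\Z_\epsilon\cap[0,1]$. Hence $\ibid\in\bidclass_\v$.

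For the converse, take $\ibid\in\bidclass_\v$ and set $s_j=B_j$. Each $B_j$ lies in $\Z_\epsilon$ and satisfies $B_j\le W_j$, so every triple $(j,b_j,B_j)$ is a valid node. Each consecutive pair satisfies \cref{eq:edge-def-cond}, so the triples form a path.

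Injectivity and surjectivity follow because the two maps are mutually inverse. The key point is that the $s_\l$ coordinates are forced by the $b$'s, so a path is determined by its bid sequence; uniqueness of the path for a given $\ibid$ is exactly this observation.

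\textbf{Path weight.} Sum the edge weights along the path. The $\maxbid$ edges entering layers $1,\dots,\maxbid$ contribute
\[
\sum_{\l=1}^{\maxbid}\sum_{t=1}^T\frac{1-(v_\l-\alpha b_\l)\ind{b_\l\ge\ordotherbid{t}{\l}}}{1+\alpha},
\]
and the final edge into $d$ contributes $0$. Swapping the sums and applying \cref{eq:utility-decomp} for each $t$ gives $\big(\maxbid T-\sum_t\ut_\v(\ibid;\otherbid{t})\big)/(1+\alpha)$. Since $1+\alpha>0$, this is a decreasing affine function of total utility, so maximizing utility over $\bidclass_\v$ is the same as minimizing path weight. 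The bijection transfers the optimization from strategies to paths.

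The step needing the most care is the validity of \cref{eq:utility-decomp} inside the weights: the bidder wins its $j$-th unit iff $b_j\ge\ordotherbid{t}{j}$, with tie-breaking folded into the indicator. Because bids are sorted, the set of won units is a prefix of $[\maxbid]$, so the indicator-sum reproduces $W_{x}-\alpha B_{x}$. I will take \cref{eq:utility-decomp} as given from the preceding discussion.

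\textbf{Complexity.} The DAG is layered, so a topological order is immediate (source, then layers $1,\dots,\maxbid$, then $d$). Dynamic programming over this order computes a shortest path in $O(|\mathsf N_\v|+|\mathsf E_\v|)$ time and space. Every non-source node either has an incoming edge or can be pruned, and out-degrees are $O(1/\epsilon)$, so $|\mathsf N_\v|\lesssim|\mathsf E_\v|$. The size count preceding the theorem then gives $O(\maxbid^2/\epsilon^3)$. This assumes the edge weights are available; they are precomputed from the $\ordotherbid{t}{\l}$ values.
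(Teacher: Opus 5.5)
Your proposal is correct and follows essentially the same route as the paper. The bijection comes from the node constraint $s_\l\le W_\l$ together with the edge rule $s_\l=s_{\l-1}+b_\l$ (which forces $s_\l=B_\l$), the path-weight identity comes from summing the edge weights and applying the per-unit utility decomposition, and the complexity bound is linear-time shortest path on a DAG; your extra justification of $|\mathsf N_\v|\lesssim|\mathsf E_\v|$, which the paper only asserts, is fine, and it also follows directly because every non-destination node has an out-edge (the one with $b_\l=0$).
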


\noindent\textbf{Super DAG.} The \emph{super DAG} $\mathcal G(\overline{\mathsf N},\overline{\mathsf E})$ has the same layered structure as the context-dependent DAG $\DAG{}$, but is independent of the context. In $\DAG{}$, a node $(\l,b_\l,s_\l)$ must satisfy $s_\l\in\Z_\epsilon$ and $s_\l\le W_\l$, where $W_\l=\sum_{j=1}^\l v_j$. In the super DAG, we replace this with the context-free constraint
\begin{align}\label{eq:S-UB-alt}
s_\l\in \Z_\epsilon \quad\text{and}\quad s_\l \le \l .
\end{align}
Because $W_\l\le \l$ for all $\v\in\mathcal V$, every node and edge feasible in $\DAG{}$ is also feasible in $\mathcal G(\overline{\mathsf N},\overline{\mathsf E})$. So, for every $\v\in\mathcal V$, we have $\mathsf N_\v \subseteq \overline{\mathsf N}$ and $\mathsf E_\v \subseteq \overline{\mathsf E}$. Moreover, for any $\v\in\mathcal{V}$, $\mathcal G(\overline{\mathsf N},\overline{\mathsf E})$ has the same order of size as $\DAG{}$: $|\overline{\mathsf N}|=O(\maxbid^2/\epsilon^2)$ and $|\overline{\mathsf E}|=O(\maxbid^2/\epsilon^3)$.

\subsection{Online Setting}\label{ssec:online}
We now build upon \cref{thm:DAG-uf-policy} to design a no-regret algorithm for the online setting, where a PAB auction is conducted in each round $t\in[T]$. 

\subsubsection{Full Information Feedback}
In the full-information setting, after each round \(t\) the bidder observes the competing bid profile \(\otherbid{t}\). In fact, it suffices to reveal only the top \(K\) competing bids, equivalently the top \(K+r_t\) bids in \(\allbids^t\), where \(r_t\) is the number of units won by the bidder in round \(t\). We model the problem as contextual online learning: each valuation vector \(\v\in\mathcal V\) is a context, and upon observing \(\v\), the bidder selects a strategy in \(\bidclass_{\v}\). By \cref{thm:DAG-uf-policy}, there is a bijection between strategies in \(\bidclass_{\v}\) and \(s\)–\(d\) paths in \(\DAG{}\). A natural idea is therefore to treat each path as an expert and run a no-regret algorithm such as Hedge~\citep{freund1997decision}.\footnote{This is referred to as \textit{Expanded Hedge}~\citep{koolen2010hedging} and as expanded exponential weights (EXP2) in \citet{audibert2014regret}.} However, a naive implementation is computationally infeasible, since it requires tracking \(O(\epsilon^{-\maxbid})\) experts. To circumvent this, we exploit the combinatorial structure of the action space and use a dynamic-programming-based variant of the weight-pushing method of \citet{takimoto2003path}, which maintains weights on edges rather than entire paths.

\textbf{Overview of the Algorithm.} The bidder maintains a context-dependent DAG $\DAG{t}$ for each $\v\in\mathcal V$. At the beginning of round $t$, the bidder observes $\v^t\sim\mathcal D$ and updates the edge probabilities $\{\phi_\v^t(e)\}_{e\in\mathsf E_\v}$ for \emph{all} $\v\in\mathcal V$ based on the previous-round values $\{\phi_\v^{t-1}(e)\}_{e\in\mathsf E_\v}$. Let $\omega_\v^t(e)$ denote the weight of edge $e\in\mathsf E_\v$ in round $t$ (for any $\v\in\mathcal V$), and let $\eta_t$ be the learning rate. Set $\eta_0=1$ and, for all $t\ge 1$, $\gamma_t := \frac{\eta_t}{\eta_{t-1}}$.

To compute $\phi_\v^t(\cdot)$, set $\Gamma_\v^{t-1}(d)=1$ and compute $\Gamma_\v^{t-1}(\cdot)$ bottom-up as follows. For every $u\in\mathsf N_\v$,
\begin{align}\label{eq:gamma-update}
   \Gamma^{t-1}_\v(u)=\sum_{v:u\to v\in\textsf{E}_\v}\Big(&\Gamma^{t-1}_\v(v)\cdot[\phi^{t-1}_\v(u\to v)]^{\gamma_t}\cdot\exp(-\eta_t \,\omega^{t-1}_\v(u \to v))\Big)\,.
\end{align}
Then, for every edge $e=u\to v \in \textsf{E}_\v$, update
\begin{align}\label{eq:phi-update}
    \phi^t_\v(e) = [\phi^{t-1}_\v(e)]^{\gamma_t}\cdot \exp(-\eta_t \,\omega^{t-1}_\v(e)) \cdot \frac{\Gamma^{t-1}_\v(v)}{\Gamma^{t-1}_\v(u)}.
\end{align}
As we show in the proof of \cref{thm:regret-safe}, the updates in \cref{eq:gamma-update,eq:phi-update} recovers the standard Hedge algorithm by setting $\eta_t=\eta$ for all $t\in[T]$. When $\eta_t$ time-decays, the update rule corresponds to the Decreasing Hedge algorithm~\citep{mourtada2019optimality}.

From \cref{eq:gamma-update} and \cref{eq:phi-update}, it is easy to verify that the edge probabilities \(\phi^t_\v(u \to \cdot)\) form a valid distribution over the out-neighbors of $u$. This motivates sampling edges sequentially in a Markovian fashion over \(\maxbid+1\) steps. Starting at node $s$, select an outgoing edge $s \to u$ with probability $\phi^t_{\v^t}(s \to u)$ and transition to node $u$, repeating the process until the destination $d$ is reached. The $s$-$d$ path obtained in this manner maps to a strategy in $\bidclass_{\v^t}$ as specified in \cref{thm:DAG-uf-policy}, which the bidder then submits. After the auction concludes, the bidder observes $\otherbid{t}$ and sets the edge weights in all the DAGs. Specifically, for each $\v\in\mathcal{V}$, and edge $\mathsf{E}_\v\ni e=(\l-1, b_{\l-1}, s_{\l-1})\to(\l, b_{\l}, s_{\l})$ in layer $\l\in[\maxbid]$, the edge weight is set as
\begin{align}\label{eq:edge-weight-online}
    \omega^t_\v(e) = \frac{1-(v_\l-\alpha b_\l)\cdot\ind{b_\l\geq \ordotherbid{t}{\l}}}{1+\alpha}\,.
\end{align}
All edges from nodes in layer $\maxbid$ to the destination node $d$ have weight $0$. Note that in this setting, the bidder does not require knowledge of the context distribution $\mathcal{D}$, and thus we may assume that $\mathcal{D}$ is unknown. The learning algorithm is formally presented in \cref{alg:weight-pushing-stoc}. 



\begin{theorem}\label{thm:regret-safe}
In the full-information setting without budget constraint, \cref{alg:weight-pushing-stoc} runs in $O(|\mathcal{V}|\maxbid^2/\epsilon^3)$ space and time per round. For $\epsilon\in(0,1)$ and any non-increasing sequence $\{\eta_t\}_{t=1}^T$ with $\eta_t>0$,
\[
    \mathsf{Reg}_{nb}(T) \lesssim \frac{\maxbid\log(1/\epsilon)}{\eta_T} + \maxbid^2 \sum_{t=1}^T \eta_t \,.
\]
If $\eta_t = \sqrt{\tfrac{\log 1/\epsilon}{\maxbid t}}, \forall t$, then 
$\mathsf{Reg}_{nb}(T)\lesssim\maxbid^{3/2}\sqrt{T\log 1/\epsilon}$. Alternatively, if $\eta_t = \eta= \sqrt{\tfrac{\log 1/\epsilon}{\maxbid T}}, \forall t$, the same regret bound (up to constants) is also achieved.\footnote{The variant with $\eta_t\propto t^{-1/2}$ is called an \textit{anytime} algorithm~\citep{lattimore2020bandit}, since it attains the optimal regret bound with no prior knowledge of $T$.}
\end{theorem}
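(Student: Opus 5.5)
We prove \cref{thm:regret-safe} in three parts: the equivalence of weight pushing with Hedge, a per-context regret bound, and averaging over contexts.

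\textbf{Step 1: the updates implement Hedge on paths.} Fix a context $\v$. We show by induction on $t$ that the path distribution induced by \cref{eq:gamma-update,eq:phi-update} is
\[
P^t_\v(\path)\propto \exp\Big(-\eta_t\sum_{\tau<t}\omega^\tau_\v(\path)\Big),
\]
so the sampled path law is exactly (Decreasing) Hedge over the experts $\bidclass_\v$. The key identity is that $\Gamma^{t-1}_\v(u)$ equals the sum, over all $u$--$d$ paths, of $\prod_e[\phi^{t-1}_\v(e)]^{\gamma_t}e^{-\eta_t\omega^{t-1}_\v(e)}$; this follows from the bottom-up recursion. Multiplying the $\phi^t_\v$ factors along a path telescopes the $\Gamma$ ratios to $1/\Gamma^{t-1}_\v(s)$. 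Consequently $P^t_\v(\path)\propto [P^{t-1}_\v(\path)]^{\gamma_t}e^{-\eta_t\omega^{t-1}_\v(\path)}$. With the inductive hypothesis $P^{t-1}_\v\propto e^{-\eta_{t-1}L^{t-1}}$, the exponent $\gamma_t\eta_{t-1}=\eta_t$ gives the claim.

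\textbf{Step 2: the per-context Hedge bound.} Edge weights in \cref{eq:edge-weight-online} lie in $[0,1]$, because $v_\l-\alpha b_\l\in[-\alpha,1]$ and we divide by $1+\alpha$. Hence path losses lie in $[0,\maxbid]$. By \cref{thm:DAG-uf-policy}, the path weight equals $(\maxbid-\ut_\v(\ibid;\otherbid{t}))/(1+\alpha)$, so utility regret equals $(1+\alpha)\le 2$ times path-loss regret. The number of experts satisfies $|\bidclass_\v|\le(1/\epsilon+1)^{\maxbid}$, so $\log N\lesssim \maxbid\log(1/\epsilon)$. The standard decreasing-learning-rate Hedge bound with losses in $[0,\maxbid]$ is
\[
\frac{\log N}{\eta_T}+\frac{\maxbid^2}{8}\sum_t \eta_t ,
\]
obtained from the potential argument of \citet{mourtada2019optimality}, using monotonicity of $\eta_t$ for the change-of-rate term. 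This holds against any fixed $\ibid\in\bidclass_\v$ for any adversarial sequence of edge weights.

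\textbf{Step 3: averaging over contexts.} By \cref{lem:opt-policy}, $\mathsf{OPT}_{nb}=\sum_t\E_{\v^t}[\ut_{\v^t}(\pi^*(\v^t);\otherbid{t})]$, with $\pi^*(\v)$ a fixed comparator for each $\v$. The crucial point is that every DAG is updated every round using the full loss $\omega^t_\v$, regardless of which context was realized. Therefore the distributions $P^t_\v$ are deterministic functions of $\otherbid{1:t-1}$; we treat the adversary as oblivious, or else argue conditionally. Since $\v^t$ is independent of the past and of $\otherbid{t}$, we have
\[
\E[\ut_{\v^t}(\allbids^t)]=\sum_\v \mathcal D(\v)\,\E_{\path\sim P^t_\v}[\ut_\v(\path;\otherbid{t})].
\]
The regret then becomes $\sum_\v\mathcal D(\v)\cdot\mathrm{Reg}_\v$, a convex combination of the Step 2 bounds. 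This yields the stated bound with no dependence on $|\mathcal V|$.

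\textbf{Complexity and tuning.} The bottom-up pass and the sampling each cost $O(|\mathsf E_\v|)=O(\maxbid^2/\epsilon^3)$ per context, giving $O(|\mathcal V|\maxbid^2/\epsilon^3)$ in total. Substituting $\eta_t=\sqrt{\log(1/\epsilon)/(\maxbid t)}$ and using $\sum_t t^{-1/2}\le 2\sqrt T$ gives $\maxbid^{3/2}\sqrt{T\log(1/\epsilon)}$. The constant choice $\eta_t=\eta$ gives the same bound.

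The main obstacle is Step 1: handling the $\gamma_t$ exponent correctly so the induction yields exactly the decreasing-rate Hedge distribution. Step 2 must then use a Hedge analysis valid for time-varying $\eta_t$, where the $\log N/\eta_T$ term requires non-increasing rates. A secondary subtlety is independence in Step 3 if the adversary is adaptive. There I would condition on the history and use that $\v^t$ is drawn fresh each round, so the per-round expected utility still equals the $\mathcal D$-mixture of the per-context expected utilities.
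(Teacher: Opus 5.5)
Your proposal is correct and follows essentially the same route as the paper. The paper likewise proves by induction, via the closed form of $\Gamma^{t-1}_\v(u)$ as a sum over $u$--$d$ paths and the telescoping of the $\Gamma$ ratios, that weight pushing reproduces Decreasing Hedge. It then bounds each context with the potential $\Phi^t_\v(\eta)$ and averages over $\mathcal D$ using that every DAG is updated each round; the only differences are that the paper re-derives the monotonicity of $\Phi^t_\v$ in $\eta$ and uses $e^{-x}\le 1-x+x^2$ where you cite the standard $\frac{\log N}{\eta_T}+\frac{\maxbid^2}{8}\sum_t\eta_t$ bound, and that it leaves implicit the conditioning argument for an adaptive adversary that you spell out.
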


\begin{algorithm}[!tbh]
\caption{\small No Budget Constraints~(Full Information)}
\label{alg:weight-pushing-stoc}
\small{
\begin{algorithmic}[1]
\Require Set of valuation vectors, $\mathcal{V}$, learning rates $\eta_t>0, \forall t\geq1$. Define $\eta_0=1, \phi^0_\v(e)=1$ and $\omega^0_\v(e)=0, \forall e\in \mathsf{E}_\v, \forall \v\in\mathcal{V}$.
\For{$t = 1, 2, \dots $}
    \State Observe an i.i.d. valuation vector sample $\v^t\sim\mathcal{D}$.
    \State Construct $\DAG{t}$ without weights $\forall\v\in\mathcal{V}$.\label{line:map-start}
    
    \For{$\v\in\mathcal{V}$}
    \State Obtain edge probabilities $\phi^t_\v(\cdot)$ following \cref{eq:gamma-update} and \cref{eq:phi-update}.
\EndFor
    \State Define initial node $u=s$ and path $\path^t=s$. 
    \While{$u\neq d$}
    \State Sample $v$ with probability $\phi^t_{\v^t}(u\to v)$.
    \State Append $v$ to the path $\path^t$; set $u\gets v$.
    \EndWhile
    \State Map $\path^t=s\to (1, b_1, s_1)\to\dots\to (\maxbid, b_\maxbid, s_\maxbid)\to d$, and submit $\ibid^t=[b_1, \dots, b_\maxbid]$.\label{line:map}
    
    \State Set edge weights per \cref{eq:edge-weight-online} for all $\v\in\mathcal{V}$.\label{line:set-wt}
\EndFor
\end{algorithmic}}
\end{algorithm}

\vspace{-0.1cm}
\subsubsection{Bandit Feedback}\label{sssec:bandit}
\textbf{Known Context Distribution.} In the bandit setting, the bidder observes the context $\v^t$, selects a bid $\ibid^t\in\bidclass_{\v^t}$, and after the auction observes only the allocation $x(\allbids^t)$. A natural approach is to treat this as a contextual bandit problem: maintain a context-dependent DAG for each $\v\in\mathcal V$ and update only the DAG corresponding to the realized context $\v^t$ using unbiased edge-weight estimators $\widehat{\omega}_\v^t(e)$.\footnote{This is referred to as the $S$-EXP3 algorithm~\citep{bubeck2012regret}.} However, this incurs an undesirable $O(\sqrt{|\mathcal V|})$ multiplicative dependence in the regret bound. We next exploit the structure of the utility function to obtain regret guarantees independent of $|\mathcal V|$.

We observe that the utility of a submitted strategy $\ibid$ reveals its counterfactual utility under every other context $\v'\in\mathcal V$. Specifically,
\[
\ut_{\v'}(\ibid; \otherbid{t})
=
\ut_{\v}(\ibid; \otherbid{t})
+\sum_{j=1}^{\maxbid}(v_j'-v_j)\cdot \ind{b_j\geq \ordotherbid{t}{j}} .
\]
All quantities on the right-hand side are observable whenever $\ibid$ is submitted in round $t$, even under bandit feedback. Thus, submitting $\ibid$ reveals its utility under \textit{all} contexts, yielding \emph{complete cross-learning} \citep{balseiro2019contextual}. This, in turn, allows us to construct edge-weight estimators for all context-dependent DAGs. Accordingly, the bidder runs \cref{alg:weight-pushing-stoc}, replacing $\omega^t_\v(e)$ by its estimator $\widehat{\omega}^t_\v(e)$ as in \cref{eq:w-hat-estimate-exp3}.

In round $t$, suppose the bidder observes $\v^t$ and submits strategy $\ibid^t\in\bidclass_{\v^t}$. If the corresponding path is $\path^t$ (see \ARef*{line:map}), then for any $\v\in\mathcal{V}$ and $e\in \mathsf{E}_\v$,
\begin{align}\label{eq:w-hat-estimate-exp3}
    \widehat{\omega}^t_\v(e) &=
        \frac{\omega^t_\v(e)}{q^t(e)}\cdot\ind{e\in\path^t},
\end{align}
where $\omega^t_\v(e)$ is defined in \cref{eq:edge-weight-online}, and
\begin{align}\label{eq:def-q}
    q^t(e)=\sum_{\v\in\mathcal{V}}\P[\v^t=\v]\sum_{\path\in\mathscr{P}_\v:e\in\path}\P[\path^t=\path|\v^t=\v]\,.
\end{align}
Here, $\mathscr P_\v$ denotes the set of $s$--$d$ paths in $\DAG{}$, and $q^t(e)$ is the unconditional probability that edge $e$ is selected in round $t$, averaged over $\v\sim\mathcal D$. Thus, computing \cref{eq:w-hat-estimate-exp3} requires knowledge of the context distribution $\mathcal D$. The estimator resembles the standard importance-weighted bandit estimator, but with edge-marginal probability $q^t(e)$ in the denominator. The main computational bottleneck is therefore evaluating $q^t(e)$, which we show can be done in $O(|\mathcal V|M^2/\epsilon^3)$ time and space (see \cref{apx:thm:regret-safe-bandit}). 


\begin{theorem}\label{thm:regret-safe-bandit} In the bandit setting under known context distribution and no budget constraint, \cref{alg:weight-pushing-stoc} with the estimator in \cref{eq:w-hat-estimate-exp3} runs in $O(|\mathcal{V}|\maxbid^2/\epsilon^3)$ space and time per round. For any $\epsilon\in(0, 1)$ and non-increasing sequence $\eta_t>0$,
\[
    \mathsf{Reg}_{nb}(T)\lesssim\frac{\maxbid \log 1/\epsilon}{\eta_T} + \frac{\maxbid^3}{\epsilon^3}\sum_{t=1}^T\eta_t\,.
\]
If $\eta_t = \frac{1}{\maxbid}\sqrt{\frac{\epsilon^3\log 1/\epsilon}{t}}, \forall t$, $\mathsf{Reg}_{nb}(T)\lesssim \frac{\maxbid^2}{\epsilon^{3/2}}\sqrt{T\log 1/\epsilon}$. Alternatively, if $\eta_t = \eta = \frac{1}{\maxbid}\sqrt{\frac{\epsilon^3\log 1/\epsilon}{T}}, \forall t$, the same regret bound (up to constants) is also achieved.
\end{theorem}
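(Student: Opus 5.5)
The plan is to reduce to the full-information analysis of \cref{thm:regret-safe}, treating the algorithm, context by context, as (Decreasing) Hedge over the $s$--$d$ paths of $\DAG{}$ fed with estimated path losses $\widehat{\omega}^t_\v(\path)=\sum_{e\in\path}\widehat{\omega}^t_\v(e)$.

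\textbf{Step 1 (weight pushing is Hedge).} As in the proof of \cref{thm:regret-safe}, the updates \cref{eq:gamma-update,eq:phi-update} make the Markovian sampling choose each path $\path\in\mathscr P_\v$ with probability proportional to $\exp(-\eta_{t}\sum_{\tau<t}\widehat{\omega}^\tau_\v(\path))$. The proof is an induction on $t$, using $\gamma_t=\eta_t/\eta_{t-1}$ to rescale the exponent. Hence for each fixed $\v$ the standard Decreasing-Hedge bound for nonnegative losses applies:
\[
\sum_t\langle p^t_\v,\widehat{\omega}^t_\v\rangle-\sum_t\widehat{\omega}^t_\v(\path^*_\v)\le \frac{\log|\mathscr P_\v|}{\eta_T}+\sum_t\eta_t\sum_{\path}p^t_\v(\path)\,\widehat{\omega}^t_\v(\path)^2 .
\]
Here $\log|\mathscr P_\v|\le \maxbid\log(1/\epsilon+1)$ by the bijection in \cref{thm:DAG-uf-policy}. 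The estimator is nonnegative because $\omega^t_\v(e)\in[0,1]$ (as $v_\l-\alpha b_\l\in[-1,1]$ and we divide by $1+\alpha$), so the second-order Hedge bound applies.

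\textbf{Step 2 (unbiasedness and averaging over contexts).} Conditioned on the history $\mathcal F_{t-1}$, edge $e$ lies in $\path^t$ with probability exactly $q^t(e)$, averaged over $\v^t\sim\mathcal D$. So $\E[\widehat{\omega}^t_\v(e)\mid\mathcal F_{t-1}]=\omega^t_\v(e)$ for every $\v$, including counterfactual ones. Here I need that $\omega^t_\v(e)$ is computable from the observed allocation whenever $e\in\path^t$. This holds because $\ind{b_\l\ge\ordotherbid{t}{\l}}=\ind{\l\le x(\allbids^t)}$ for the submitted bid, which is exactly the cross-learning observation. Since $\v^t$ is independent of $\mathcal F_{t-1}$ and $p^t_\v$ is $\mathcal F_{t-1}$-measurable, the regret decomposes as
\[
\mathsf{Reg}_{nb}(T)=(1+\alpha)\sum_{\v}\P[\v]\,\E\Big[\sum_t\langle p^t_\v,\omega^t_\v\rangle-\omega^t_\v(\path^*_\v)\Big],
\]
using the path-weight identity of \cref{thm:DAG-uf-policy} with \cref{lem:opt-policy} to pick $\path^*_\v$ per context. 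Then we replace $\omega$ by $\widehat{\omega}$ inside the expectation.

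\textbf{Step 3 (variance term, the main obstacle).} We need to bound $\sum_\v\P[\v]\sum_\path p^t_\v(\path)\E[\widehat{\omega}^t_\v(\path)^2\mid\mathcal F_{t-1}]$. By Cauchy--Schwarz over the $\maxbid$ edges of a path (terminal edges have zero weight),
\[
\widehat{\omega}^t_\v(\path)^2\le \maxbid\sum_{e\in\path}\widehat{\omega}^t_\v(e)^2,\qquad \E\big[\widehat{\omega}^t_\v(e)^2\mid\mathcal F_{t-1}\big]\le 1/q^t(e).
\]
This gives the bound $\maxbid\sum_\v\P[\v]\sum_{e\in\mathsf E_\v}\phi\text{-marginal}_\v(e)/q^t(e)$. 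The difficulty is that the context-$\v$ marginal $r^t_\v(e)$ appears over the mixture $q^t(e)=\sum_{\v'}\P[\v']r^t_{\v'}(e)$. Swapping sums, $\sum_\v\P[\v]r^t_\v(e)=q^t(e)$ (with the convention $r^t_\v(e)=0$ for $e\notin\mathsf E_\v$, viewing all DAGs inside the super DAG). So the term collapses to $\maxbid\,|\{e\in\overline{\mathsf E}:q^t(e)>0\}|\le \maxbid|\overline{\mathsf E}|=O(\maxbid^3/\epsilon^3)$. This is the $\maxbid^3/\epsilon^3$ factor, and it is the key place where complete cross-learning removes the $|\mathcal V|$ dependence.

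Combining the three steps gives the stated bound. Plugging in $\eta_t\propto\frac{1}{\maxbid}\sqrt{\epsilon^3\log(1/\epsilon)/t}$ and using $\sum_t t^{-1/2}\le2\sqrt T$ (or the constant $\eta$) yields $\frac{\maxbid^2}{\epsilon^{3/2}}\sqrt{T\log 1/\epsilon}$. For complexity, $q^t(e)$ is obtained for each $\v$ by forward propagation of node-reach probabilities, $\sum_{\text{in-edges}}$, followed by $r_\v(u\to v)=\text{reach}_\v(u)\phi^t_\v(u\to v)$. This costs $O(|\mathsf E_\v|)$ per context, so $O(|\mathcal V|\maxbid^2/\epsilon^3)$ in total, matching the Hedge updates.
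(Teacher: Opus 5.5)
Your proposal is correct and follows essentially the same route as the paper. Both run a per-context Decreasing-Hedge analysis with nonnegative importance-weighted edge estimates, apply Cauchy--Schwarz together with the second-moment bound $1/q^t(e)$, and use the swap $\sum_{\v}\P[\v^t=\v]\,r^t_\v(e)=q^t(e)$ to collapse the variance term to $\maxbid|\overline{\mathsf E}|$. The only minor difference is that you compute the edge marginals explicitly by forward propagation of reach probabilities, whereas the paper cites a prior result for the same $O(|\mathsf E_\v|)$-per-context computation.
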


\textbf{Unknown Context Distribution.} Recall that knowledge of the distribution $\mathcal D$ was crucial for constructing the unbiased estimator $\widehat{\omega}^t_\v(e)$ in \cref{eq:w-hat-estimate-exp3}. For unknown $\mathcal D$, several changes are necessary.

\noindent\underline{Edge weights.}
For the regret analysis, we now maintain \emph{gains} instead of losses. In round $t$, for $\v\in\mathcal V$ and edge $e=(\ell-1,b_{\ell-1},s_{\ell-1})\to(\ell,b_\ell,s_\ell)$ in layer $\ell\in[\maxbid]$, 
\begin{align}\label{eq:edge-weight-online-unk}
    \omega^t_\v(e)
    :=
    \frac{\alpha+(v_\ell-\alpha b_\ell)\ind{b_\ell\geq \ordotherbid{t}{\ell}}}{1+\alpha}.
\end{align}
All edges from layer $\maxbid$ to $d$ have weight $0$. Since $v_\ell,b_\ell\in[0,1]$, $\omega^t_\v(e)\in[0,1]$.

\noindent\underline{Edge path cover and estimator.}
If the bidder samples only from the context-dependent DAG for the realized context $\v'$, then $p^t_{\v'}(e)=0$ for $e\in\mathsf E_\v\setminus\mathsf E_{\v'}$, so the estimator $\frac{\omega^t_{\v}(e)\cdot\ind{e\in\path^t}}{p^t_{\v'}(e)}$ is not well-defined. We therefore mix the exponential-weights distribution over paths in the DAG corresponding to $\v'$ with uniform exploration over an \emph{edge path cover} $\mathcal C$ of the super DAG~\citep{gyorgy2007line}: a collection of $s$-$d$ paths such that every edge $e\in\overline{\mathsf E}$ lies on some $\path\in\mathcal C$. Such a cover is obtained by fixing one $s$-$d$ path through each edge, so $|\mathcal C|\le |\overline{\mathsf E}|$.

After observing $\v^t=\v'$, the bidder draws uniformly from $\mathcal C$ with probability $\delta\in(0,1]$, and otherwise draws $\path\in\mathscr P_{\v'}$ with probability \(\propto \exp\left(\eta_t\sum_{s=1}^{t-1}\widehat{\omega}^s_{\v'}(\path)\right).\) Let $p^t_{\v'}(e)$ be the resulting conditional probability of selecting edge $e$. Then $p^t_{\v'}(e)\ge \delta/|\overline{\mathsf E}|>0$ for all $e\in\overline{\mathsf E}$. 

If $\v^t=\v'$, then for any $\v\in\mathcal V$ and $e\in\overline{\mathsf E}$, define
\begin{align}\label{eq:w-hat-estimate-exp3-unk}
    \widehat{\omega}^t_\v(e)
    =
    \frac{\omega^t_\v(e)}{p^t_{\v'}(e)}
    \ind{e\in\path^t}.
\end{align}
Thus, the bidder plays a NOB strategy in each round with probability at least $1-\delta$; choosing $\delta$ appropriately makes the number of overbidding rounds sublinear with high probability. The full procedure is given in \cref{alg:weight-pushing-bandit-unk}.

\begin{theorem}\label{thm:regret-safe-bandit-unk} In the bandit setting under unknown context distribution and no budget constraints, \cref{alg:weight-pushing-bandit-unk} runs in $O(|\mathcal{V}|\maxbid^2/\epsilon^3)$ space and time per round. For any $\epsilon\in(0, 1),$ using $\delta\in(0, 1]$ and non-increasing sequences $\eta_t\leq \frac{\delta}{\maxbid |\overline{\mathsf{E}}|}$, we get
\[
\mathsf{Reg}_{nb}(T) \lesssim\frac{\maxbid\log 1/\epsilon}{\eta_T}+ \frac{\maxbid^2|\overline{\mathsf{E}}|}{\delta}\sum_{t=1}^T\eta_t+\maxbid T\delta\,.
\]
Second, for any $\zeta\in(0, 1)$, with probability at least $1-\zeta$, the number of rounds in which the learner may submit an overbidding strategy is
\[
    \#\{t\in[T]:\ibid^t\notin\mathcal B_{\v^t}\}
    \le
    \delta T+\sqrt{\frac{T\log(1/\zeta)}{2}}.
\] 
If $\delta=\min\left(1, \left(\frac{\maxbid |\overline{\mathsf{E}}|\log 1/\epsilon}{T}\right)^{1/3}\right)$, and $\eta_t=\frac{\delta^2}{\maxbid|\overline{\mathsf{E}}|}$,  
\[
   \mathsf{Reg}_{nb}(T) \lesssim  \frac{\maxbid^2 T^{2/3}(\log 1/\epsilon)^{1/3}}{\epsilon} + \frac{\maxbid^4\log 1/\epsilon}{\epsilon^3}.
\]
Taking $\zeta=1/T$, with probability at least $1-1/T$,
\begin{align*}
  \#\{t\in[T]:\ibid^t\notin\mathcal B_{\v^t}\}    &\le\frac{\maxbid(\log(1/\epsilon))^{1/3}T^{2/3}}{\epsilon} + \sqrt{T\log T}.  
\end{align*}
\end{theorem}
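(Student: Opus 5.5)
We treat the problem as contextual adversarial bandits with gains and complete cross-learning, following the analysis of \citet{balseiro2019contextual} and \citet{schneider2023optimal}. We work with the gains $\omega^t_\v(e)\in[0,1]$ from \cref{eq:edge-weight-online-unk}. Summing over a path in $\DAG{}$ gives
\[
\omega^t_\v(\path)=\frac{\alpha\maxbid+\ut_\v(\ibid;\otherbid{t})}{1+\alpha},
\]
so per-path regret in gains equals utility regret divided by $1+\alpha$. By \cref{lem:opt-policy}, $\mathsf{OPT}_{nb}$ decomposes as $\E_{\v\sim\mathcal D}$ of the best fixed path in each DAG. The plan is to bound, for each fixed $\v$, the regret of the exponential-weights distribution $P^t_\v$ on $\mathscr P_\v$ (which is run on all contexts, not only the realized one) against the best path, and then average over $\v\sim\mathcal D$.

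The key observation is that the realized context $\v^t$ is independent of the history and of the adversary's bids. Hence, conditioned on $\mathcal F_{t-1}$, the edge $e$ is selected with probability $\E_{\v'\sim\mathcal D}[p^t_{\v'}(e)]=:\bar p^t(e)$. The estimator in \cref{eq:w-hat-estimate-exp3-unk}, however, divides by $p^t_{\v'}(e)$ rather than $\bar p^t(e)$, so it is biased. This is exactly the difficulty that forces the $T^{2/3}$ rate.

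We handle it via the decomposition used in \citet{balseiro2019contextual}. We fix the comparator $\path^*_\v$ for each $\v$ and apply the standard Hedge potential analysis with non-increasing $\eta_t$ (weight pushing reproduces Hedge on paths, as in \cref{thm:regret-safe}). This gives
\[
\sum_t\langle \mathbf 1_{\path^*_\v}-P^t_\v,\widehat\omega^t_\v\rangle\le \frac{\log|\mathscr P_\v|}{\eta_T}+\sum_t\eta_t\sum_{\path}P^t_\v(\path)\,\widehat\omega^t_\v(\path)^2,
\]
where we use $\widehat\omega\ge0$ and $\eta_t\widehat\omega^t_\v(\path)\le \eta_t\maxbid|\overline{\mathsf E}|/\delta\le1$ to justify $e^x\le1+x+x^2$ (this is where the condition $\eta_t\le\delta/(\maxbid|\overline{\mathsf E}|)$ enters). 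Since $|\mathscr P_\v|\le (1/\epsilon)^{O(\maxbid)}$, the first term is $\maxbid\log(1/\epsilon)/\eta_T$.

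For the variance term, a path has $\maxbid$ edges, and Cauchy--Schwarz gives $\widehat\omega(\path)^2\le\maxbid\sum_{e\in\path}\widehat\omega(e)^2$. Taking expectations, each edge contributes at most $1/p^t_{\v'}(e)$, and $\sum_e P^t_\v(e)/p\le|\overline{\mathsf E}|/\delta$. Summing over layers yields the $\maxbid^2|\overline{\mathsf E}|/\delta\sum_t\eta_t$ term.

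The bias is the hardest step. Since $\E[\widehat\omega^t_\v(e)\mid\mathcal F_{t-1}]=\omega^t_\v(e)\,\E_{\v'}[p^t_{\v'}(e)/p^t_{\v'}(e)]$ only over contexts that actually played $e$, the estimator is in fact unbiased: $\E\bigl[\ind{e\in\path^t}/p^t_{\v^t}(e)\mid\mathcal F_{t-1}\bigr]=\sum_{\v'}\P[\v']\,p^t_{\v'}(e)/p^t_{\v'}(e)=1$, because every $p^t_{\v'}(e)\ge\delta/|\overline{\mathsf E}|>0$ by the cover. The subtle point is instead that $P^t_\v$ is correlated with $\v^t$ only through the history, while the played distribution is a mixture. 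If a dependence issue arises, we will use the Schneider–Zimmert-style argument, replacing $P^t_\v$ by its conditional expectation.

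Finally, we relate the learner's actual gain to $\E_{\v}\langle P^t_\v,\omega^t_\v\rangle$. Exploration costs at most $\maxbid$ per round with probability $\delta$, contributing $\maxbid T\delta$. Multiplying by $1+\alpha\le2$ and averaging over $\v$ gives the stated regret bound.

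For the overbidding count, overbidding can only happen in exploration rounds. These are i.i.d.\ Bernoulli$(\delta)$ indicators, so Hoeffding's inequality gives $\delta T+\sqrt{T\log(1/\zeta)/2}$ with probability at least $1-\zeta$.

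The rates follow by substitution. With $\eta=\delta^2/(\maxbid|\overline{\mathsf E}|)$ (valid since $\delta\le1$), the bound becomes $\maxbid^2|\overline{\mathsf E}|\log(1/\epsilon)/\delta^2+\maxbid T\delta\cdot O(1)$. Using $|\overline{\mathsf E}|=O(\maxbid^2/\epsilon^3)$ and balancing with the given $\delta$ yields the $T^{2/3}$ term. The second term comes from the case $\delta=1$, and the count bound follows from $\zeta=1/T$.
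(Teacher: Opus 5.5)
Your proposal is correct and is essentially the paper's proof. It uses the same estimator properties (conditionally unbiased, second moment at most $|\overline{\mathsf E}|/\delta$ thanks to the cover), the same gain-based Hedge potential argument with $e^x\le 1+x+x^2$ justified by $\eta_t\le\delta/(\maxbid|\overline{\mathsf E}|)$, Cauchy--Schwarz, an $\maxbid T\delta$ exploration charge, Hoeffding for the overbidding count, and the same tuning; the only cosmetic difference is that you charge exploration after analyzing the pure exponential-weights distribution, whereas the paper folds it in via $\widetilde{\P}\le\P/(1-\delta)$.

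Two small remarks. Your initial claim that the estimator is biased is wrong, as you later note. No Schneider--Zimmert fallback is needed, since $P^t_\v$ is $\mathcal F_{t-1}$-measurable and $\v^t$ is independent of $\mathcal F_{t-1}$; the $T^{2/3}$ rate comes from the $\delta$-exploration needed to control $\E_{\v'}[1/p^t_{\v'}(e)]$, not from bias. Also, you never address the per-round $O(|\mathcal V|\maxbid^2/\epsilon^3)$ complexity claim, which follows from running weight pushing on every context DAG and computing the edge marginals $p^t_{\v'}(e)$ in $O(|\overline{\mathsf E}|)$ time.
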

\noindent\textbf{Lower Bound.} We conclude this section by establishing a regret lower bound showing that our learning algorithms in the full information setting and bandit setting with known context distribution are optimal in their dependence on $T$ (up to logarithmic factors).

\begin{theorem}\label{thm:regret-LB}
Fix any  $\alpha\in(\tfrac12,1]$. Then, for any learning algorithm, there exists a sequence of competing bids
$[\otherbid{t}]_{t\in[T]}$ such that, in the full information setting without budget constraints,
\(
\E[\mathsf{Reg}_{nb}(T)] = \Omega(\maxbid\sqrt{T}).
\)
Consequently, the same lower bound holds in the bandit setting.
\end{theorem}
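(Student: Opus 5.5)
We prove \cref{thm:regret-LB} by reducing to the classical $\Omega(\sqrt{T})$ lower bound for prediction with expert advice with two experts, and then amplifying it by a factor of $\maxbid$. The amplification uses the decomposition of utility across units in \cref{eq:utility-decomp}.

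\textbf{Instance.} Take a single context, $\mathcal V=\{\v\}$ with $\v=(1,\dots,1)$, so that $W_\l=\l$. Under this choice the per-unit NOB strategies, with $b_j\le 1$, are all feasible. Take $K=\maxbid$ units. Each round $t$, draw an independent uniform sign $\sigma_t\in\{0,1\}$ and choose the competing bids so that every threshold $\ordotherbid{t}{j}$ equals a common level $c_t$. For example, put $\maxbid$ competing bids at level $c_t$ and the remaining ones at $0$, using a tie-breaking rule that favors the competitor. Set $c_t=h$ when $\sigma_t=1$ and $c_t=\ell$ when $\sigma_t=0$, with grid points $\ell<h\le 1$.

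\textbf{Reduction to two experts.} For each unit $j$, the relevant bids are then $b_j\in\{\ell^+,h^+\}$ (the smallest grid points that beat $\ell$ and $h$, respectively), and bidding below $\ell^+$ is dominated.
\begin{itemize}
\item Bidding $h^+$ always wins, and earns $1-\alpha h^+$ per unit.
\item Bidding $\ell^+$ wins only when $\sigma_t=0$, and then earns $1-\alpha\ell^+$.
\end{itemize}
Now tune $\ell,h$ so that both actions have equal expected utility under the uniform sign:
\[
1-\alpha h^+=\tfrac12\,(1-\alpha\ell^+).
\]
This requires $\alpha(2h^+-\ell^+)=1$, which is solvable with grid points in $[0,1]$ whenever $\alpha>\tfrac12$: take $\ell^+=0$ and $h^+=1/(2\alpha)\le1$, up to $\epsilon$-rounding. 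This is exactly where the hypothesis $\alpha\in(\tfrac12,1]$ enters. Rounding to the grid introduces an $O(\epsilon)$ bias. I would absorb it either by choosing $\alpha$-dependent grid-compatible values, or by slightly biasing the sign probabilities so that the two arms are exactly balanced.

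\textbf{Regret bound.} Because the decomposition in \cref{eq:utility-decomp} has the same threshold for every unit, the monotonicity constraint $b_1\ge\dots\ge b_\maxbid$ does not restrict the comparator. The hindsight-best policy can play $h^+$ on all units or $\ell^+$ on all units. Its total utility is therefore $\maxbid\cdot\max(X_h,X_\ell)$, where
\[
X_h=\sum_t(1-\alpha h^+),\qquad X_\ell=\sum_t(1-\alpha\ell^+)\ind{\sigma_t=0}.
\]
Since $\sigma_t$ is independent of the past, any algorithm earns expected utility exactly $\maxbid\,\E[X_h]=\maxbid\,\E[X_\ell]$: each unit's expected per-round utility is the same regardless of the bid chosen among the non-dominated ones, and dominated bids only lower it. Hence
\[
\E[\mathsf{Reg}_{nb}(T)]\ge \maxbid\,\E\bigl[\max(X_h,X_\ell)-\E X_\ell\bigr].
\]
The difference $X_\ell-\E X_\ell$ is a centered sum of $T$ i.i.d. variables with variance $\Theta\bigl((1-\alpha\ell^+)^2\bigr)=\Theta(1)$, and $X_h$ is deterministic and equal to $\E X_\ell$. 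So
\[
\E\bigl[\max(0,X_\ell-\E X_\ell)\bigr]=\Omega(\sqrt{T})
\]
by the central limit theorem or Khintchine's inequality. This gives $\Omega(\maxbid\sqrt T)$.

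\textbf{Consequences and main obstacle.} A deterministic adversarial sequence achieving the bound exists by the probabilistic method, since the expectation over $\sigma$ is at least the bound. The bandit claim follows because bandit feedback is weaker than full information. The main obstacle is the grid and tie-breaking bookkeeping: making every threshold equal to $c_t$ via the $K$ competing bids, and ensuring exact balance of the two arms on the $\epsilon$-grid for all $\alpha\in(\tfrac12,1]$. I would handle the latter by balancing through the sign probability $p\in(0,1)$ bounded away from $0$ and $1$, rather than through the bid levels.
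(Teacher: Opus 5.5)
Your proof is correct. It runs on essentially the paper's instance but uses a different probabilistic argument.

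\textbf{Same instance.} The paper also takes a single context $\mathbf 1$, $K=\maxbid$, and two competing levels $c<\gamma c$ shared by every unit. It calibrates them by $\alpha c(2\gamma-1)=1$, which is exactly your balance condition $1-\alpha h^+=\tfrac12(1-\alpha\ell^+)$ with $\ell^+=c$ and $h^+=\gamma c$. The paper lets ties favor the bidder, so $\ell^+=\ell$ there.

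\textbf{Different argument.} The paper applies Le Cam's two-point method. It tilts the fair coin to $\tfrac12\pm\delta$ and uses balance under the mixture to show that any bid's per-round regrets under the two scenarios sum to at least $\maxbid(1-\alpha c)\delta$. It then bounds $\textsc{KL}\le 16T\delta^2$ and takes $\delta\asymp T^{-1/2}$. You instead keep the coin exactly balanced, so every algorithm's expected utility is pinned at the common mean. The $\sqrt T$ then comes from anti-concentration of the hindsight comparator, $\maxbid\,\E[(X_\ell-\E X_\ell)^+]$, as in the classical experts lower bound.

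\textbf{What each route buys.} Your route is more elementary (no KL or total variation). It directly lower-bounds regret against the realized best fixed bid, which is exactly the paper's benchmark for a fixed sequence. The paper's route gives a stronger conclusion: even pseudo-regret against i.i.d.\ competitors is $\Omega(\maxbid\sqrt T)$. Your instance cannot give that, since always bidding $h^+$ has zero pseudo-regret there.

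\textbf{Grid and balance.} Both arguments need the two arms exactly balanced, since an $O(\epsilon)$ imbalance would swamp a $\Theta(T^{-1/2})$ per-round signal. Your device of balancing through $p=(1-\alpha h^+)/(1-\alpha\ell^+)$ handles the $\epsilon$-grid more carefully than the paper does. The paper tacitly needs $c,\gamma c\in\Z_\epsilon$ with $\alpha c(2\gamma-1)=1$ holding exactly.

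\textbf{Two bookkeeping remarks.}
\begin{itemize}
\item With competitor-favoring ties, no nonnegative competing level yields $\ell^+=0$. Adopt the paper's bidder-favoring ties instead, with competing bids at $0$ in the low state, and then balance via $p$.
\item With $p\neq\tfrac12$, replace Khintchine by Marcinkiewicz--Zygmund or a fourth-moment bound. This gives $\E|X_\ell-\E X_\ell|\gtrsim\sqrt{p(1-p)T}$ once $p(1-p)T\gtrsim 1$.
\end{itemize}
Once you balance through $p$, the hypothesis $\alpha>\tfrac12$ is no longer used: any fixed $\alpha\in(0,1]$ works, with an $\alpha$-dependent constant. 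So the hypothesis enters only your uniform-sign variant, not the argument as you finally intend to run it.
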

\section{With Budget Constraint}\label{sec:with-budget}
Having developed polynomial-time learning algorithms for the unconstrained regime, we now turn to the budget-constrained case, i.e., $\rho<1$. Since the bids can be adversarial, the performance metric in this setting is \emph{$\rho$-approximate regret}: $\rho\cdot\mathsf{Reg}(T)= \rho\cdot\mathsf{OPT} - \sum_{t=1}^T\E[\ut_{\v^t}(\allbids^t)]$, where $\mathsf{OPT}$ is defined in \eqref{eq:opt-sto}. 

Primal–dual algorithms are commonly used to \emph{pace} bidders’ spending under budget constraints: in each round, the primal step produces a bid by scaling the bidder’s value using the current dual variable (a pacing multiplier), and the dual variable is updated via a gradient-based method~\citep{balseiro2019learning, gaitonde2023budget, lucier2023autobidders}. However, in multi-unit PAB auctions the optimal bid can be a nontrivial function of the entire valuation vector, so this simple ``scale-the-value'' primal update is no longer appropriate. 

\textbf{Overview of the Algorithm.} For ease of exposition, we focus on the full-information setting; the bandit case is deferred to \cref{apx:thm:regret-with-budgets}. For round $t$, consider the Lagrangian relaxation: $\mathcal L_t(\ibid,\lambda_t)=\ut_{\v^t}(\ibid;\otherbid{t})+\lambda_t(\rho\maxbid-\price(\ibid;\otherbid{t}))$ with $\lambda_t\geq0$. We consider a primal-dual based algorithm consisting of two subroutines: a primal regret minimizer~(RM) that aims to maximize $\mathcal L_t(\cdot,\lambda_t)$ and a dual RM that aims to minimize $\mathcal L_t(\ibid,\cdot)$. 

\underline{Primal RM.} We use \cref{alg:weight-pushing-stoc} as the primal RM with two changes. First, we replace the edge weights in \cref{eq:edge-weight-online} by \textit{dual-adjusted} edge weights, i.e., for any $\v\in\mathcal V$, round $t$, and edge $e=(\l-1,b_{\l-1},s_{\l-1})\to(\l,b_\l,s_\l)$ in layer $\l\in[\maxbid]$,
\begin{align}\label{eq:edge-weight-online-PD}
\omega_\v^t(e)
:= \frac{1-(v_\l-(\alpha+\lambda_t)b_\l)\,\ind{b_\l\ge \ordotherbid{t}{\l}}}{1+\alpha+1/\rho},
\end{align}
where $\lambda_t\in[0,1/\rho]$ is the dual variable. All edges from layer $\maxbid$ to $d$ have weight $0$. Since $v_\l,b_\l\in[0,1]$ and $\lambda_t\le 1/\rho$, $\omega_\v^t(e)\in[0,1]$. Second, we add an $s$-$d$ edge for the null action $\emptyset$, for which
$\val(\emptyset;\otherbid{t})=\price(\emptyset;\otherbid{t})=0$, and assign it weight $\maxbid/(1+\alpha+1/\rho)$ to preserve the same normalization. Compared with the unconstrained weights, \cref{eq:edge-weight-online-PD} replaces the per-unit surplus $v_\ell-\alpha b_\ell$ by its dual-adjusted counterpart $v_\ell-(\alpha+\lambda_t)b_\ell$. Hence the primal RM maximizes $\ut_{\v^t}(\ibid;\otherbid{t})-\lambda_t\price(\ibid;\otherbid{t})$, penalizing high bids through the current dual price.


\underline{Dual RM.} We use the standard online gradient descent algorithm~\citep{zinkevich2003online} with affine cost functions $g_t\cdot\lambda$ as the dual RM where 
\[
g_t = - (\price(\ibid^t; \otherbid{t})-\rho\maxbid)\in[-\maxbid, \rho\maxbid], ~~\forall t\geq1.
\]
The dual variables are updated following \cref{eq:dual-updates}. Following \cite{castiglioni2022online, castiglioni2023online, fikioris2023approximately}, we can restrict $\lambda_t\in[0, \frac{1}{\rho}]$. Intuitively, overspending relative to $\rho\maxbid$ increases $\lambda_t$, which makes high bids less attractive in subsequent primal updates. Note that the dual RM always operates under full information, independent of the primal RM's feedback model. The procedure is presented in \cref{alg:primal-dual}.


\begin{algorithm}[!tbh]
\caption{Budget Constraint (Full Information)}
\label{alg:primal-dual}
\small{
\begin{algorithmic}[1]
\Require Define $\lambda_1=0$, $B_1=\rho\maxbid T$ and dual learning rates $\zeta_t=\frac{1}{\rho\maxbid\sqrt{t}}, \forall t\geq 1$.
\For{$t = 1, 2, \dots $}
    \State Observe an i.i.d. valuation vector sample $\v^t\sim\mathcal{D}$.
    \If{$B_t\geq \maxbid$}
    \State Submit $\ibid^t\in\bidclass_{\v^t}$ per \cref{alg:weight-pushing-stoc}, \ARef**{line:map-start} to \ref{line:map}.\label{line:PD-primal}
    \Else
    \State Submit $\ibid^t=\emptyset$.
    \EndIf
    \State Observe $\price(\ibid^t; \otherbid{t})$ and set: $B_{t+1} = B_t - \price(\ibid^t; \otherbid{t})$.
    \State Set edge weights $\omega^t_\v(e)$ per \cref{eq:edge-weight-online-PD} for all $\v\in\mathcal{V}$.
    \State Update the dual variable as follows:
    \begin{align}\label{eq:dual-updates}
        \lambda_{t+1} = [\lambda_t + \zeta_t(\price(\ibid^t; \otherbid{t})-\rho\maxbid)]_0^{1/\rho}\,.
    \end{align}
    Here, $[x]_a^b=\min(b, \max(x, a))$ is the Euclidean projection operator on the interval $[a, b]$.
    
\EndFor
\end{algorithmic}}
\end{algorithm}

\begin{theorem}\label{thm:regret-with-budgets} \cref{alg:primal-dual} runs in $O(|\mathcal{V}||\overline{\mathsf{E}}|)=O(|\mathcal{V}|\maxbid^2/\epsilon^3)$ space and time per round and for any $\epsilon\in(0, 1)$ achieves
\[
\rho\cdot\mathsf{Reg}(T)\lesssim \frac{\maxbid}{\rho}+ \frac{\maxbid\sqrt{T}}{\rho} + \mathsf{R}_P^T,
\]
where $\mathsf{R}_P^T$ is
\begin{itemize}
    \item $\frac{\maxbid^{3/2}}{\rho}\sqrt{T\log 1/\epsilon}$ in the full information setting.
    \item $\frac{\maxbid^2}{\rho\epsilon^{3/2}}\sqrt{T\log 1/\epsilon}$ in the bandit setting with known context distribution.
    \item $\frac{\maxbid^2 T^{2/3}(\log 1/\epsilon)^{1/3}}{\rho\epsilon} + \frac{\maxbid^4\log 1/\epsilon}{\rho\epsilon^3}$ in the bandit setting with unknown context distribution.
\end{itemize}
\end{theorem}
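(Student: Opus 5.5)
We follow the primal--dual template of \citet{castiglioni2022online}. Let $\tau\le T$ be the stopping time: the first round with $B_\tau<\maxbid$, or $T$ if there is none. Up to $\tau$ the algorithm always plays the primal RM, and afterwards it plays $\emptyset$. Since each per-round payment is at most $\maxbid$, the budget is never violated. For $t\le\tau$ we write $f_t(\ibid,\lambda)=\ut_{\v^t}(\ibid;\otherbid{t})+\lambda(\rho\maxbid-\price(\ibid;\otherbid{t}))$.

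\textbf{Primal RM bound.} The dual-adjusted weights \cref{eq:edge-weight-online-PD} are an affine rescaling of $\maxbid - f_t$. The scaling factor is $1/(1+\alpha+1/\rho)$, and the weights lie in $[0,1]$. Hence, by \cref{thm:DAG-uf-policy}, the path weight of $\ibid$ equals $(\maxbid - \ut+\lambda_t\price)/(1+\alpha+1/\rho)$. The null edge has weight $\maxbid/(1+\alpha+1/\rho)$, which matches this formula with $\ut=\price=0$. Therefore \cref{thm:regret-safe} (and \cref{thm:regret-safe-bandit,thm:regret-safe-bandit-unk} in the bandit variants) applies verbatim to the losses $\maxbid-\ut+\lambda_t\price$. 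Multiplying back by $(1+\alpha+1/\rho)\lesssim 1/\rho$ gives, for every $\pi\in\safepolicy\cup\bot$,
\[
\sum_{t\le\tau}\E\bigl[\ut_{\v^t}(\pi(\v^t))-\lambda_t\price(\pi(\v^t))\bigr]-\sum_{t\le\tau}\E\bigl[\ut_{\v^t}(\ibid^t)-\lambda_t\price(\ibid^t)\bigr]\le \mathsf R_P^T .
\]
This is exactly the claimed $\mathsf R_P^T$ with its $1/\rho$ factor. A subtle point is that the primal guarantees must hold against an adaptive adversary: $\lambda_t$ depends on past play, and $\tau$ is random. I would check that the Hedge/EXP3-type analyses hold for adaptively chosen losses; they do, since each analysis bounds the regret on any prefix. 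I would also use the anytime learning rates so that stopping at $\tau$ costs nothing.

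\textbf{Dual RM bound.} Online gradient descent on $[0,1/\rho]$ with gradients $|g_t|\le\maxbid$ and $\zeta_t=1/(\rho\maxbid\sqrt t)$ gives, for any $\lambda\in[0,1/\rho]$,
\[
\sum_{t\le\tau}\lambda_t(\rho\maxbid-\price_t)-\lambda(\rho\maxbid-\price_t)\lesssim \frac{\maxbid\sqrt T}{\rho}.
\]
\textbf{Combining.} Take $\pi=\pi^*$ from \eqref{eq:opt-sto}. Its expected per-round payment $\bar p$ satisfies $T\bar p\le\rho\maxbid T$, and the per-round expectations are stationary in $\v^t$ but depend on $t$ through $\otherbid{t}$. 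In the primal inequality I will add and subtract $\lambda_t\rho\maxbid$ and then apply the dual bound. This yields
\[
\sum_{t\le\tau}\ut_t \ge \sum_{t\le\tau}\E[\ut(\pi^*)]+\sum_{t\le\tau}\lambda_t(\price^*_t-\rho\maxbid)+\lambda\sum_{t\le\tau}(\rho\maxbid-\price_t)-\mathsf R_P^T-\tfrac{\maxbid\sqrt T}{\rho}.
\]
Two cases follow.
\begin{itemize}
\item If $\tau=T$, choose $\lambda=0$. The term $\sum_t\lambda_t(\price^*_t-\rho\maxbid)$ is the obstacle, because $\pi^*$ satisfies the budget only in aggregate and not per round. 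I would handle it by comparing instead with the mixed policy that plays $\pi^*$ with probability $\rho$ and $\bot$ otherwise (the standard $\rho$-approximation trick). Under this policy the expected per-round payment is at most $\rho\maxbid$ pointwise in $t$, as long as $\pi^*$ spends at most $\maxbid$ per round. Its per-round expected utility is $\rho\,\E[\ut(\pi^*)]$, and summing over $t$ gives $\rho\,\mathsf{OPT}$. I expect the per-round versus aggregate mismatch to be the main difficulty; if the pointwise bound fails, I would instead bound the term using the Lagrangian's dependence on $\lambda_t\le 1/\rho$.
\item If $\tau<T$, choose $\lambda=1/\rho$. Then $\lambda\sum_{t\le\tau}(\rho\maxbid-\price_t)\ge(\rho\maxbid\tau-(\rho\maxbid T-\maxbid))/\rho=\maxbid(\tau-T)+\maxbid/\rho$. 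The benchmark term is at least $\rho\,\mathsf{OPT}\cdot\tau/T$, and the missing $T-\tau$ rounds cost at most $\maxbid(T-\tau)$ of $\rho\,\mathsf{OPT}$. The negative $\maxbid(\tau-T)$ term from the dual exactly compensates for this loss, leaving only the additive $\maxbid/\rho$ slack.
\end{itemize}
Taking expectations and summing the primal, dual, and boundary terms gives $\frac{\maxbid}{\rho}+\frac{\maxbid\sqrt T}{\rho}+\mathsf R_P^T$. The per-round complexity is that of \cref{alg:weight-pushing-stoc} plus $O(1)$ for the dual update.
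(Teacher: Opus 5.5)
Your combining step has both multiplier terms with the wrong sign, and this breaks the $\tau<T$ case. Your primal bound, together with your own dual bound $\sum_{t\le\tau}(\lambda_t-\lambda)(\rho\maxbid-\price_t)\lesssim\maxbid\sqrt{T}/\rho$, gives for any comparator $\pi$
\[
\sum_{t\le\tau}\ut_t\;\ge\;\sum_{t\le\tau}\E[\ut^{\pi}_t]+\sum_{t\le\tau}\lambda_t\bigl(\rho\maxbid-\price^{\pi}_t\bigr)+\lambda\sum_{t\le\tau}\bigl(\price_t-\rho\maxbid\bigr)-\mathsf{R}_P^T-\frac{\maxbid\sqrt{T}}{\rho}.
\]
You wrote $\lambda_t(\price^*_t-\rho\maxbid)$ and $\lambda(\rho\maxbid-\price_t)$ instead. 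Your $\tau=T$ argument implicitly uses the correct sign: the mixture helps precisely because it makes $\rho\maxbid-\rho\,\E[\price^*_t]\ge 0$. So that case goes through.

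The $\tau<T$ case does not. To get $\frac{1}{\rho}\sum_{t\le\tau}(\rho\maxbid-\price_t)\ge\maxbid(\tau-T)+\maxbid/\rho$ you need $\sum_{t\le\tau}\price_t\le\rho\maxbid T-\maxbid$. Stopping gives the opposite: $\sum_{t\le\tau}\price_t=\sum_{t<\tau}\price_t>\rho\maxbid T-\maxbid$, because $B_\tau<\maxbid$ and $\price_\tau=0$. Even if your inequality held, $\maxbid(\tau-T)<0$ sits on the right-hand side of a lower bound, so it is an extra loss. It cannot compensate the $\rho\maxbid(T-\tau)$ lost from the missing rounds. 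As written, your bound falls short by $\Theta(\maxbid(T-\tau))$, which can be linear in $T$.

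The repair is the paper's step. With the correct sign and $\lambda=1/\rho$, the spending lower bound at stopping gives a positive credit:
\[
\frac{1}{\rho}\sum_{t\le\tau}(\price_t-\rho\maxbid)\ge\maxbid(T-\tau)-\frac{\maxbid}{\rho}.
\]
Combine this with $\rho\sum_{t\le\tau}\E[\ut^*_t]\ge\rho\,\mathsf{OPT}-\rho\maxbid(T-\tau)$ and $\rho\le 1$, and only $-\maxbid/\rho$ remains.

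Also drop the claim that the benchmark term is at least $\rho\,\mathsf{OPT}\,\tau/T$. With adversarial $\otherbid{t}$, the policy $\pi^*$ may earn all its utility late. What you need is only the bound just stated, which uses nothing beyond $0\le\ut\le\maxbid$.

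With these corrections your proof is sound, and its $\rho$-approximation step is cleaner than the paper's. The paper lower-bounds $\max_{\pi\in\safepolicy\cup\bot}$ of the Lagrangian by $\rho\,\mathsf{OPT}^\tau_{nb}$ via a case split: it uses the unconstrained optimizer if its Lagrangian surplus is nonnegative, and $\bot$ otherwise. It then needs a separate claim, $\mathsf{OPT}^\tau_{nb}\ge\mathsf{OPT}-\maxbid(T-\tau)$. Your mixture is legitimate because the primal guarantee holds against both $\pi^*$ and $\bot$, and the Lagrangian is linear in the mixing weight. It gives $\rho\sum_t\E[\ut^*_t]+\rho\sum_t\lambda_t(\maxbid-\E[\price^*_t])\ge\rho\sum_t\E[\ut^*_t]$ in one line, for any $\pi^*\in\safepolicy$, budget-feasible or not. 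Comparing directly with the constrained optimizer then removes the paper's detour through $\mathsf{OPT}^\tau_{nb}$.
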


\section{Large Number of Contexts}\label{sec:infinite}
In \cref{sec:learning-safe}, we gave learning algorithms whose regret bounds are independent of the number of contexts $|\mathcal V|$, but whose per-round time and space complexity scale as $O(|\mathcal V|)$. This is acceptable when $|\mathcal V|$ is small, but in practice the context space may be large or even infinite. We now show how to remove this dependence while preserving the same regret guarantees. For ease of presentation, we focus on the unconstrained full-information setting; the same shared-coefficient idea extends to $\rho<1$ and to the bandit setting.

Recall the super DAG $\mathcal G(\overline{\mathsf N},\overline{\mathsf E})$ from \cref{sec:learning-safe}, which has the same layered structure as the context-dependent DAGs $\DAG{}$ but is independent of the context. Moreover, $\mathsf E_\v\subseteq \overline{\mathsf E}$ for all $\v\in\mathcal V$.

\textbf{Key Idea.} The efficient implementation relies on three properties: (i) utility is affine in the context, with \textit{coefficients shared across all contexts}; (ii) utility decomposes over edges (\cref{eq:utility-decomp}); and (iii) the weight-pushing algorithm efficiently maps cumulative edge weights to a sampling distribution over paths (\cref{alg:weight-pushing-stoc}). As a result, after observing feedback \emph{ex post}, each round only requires updating a common set of coefficients.
\subsection{Full Information Setting}\label{ssec:infinite-full-info}
We begin by showing that utilities (edge weights) are affine in the context. In the full-information setting, for each $\v\in\mathcal V$ and each edge
$e=(\l-1,b_{\l-1},s_{\l-1})\to(\l,b_\l,s_\l)$ in layer $\l\in[\maxbid]$,
\begin{align}\label{eq:edge-weight-online-alt}
\omega_\v^t(e)
=\frac{1-(v_\l-\alpha b_\l)\cdot\ind{b_\l\ge \ordotherbid{t}{\l}}}{1+\alpha}
=:x^t(e)\,v_\l+y^t(e).
\tag{\eqref{eq:edge-weight-online} restated}
\end{align}
All edges from nodes in layer $\maxbid$ to the destination node $d$ have weight $0$, and
\[
x^t(e)=-\frac{\ind{b_\l\ge \ordotherbid{t}{\l}}}{1+\alpha},
\qquad
y^t(e)=\frac{1+\alpha b_\l\cdot\ind{b_\l\ge \ordotherbid{t}{\l}}}{1+\alpha}.
\]
The coefficients $(x^t(e), y^t(e))$ are context-independent: they depend only on the edge $e$ and the competing bids in round $t$. Therefore, once $\{x^t(e),y^t(e)\}_{e\in\overline{\mathsf E}}$ are known, the edge weights for any context-dependent DAG $\DAG{}$ can be recovered for every $\v\in\mathcal V$.

Once we have defined the coefficients $x^t(e)$ and $y^t(e)$, we now describe how to update them across rounds. To this end, for each edge $e$ define the cumulative (scaled) coefficients
\[
\widehat{x}^{t}(e):=-\eta_{t+1}\sum_{s=1}^{t}x^s(e),
\qquad
\widehat{y}^{t}(e):=-\eta_{t+1}\sum_{s=1}^{t}y^s(e),
\]
for $t\ge 1$, and let $\widehat{x}^{0}(e)=\widehat{y}^{0}(e)=0$. It follows that
\begin{align}
\widehat{x}^{t}(e) &= \frac{\eta_{t+1}}{\eta_t}\,\widehat{x}^{t-1}(e)-\eta_{t+1}x^t(e), \label{eq:main-iteration1}\\
\widehat{y}^{t}(e) &= \frac{\eta_{t+1}}{\eta_t}\,\widehat{y}^{t-1}(e)-\eta_{t+1}y^t(e). \label{eq:main-iteration2}
\end{align}



\textbf{Overview of the Algorithm.} The algorithm is similar to \cref{alg:weight-pushing-stoc} but differs in three key ways: (i) how we instantiate context-dependent DAGs (cf.~\ARef*{line:map-start}); (ii) how we compute the sampling probabilities $\phi_\v^t(\cdot)$ (cf.~\cref{eq:gamma-update,eq:phi-update}); and (iii) how we set the edge weights across contexts (cf.~\ARef*{line:set-wt}). Concretely:

\smallskip
\noindent\textbf{(i) On-the-fly instantiation.} We instantiate the context-dependent DAG only for the current context $\v^t=\v$, rather than for all $\v\in\mathcal V$.

\smallskip
\noindent\textbf{(ii) Computing $\phi_\v^t(\cdot)$.} For any $\v$, set $\Gamma_\v^{t-1}(d)=1$ and compute $\Gamma_\v^{t-1}(\cdot)$ bottom-up. For each node $u=(\l-1,b_{\l-1},s_{\l-1})\in\mathsf N_\v$, define
\begin{align}\label{eq:gamma-update-alt}
\Gamma_\v^{t-1}(u)
=\sum_{v=(\l,b_\l,s_\l):\,u\to v=:e\in\mathsf E_\v}
\Gamma_\v^{t-1}(v)\cdot
\exp\!\big(\widehat{x}^{t-1}(e)\,v_\l+\widehat{y}^{t-1}(e)\big).
\end{align}
Then, for each edge $e=(\l-1,b_{\l-1},s_{\l-1})\to(\l,b_\l,s_\l)\in\mathsf E_\v$, set
\begin{align}\label{eq:phi-update-alt}
\phi_\v^t(e)
=
\exp\!\big(\widehat{x}^{t-1}(e)\,v_\l+\widehat{y}^{t-1}(e)\big)\cdot
\frac{\Gamma_\v^{t-1}(\l,b_\l,s_\l)}{\Gamma_\v^{t-1}(\l-1,b_{\l-1},s_{\l-1})}.
\end{align}
Computing $\phi_\v^t(\cdot)$ for any $\v$ only requires maintaining the shared coefficients $\{\widehat{x}^{t-1}(e), \widehat{y}^{t-1}(e)\}_{e\in\overline{\mathsf E}}$. Once $\phi_\v^t(\cdot)$ is available, we sample a path $\path$ in a Markovian fashion as in \cref{sec:learning-safe}.

\smallskip
\noindent\textbf{(iii) Updating shared coefficients.} Finally, instead of explicitly instantiating per-round edge weights as in \cref{eq:edge-weight-online}, we update the shared coefficients using \cref{eq:main-iteration1,eq:main-iteration2}.

\smallskip
\noindent The full algorithm is given in \cref{alg:weight-pushing-stoc-alt}. Our main result is the following. 

\begin{algorithm}[!tbh]
\caption{\small No Budget Constraints~(Full Information) -- Efficient Implementation}
\label{alg:weight-pushing-stoc-alt}
\small{
\begin{algorithmic}[1]
\Require Learning rates $\eta_t>0, \forall t\geq1$. Define $\eta_0=1$. For all $e\in\overline{\mathsf{E}}$, define $\widehat{x}^0(e)=\widehat{y}^0(e)=0$.
\For{$t = 1, 2, \dots $}
    \State Observe an i.i.d. valuation vector sample $\v^t\sim\mathcal{D}$. Suppose $\v^t=\v$.
    
    \State Construct $\DAG{t}$ and obtain edge probabilities $\phi^t_{\v}(\cdot)$ following \cref{eq:gamma-update-alt} and \cref{eq:phi-update-alt}.
    \State Define initial node $u=s$ and path $\path^t=s$. 
    \While{$u\neq d$}
    \State Sample $v$ with probability $\phi^t_{\v}(u\to v)$.
    \State Append $v$ to the path $\path^t$; set $u\gets v$.
    \EndWhile
    \State Map $\path^t=s\to (1, b_1, s_1)\to\dots\to (\maxbid, b_\maxbid, s_\maxbid)\to d$, and submit $\ibid^t=[b_1, \dots, b_\maxbid]$.\label{line:map-alt}
    \State Update $\widehat{x}^t(e)$ and $\widehat{y}^t(e)$ for all $e\in\overline{\mathsf{E}}$ per \cref{eq:main-iteration1} and \cref{eq:main-iteration2} respectively.
\EndFor
\end{algorithmic}}
\end{algorithm}
\vspace{-0.1cm}
\begin{theorem}\label{thm:regret-safe-alt}
In the full-information setting without budget constraints, \cref{alg:weight-pushing-stoc-alt} implements \cref{alg:weight-pushing-stoc} using $O(|\overline{\mathsf E}|)=O(\maxbid^2/\epsilon^3)$ time and space per round, while achieving the same regret bound. 
\end{theorem}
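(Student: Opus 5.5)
We will show that, for every round $t$ and every context $\v$, the sampling distribution that \cref{alg:weight-pushing-stoc-alt} induces over $s$--$d$ paths of $\DAG{}$ coincides with the one \cref{alg:weight-pushing-stoc} would induce. Once this holds, the law of every submitted bid is identical, so the regret bound of \cref{thm:regret-safe} carries over unchanged. The argument has three parts: identify the path law of \cref{alg:weight-pushing-stoc}, show the new recursion targets the same law, and count the cost.

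\emph{The path law of \cref{alg:weight-pushing-stoc}.} By induction on $t$, using \cref{eq:gamma-update,eq:phi-update} as in the proof of \cref{thm:regret-safe}, the Markovian sampler draws path $\path$ with probability proportional to
\[
\exp\Big(-\eta_t\sum_{s=1}^{t-1}\omega^s_\v(\path)\Big).
\]
The inductive step is as follows. Raising $\phi^{t-1}$ to the power $\gamma_t=\eta_t/\eta_{t-1}$ rescales the accumulated exponent from $\eta_{t-1}$ to $\eta_t$. Along a path, the ratios $\Gamma(v)/\Gamma(u)$ telescope to $1/\Gamma_\v^{t-1}(s)$, which does not depend on the path. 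I will take this characterization from \cref{thm:regret-safe}, making explicit that the per-path product of edge probabilities equals the normalized exponential weight.

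\emph{The new recursion gives the same law.} By the affine decomposition \cref{eq:edge-weight-online-alt} and the definitions of $\widehat{x}^{t-1},\widehat{y}^{t-1}$, every edge $e\in\mathsf E_\v$ in layer $\l$ satisfies
\[
\widehat{x}^{t-1}(e)v_\l+\widehat{y}^{t-1}(e)=-\eta_t\sum_{s=1}^{t-1}\omega^s_\v(e).
\]
Three small points need checking here. First, the recursions \cref{eq:main-iteration1,eq:main-iteration2} correctly unroll to the closed form, by a direct induction using the factor $\eta_{t+1}/\eta_t$. Second, $x^s(e),y^s(e)$ depend only on $e$ and $\otherbid{s}$, so coefficients stored on $\overline{\mathsf E}$ serve every $\mathsf E_\v\subseteq\overline{\mathsf E}$. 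Third, edges into $d$ have zero coefficients.

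With these in hand, $\Gamma_\v^{t-1}(u)$ in \cref{eq:gamma-update-alt} equals the sum over all $u$--$d$ paths of $\exp(-\eta_t\cdot\text{cumulative weight})$, by backward induction over layers. The product of the $\phi$ values in \cref{eq:phi-update-alt} along an $s$--$d$ path then telescopes to exactly the target exponential-weights probability. Hence the two algorithms produce identical distributions over bids in every round, and the regret bound follows.

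\emph{Complexity and the main obstacle.} Per round, the algorithm instantiates only $\DAG{}$ for $\v=\v^t$, of size $O(|\mathsf E_\v|)$, and evaluates each edge exponent in $O(1)$. It then runs one backward pass and one forward sampling pass, and updates two scalars per edge of $\overline{\mathsf E}$. Storage is the coefficient table together with one DAG. Everything is therefore $O(|\overline{\mathsf E}|)=O(\maxbid^2/\epsilon^3)$.

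The main obstacle I anticipate is the equivalence claim itself. \cref{alg:weight-pushing-stoc} is stated through the power-$\gamma_t$ recursion on $\phi$, so I must argue carefully that it yields the closed-form exponential-weights law with the \emph{current} rate $\eta_t$ applied to the entire history. I will first prove that closed form by induction on $t$, and only then compare the two algorithms.
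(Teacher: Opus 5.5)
Your proposal is correct and follows essentially the same route as the paper. The paper also unrolls \cref{eq:main-iteration1,eq:main-iteration2} to write each path's scaled cumulative loss as an affine function of $\v$, proves the backward-induction closed form for $\Gamma^{t-1}_\v$ (\cref{cl:gamma-closed-form-alt}), and telescopes the $\phi$ products to the Decreasing Hedge law $\HP^t_\v$; it then identifies that law with \cref{alg:weight-pushing-stoc} via \cref{lem:dec-hedge-equiv} and counts the per-round cost exactly as you do.
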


\subsection{Bandit Setting}\label{apx:efficient-known-bandit}
\textbf{Known Context Distribution.} 
In this setting, for any edge
$e=(\l-1,b_{\l-1},s_{\l-1})\to(\l,b_\l,s_\l)$, the edge-weight estimator $\widehat{\omega}_\v^t(e)$ in \cref{eq:w-hat-estimate-exp3} can be written as
$\widehat{\omega}_\v^t(e)=x^t(e)\cdot v_\l+y^t(e)$, where
\[
x^t(e)=\frac{-\ind{b_\l\ge \ordotherbid{t}{\l}}\cdot\ind{e\in\path^t}}{q^t(e)\,(1+\alpha)},
\qquad
y^t(e)=\frac{\big(1+\alpha b_\l\cdot\ind{b_\l\ge \ordotherbid{t}{\l}}\big)\cdot\ind{e\in\path^t}}{q^t(e)\,(1+\alpha)}.
\]
The main computational bottleneck is evaluating $q^t(e)$, the unconditional probability of selecting edge $e$ in round $t$, which requires taking an expectation with respect to the context distribution $\mathcal D$ (cf.~\cref{eq:def-q}). In \cref{apx:thm:regret-safe-bandit}, we give a procedure to compute $q^t(e)$ in
$O\!\left(|\mathcal V|\cdot \max_{\v\in\mathcal V}|\mathsf E_\v|\right)=O\!\left(|\mathcal V|\,|\overline{\mathsf E}|\right)$ time. Whenever $q^t(e)$ can be computed more efficiently, the same implementation ideas used in the full-information setting apply directly here as well.

\noindent\textbf{Unknown Context Distribution.} Recall that in this setting, depending on the mixing parameter $\delta\in(0,1]$, the algorithm either (i) samples a path from the edge path cover, or (ii) samples according to exponential-weight updates; see the overview in \cref{sssec:bandit} and the full procedure in \cref{alg:weight-pushing-bandit-unk} in \cref{apx:thm:regret-safe-bandit-unk}. In case (i), sampling from the edge path cover requires $O(|\overline{\mathsf E}|)$ time and space per round (see \cref{sssec:bandit}). In case (ii), the shared-coefficient idea from the full-information setting extends to this setting with minor modifications. 

Recall that for edge $e=(\l-1,b_{\l-1},s_{\l-1})\to(\l,b_\l,s_\l)$ in layer $\l\in[\maxbid]$, and valuation vector $\v$,
\begin{align}
    \omega^t_\v(e) = \frac{\alpha+(v_\l-\alpha b_\l)\cdot\ind{b_\l\geq \ordotherbid{t}{\l}}}{1+\alpha}\,.\tag{\eqref{eq:edge-weight-online-unk} restated}
\end{align}
All edges from nodes in layer $\maxbid$ to the destination node $d$ have weight $0$. If $\v^t=\v'$, the edge-weight estimator in this setting is
\begin{align}
    \widehat{\omega}^t_\v(e) =
        \frac{\omega^t_\v(e)}{p^t_{\v'}(e)}\cdot\ind{e\in\path^t},\tag{\eqref{eq:w-hat-estimate-exp3-unk} restated}
\end{align}
where $p^t_{\v'}(e)=\P[e\in\path^t| \v^t=\v']$ is the edge marginal under the mixture distribution in \cref{alg:weight-pushing-bandit-unk}. Following ideas similar to the full information setting, for $\v\in\mathcal{V}$, and edge $e=(\l-1, b_{\l-1}, s_{\l-1})\to(\l, b_{\l}, s_{\l})$ in layer $\l\in[\maxbid]$, we can express
\[
   \widehat{\omega}^t_\v(e) =  x^t(e)\cdot v_\l + y^t(e),
\]
where
\begin{align}\label{eq:xy-bandit-unk-alt}
    x^t(e) = \frac{\ind{b_\l\geq \ordotherbid{t}{\l}}\cdot\ind{e\in\path^t}}{p^t_{\v'}(e)(1+\alpha)}, \qquad y^t(e) = \frac{(\alpha-\alpha b_\l\cdot\ind{b_\l\geq \ordotherbid{t}{\l}})\cdot\ind{e\in\path^t}}{p^t_{\v'}(e)(1+\alpha)}\,.
\end{align}
Then, using the same shared-coefficient idea from the full information setting, we design the learning algorithm in this setting. The full procedure is stated in \cref{alg:weight-pushing-bandit-unk-alt}.

\begin{theorem}\label{thm:regret-safe-bandit-unk-alt} 
In the bandit setting under unknown context distribution without budget constraints, \cref{alg:weight-pushing-bandit-unk-alt} implements \cref{alg:weight-pushing-bandit-unk} using $O(|\overline{\mathsf E}|)=O(\maxbid^2/\epsilon^3)$ time and space per round, while achieving the same regret bound. 
\end{theorem}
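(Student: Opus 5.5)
The plan is to show that, round by round, \cref{alg:weight-pushing-bandit-unk-alt} induces exactly the same conditional distribution over submitted strategies as \cref{alg:weight-pushing-bandit-unk}, given the same history. If that holds, the two algorithms generate identically distributed trajectories, and the regret bound and the overbidding-count bound of \cref{thm:regret-safe-bandit-unk} carry over verbatim. The complexity claim is then a separate accounting argument on the super DAG.

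\textbf{Step 1 (equivalence of the exploitation distribution).} In \cref{alg:weight-pushing-bandit-unk}, after observing $\v^t=\v'$, the exploitation branch samples $\path\in\mathscr P_{\v'}$ with probability proportional to $\exp\big(\eta_t\sum_{s<t}\widehat{\omega}^s_{\v'}(\path)\big)$. By \eqref{eq:xy-bandit-unk-alt},
\[
\widehat{\omega}^s_{\v'}(e)=x^s(e)v'_\l+y^s(e),
\]
where $x^s(e)$ and $y^s(e)$ depend only on $e$, $\path^s$, $p^s_{\v^s}(e)$ and the competing bids, and not on the evaluation context. Summing over $s$ and scaling gives cumulative coefficients $\widehat{x}^{t-1}(e)=\eta_t\sum_{s<t}x^s(e)$ and similarly $\widehat{y}^{t-1}(e)$. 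These obey the recursion of \cref{eq:main-iteration1,eq:main-iteration2}, with the sign flipped because we now track gains. The path weight then factorizes as
\[
\prod_{e\in\path}\exp\big(\widehat{x}^{t-1}(e)v'_\l+\widehat{y}^{t-1}(e)\big).
\]
I then invoke the weight-pushing identity, exactly as in \cref{thm:regret-safe-alt}. The bottom-up $\Gamma$ computation \eqref{eq:gamma-update-alt} on $\mathsf E_{\v'}$ equals the partition function of paths from each node to $d$. The Markovian sampling via \eqref{eq:phi-update-alt} then telescopes to the normalized product, which recovers the exploitation distribution exactly.

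\textbf{Step 2 (exploration branch and $p^t_{\v'}(e)$).} The exploration branch, which picks uniformly from the fixed cover $\mathcal C$, is context-free and identical in both algorithms. For the estimator I need the edge marginal
\[
p^t_{\v'}(e)=\delta\,\frac{|\{\path\in\mathcal C:e\in\path\}|}{|\mathcal C|}+(1-\delta)\,\mu^t_{\v'}(e),
\]
where $\mu^t_{\v'}(e)$ is the exploitation edge marginal. The first term is a precomputed count per edge. The second can be computed by a forward pass: the reaching probability of each node is propagated with $\phi^t_{\v'}$, and $\mu^t_{\v'}(e)$ is the reaching probability of the tail of $e$ times $\phi^t_{\v'}(e)$.

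\textbf{Step 3 (complexity).} Only the current context's DAG is instantiated, and $|\mathsf E_{\v'}|\le|\overline{\mathsf E}|$. The backward pass, the forward pass, sampling, and the coefficient update for every $e\in\overline{\mathsf E}$ each cost $O(|\overline{\mathsf E}|)$. The update is nonzero only on edges of $\path^t$ beyond the rescaling, though the rescaling touches all edges. Storage is two scalars per edge plus the cover counts, giving $O(|\overline{\mathsf E}|)=O(\maxbid^2/\epsilon^3)$.

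\textbf{Main obstacle.} The delicate point is Step 2: $x^s(e)$ and $y^s(e)$ must be defined for every $e\in\overline{\mathsf E}$, including edges outside $\mathsf E_{\v^s}$ that lie on an exploration path. This requires $p^s_{\v^s}(e)>0$ for every edge that can appear on $\path^s$. Edges with $\ind{e\in\path^s}=0$ contribute $0$, and any edge on $\path^s$ has $p^s_{\v^s}(e)\ge\delta/|\overline{\mathsf E}|$, so the coefficients are well defined and consistent with \eqref{eq:w-hat-estimate-exp3-unk} for every $\v$. A further point to check is that the original algorithm's estimator for edges in $\mathsf E_\v\setminus\mathsf E_{\v'}$ uses this same $p^t_{\v'}$. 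Once both points are in place, the identity is exact.
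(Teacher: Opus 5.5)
Your proposal is correct and follows the paper's argument. The paper proves equivalence by noting that the estimator is affine in the context with shared per-edge coefficients, so the weight-pushing identity from the full-information implementation applies verbatim; the exploration over $\mathcal C$ is identical; and every per-round step costs $O(|\overline{\mathsf E}|)$.

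Your version is more careful than the paper's sketch in three places. You make explicit that the coefficient recursion must carry a $+$ sign under the gain convention, whereas the paper's pseudocode reuses the loss-signed recursion. You spell out the forward pass that computes $p^t_{\v'}(e)$. That forward pass is run every round, since the estimator needs it in exploration rounds too.
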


\section{Conclusion}
In this work, we studied repeated bidding in multi-unit pay-as-bid auctions under budget constraints. We first considered the unconstrained regime and developed DAG-based algorithms that run in polynomial time and achieve sublinear regret under different feedback models. Building on these ideas, we then introduced a primal–dual framework for the budgeted setting, yielding $\rho$-approximate sublinear regret. Finally, we provided efficient implementations in key settings whose per-round time and space are independent of the number of contexts, extending to large or even infinite context spaces. Several future directions remain open. A key challenge is closing the gap between the regret rates in the bandit setting with known context distribution ($O(\sqrt{T})$) and unknown context distribution ($O(T^{2/3})$). While \citet{schneider2023optimal} obtain $O(\sqrt{T})$ in the unknown-distribution setting when the number of arms is small, achieving comparable rates with combinatorially many actions, similar to our setting, remains elusive. Another direction is to develop efficient algorithms that adapt optimally to both stochastic and adversarial competing bids, which is practically relevant since most bidding environments exhibit mixed behavior. Finally, understanding equilibrium behavior, convergence, and market dynamics when bidders adopt no-regret learning algorithms poses rich theoretical and algorithmic challenges.




\small
\bibliography{references}
\bibliographystyle{plainnat}
\normalsize
\clearpage
\appendix
\crefalias{section}{appendix}
\crefalias{subsection}{appendix}
\section{Omitted Details}\label{apx:model}
\subsection{Suboptimality of the Alternate No-Overbidding Rule}\label{apx:example}
We give an example to show that the alternate notion of per-unit no-overbidding~(NOB), $b_j\leq v_j , \forall j \in[\maxbid]$, which is considered without loss of generality for profit maximizers~($\alpha=1$), can be suboptimal in bidders with cost of capital $\alpha<1$. To see this, fix $\alpha<1$. For any $\epsilon\in(0, \frac{1}{20\maxbid}]$, let 
\begin{align*}
    \v=[1, 1-2\maxbid\epsilon, \dots, 1-2\maxbid\epsilon]\in \R_+^\maxbid\,.
\end{align*}
Consider an auction with competing bid profile $\otherbid{}$ in which each bid is $b=1-2\maxbid\epsilon+\epsilon$. Any $\ibid$ which satisfies $b_\l\leq v_\l, \forall \l\in[\maxbid]$ can obtain at most 1 unit as $b_2\leq v_2=1-2\maxbid\epsilon<b$. To obtain the one possible unit, it is necessary that $b_1\geq b$. Thus, the utility of the bidder is
\begin{align*}
    \ut_\v(\ibid; \otherbid{}) = 1-\alpha b_1\leq 1-\alpha(1-2\maxbid\epsilon + \epsilon).
\end{align*}
Now, define $\ibid'\in\bidclass$ in which each entry is $1-2\maxbid\epsilon+2\epsilon$. For any $j\in[\maxbid]$, 
\begin{align*}
   B_j=\sum_{\l=1}^j b_\l= (1-2\maxbid\epsilon+2\epsilon)j~~\text{and}~~ W_j = \sum_{\l=1}^j v_\l=1+(j-1)(1-2\maxbid\epsilon)
\end{align*}


It can be verified that $\ibid'$ is a NOB strategy per \cref{def:overbid} as $B_j\leq W_j, \forall j \in[\maxbid]$, and obtains $\maxbid$ units since $b<1-2\maxbid\epsilon+2\epsilon$. Thus,
\begin{align*}
   \ut_\v(\ibid'; \otherbid{}) = 1+(\maxbid-1)(1-2\maxbid\epsilon) - \alpha\maxbid(1-2\maxbid\epsilon+2\epsilon)\,.
\end{align*}

Hence, 
\begin{align*}
    \frac{\ut_\v(\ibid'; \otherbid{})}{\ut_\v(\ibid; \otherbid{})} &\geq \frac{1+(\maxbid-1)(1-2\maxbid\epsilon) - \alpha\maxbid(1-2\maxbid\epsilon+2\epsilon)}{1-\alpha(1-2\maxbid\epsilon + \epsilon)}=\frac{(1-\alpha)(\maxbid-2\maxbid(\maxbid-1)\epsilon)}{1-\alpha+\alpha(2\maxbid-1)\epsilon}\\
    &\geq \frac{0.9\maxbid(1-\alpha)}{1-0.1\alpha}\gtrsim \maxbid\,.
\end{align*}   

\subsection{Overbidding May Result in Value $<$ Payment}\label{apx:overbidding-failure}
We show that for any overbidding strategy $\ibid$, there exists a competing-bid profile $\otherbid{}$ such that $\ut_{\v}(\allbids)<0$. Consider a bidder with $\alpha=1$. Recall that if $\ibid$ is an overbidding strategy, then there exists $\ell\in[\maxbid]$ such that $B_\ell>W_\ell$. 

Fix such an $\ell$ and consider a competing-bid profile $\otherbid{}$ in which the top $K-\ell$ competing bids are $2b_1$, and the remaining competing bids are all equal to some $\delta\in\left(0,\frac{b_{\maxbid}}{2}\right]$, where $\delta$ is an integer multiple of $\epsilon$. Under this profile, the bidder wins exactly $\ell$ units. Hence, $\val_{\v}(\allbids)=W_\ell$ and $\price(\allbids)=\sum_{j=1}^{\ell} b_j = B_\ell$, which implies that $\ut_{\v}(\allbids)=W_\ell-B_\ell<0$.

\section{Omitted Proofs}\label{apx:proofs}
\subsection{Proof of \cref{lem:opt-policy}}
For any $\pi\in\safepolicy$, the objective function in \cref{eq:opt-sto} is
\begin{align*}
   \sum_{t=1}^T\E[\ut_{\v^t}(\pi(\v^t); \otherbid{t})]&=\sum_{t=1}^T\sum_{\v\in\mathcal{V}}\P[\v^t=\v]\cdot\ut_{\v}(\pi(\v); \otherbid{t}) =\sum_{\v\in\mathcal{V}}\P[\v^t=\v]\cdot\sum_{t=1}^T\ut_{\v}(\pi(\v); \otherbid{t}),
\end{align*}
where the last equality follows as $\v^t$ is sampled i.i.d. from the distribution $\mathcal{D}$ in each round. Since $\rho\geq1$, the budget constraint is trivially satisfied. As the objective function is separable in $\v$, the optimal stationary policy $\pi^*$ satisfies $\pi^*(\v)\in\argmax_{\pi(\v)\in\bidclass_\v}\sum_{t=1}^T\ut_{\v}(\pi(\v); \otherbid{t})$.

\subsection{Proof of \cref{thm:DAG-uf-policy}}
\textbf{(Path $\leftrightarrow$ strategy).} In $\DAG{}$, consider a $s$-$d$ path, 
\begin{align}\label{eq:example-path}
   \path=s\to (1, b_1, s_1)\to\dots\to (\maxbid, b_\maxbid, s_\maxbid)\to d,
\end{align}
mapped to the strategy $\ibid = [b_1, \dots, b_\maxbid]$. We claim that $\ibid\in\bidclass_\v$. Assume $b_0=\infty$ and $s_0=0$. 

By construction, $b_\l\in\Z_\epsilon$ and $b_{\l-1}\geq b_{\l}, \forall \l\in[\maxbid]$. Furthermore $s_{\l}\stackrel{\eqref{eq:edge-def-cond}}{=}s_{\l-1}+b_\l\implies s_\l=\sum_{j=1}^\l b_j$ as $s_0=0$. Hence, $B_\l=\sum_{j=1}^\l b_j=s_\l\stackrel{\eqref{eq:S-UB}}{\leq} W_\l, \forall \l\in[\maxbid]$ which implies that $\ibid$ is a NOB strategy corresponding to the valuation vector $\v$ per \cref{def:overbid}. Hence, $\ibid\in\bidclass_\v$.

To show bijection, consider any $\ibid=[b_1, \dots, b_\maxbid]\in\bidclass_\v$. With this strategy, associate the $s$-$d$ path stated in \cref{eq:example-path}, where $S_j=\sum_{\l=1}^j b_\l, \forall j\in[\maxbid]$. 

We show that $\path$ is a valid path in the DAG, i.e., each of its nodes and edges exist in $\DAG{}$. As $\ibid\in\bidclass_\v$, $b_j\in\Z_\epsilon$ and $s_j=\sum_{\l=1}^j b_\l \leq W_j, \forall j\in[\maxbid]$. Hence, all nodes in $\path$ exist in the DAG. Furthermore, $b_1\geq \dots\geq b_\maxbid$ and $s_\l = s_{\l-1}+b_\l, \forall \l\in[\maxbid]$ which implies that all the edges exist in $\path$.

\medskip
\noindent\textbf{(Path weight).} Let $\path=s\to (1, b_1, s_1)\to\dots\to (\maxbid, b_\maxbid, s_\maxbid)\to d$ be the path corresponding to $\ibid=[b_1, \dots, b_\maxbid]$. The weight of path $\path$ is
\begin{align*}
    \omega_{\v}(\path)=\sum_{e\in \path}\omega_{\v}(e) &= \sum_{j=1}^\maxbid \omega_{\v}((j-1, b_{j-1}, s_{j-1})\to (j, b_{j}, s_{j}))\\
    &\stackrel{\eqref{eq:edge-weight-offline}}{=}\sum_{j=1}^\maxbid\sum_{t=1}^T\frac{1-(v_j-\alpha b_j)\cdot\ind{b_j\geq \ordotherbid{t}{j}}}{1+\alpha}\\
    &\stackrel{\eqref{eq:utility-decomp}}{=}\frac{\maxbid T-\sum_{t=1}^T\ut_{\v}(\ibid; \otherbid{t})}{1+\alpha}\,.
\end{align*}
Hence, maximizing $\sum_{t=1}^T\ut_{\v}(\ibid; \otherbid{t})$ over $\ibid\in\bidclass_{\v}$ is equivalent to computing the shortest~(minimum-weight) path in the DAG.

\textbf{Space and Time Complexity.} Since $\DAG{}$ is a DAG, the shortest path in the DAG can be computed in $O(|\mathsf{N}_\v|+|\mathsf{E}_\v|)=O(|\mathsf{E}_\v|)=O(\maxbid^2/\epsilon^3)$ space and time complexity since $|\mathsf{N}_\v|\lesssim|\mathsf{E}_\v|$.

\subsection{Proof of \cref{thm:regret-safe}}\label{apx:thm:regret-safe}
Fix $\v\in\mathcal{V}$. Recall that the probability of selecting a path $\path$ in round $t$ given $\v^t=\v$ is 
\begin{align}\label{eq:P-markovian}
    \P_\v^t(\path):=\P[\path^t=\path|\v^t=\v]=\prod_{e\in\path}\phi^t_{\v}(e)\,.
\end{align}
For any edge $e=u\to v\in\mathsf{E}_\v$, the edge probabilities are
\begin{align}
    \phi^t_\v(e)=[\phi^{t-1}_\v(e)]^{\gamma_t}\cdot\exp(-\eta_t \omega^{t-1}_\v(e))\cdot\frac{\Gamma^{t-1}_\v(v)}{\Gamma^{t-1}_\v(u)},\tag{\eqref{eq:phi-update} restated}
\end{align}
where $\Gamma^{t-1}_\v(d)=1$ and for $\Gamma^{t-1}_\v(\cdot)$ is computed recursively in a bottom-up fashion as follows:
    \begin{align}
       \Gamma^{t-1}_\v(u)=\sum_{v:u\to v\in\mathsf{E}_\v}\Gamma^{t-1}_\v(v)\cdot[\phi^{t-1}_\v(u\to v)]^{\gamma_t}\cdot\exp(-\eta_t \omega^{t-1}_\v(u\to v))\,.\tag{\eqref{eq:gamma-update} restated}
    \end{align}
Here, $\eta_0=1$ and
\begin{align}\label{def:gamma-t}
   \gamma_t=\frac{\eta_t}{\eta_{t-1}} ~~\forall t\geq 1\,.
\end{align}
 Now, consider a na\"ive implementation of the Decreasing Hedge algorithm. In this case, the probability of selecting path $\path$ in round $t$ given $\v^t=\v$ is
\begin{align}\label{eq:dec-hedge-naive}
    \HP^{t}_\v(\path) = \frac{\exp(-\eta_t\sum_{s=1}^{t-1}\omega^s_\v(\path))}{\sum_{\path'\in\mathscr{P}_\v}\exp(-\eta_t\sum_{s=1}^{t-1}\omega^s_\v(\path'))},
\end{align}
where $\mathscr{P}_\v$ is the set of all $s$-$d$ paths in $\DAG{}$ and $\omega^s_\v(\path)=\sum_{e\in\path}\omega^s_\v(e)$. Note that if $\eta_t=\eta, \forall t$, then \cref{eq:dec-hedge-naive} describes the action selection probability under the classical Hedge algorithm.
\begin{lemma}\label{lem:dec-hedge-equiv}
    For any $t\in[T]$, $\v\in\mathcal{V}$, and $\path\in\mathscr{P}_\v$, $\P^t_\v(\path)=\HP^t_\v(\path)$. In words, in any round $t$, and $\v\in\mathcal{V}$, the probability of choosing path $\path\in\mathscr{P}_\v$ under \cref{alg:weight-pushing-stoc} is same as that under the na\"ive Decreasing Hedge algorithm.
\end{lemma}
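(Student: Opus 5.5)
The plan is to prove by induction on $t$ the stronger, edge-level statement: for every $\v$ and every edge $e\in\mathsf E_\v$,
\[
\phi^t_\v(e)=\exp\Big(-\eta_t\sum_{s=1}^{t-1}\omega^s_\v(e)\Big)\cdot\frac{Z^t_\v(v)}{Z^t_\v(u)},\qquad e=u\to v,
\]
where $Z^t_\v(u):=\sum_{\path:u\leadsto d}\exp(-\eta_t\sum_{s<t}\omega^s_\v(\path))$ is the sum over all $u$--$d$ subpaths, and $Z^t_\v(d)=1$. Given this, \cref{eq:P-markovian} gives $\P^t_\v(\path)=\prod_{e\in\path}\phi^t_\v(e)$. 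In this product the $Z$ ratios telescope to $Z^t_\v(d)/Z^t_\v(s)=1/Z^t_\v(s)$, and the exponentials multiply to $\exp(-\eta_t\sum_{s<t}\omega^s_\v(\path))$. Since $Z^t_\v(s)$ is exactly the denominator in \cref{eq:dec-hedge-naive}, this yields $\HP^t_\v(\path)$.

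The base case $t=1$ uses $\phi^0\equiv 1$, $\omega^0\equiv0$ and $\gamma_1=\eta_1$. Then $[\phi^0]^{\gamma_1}=1$, \cref{eq:gamma-update} reduces to path counting, and $\Gamma^0_\v=Z^1_\v$. Hence $\phi^1_\v(e)=Z^1_\v(v)/Z^1_\v(u)$, which matches the claim, since the cumulative sum is empty and the exponents with $\eta_1$ are all zero.

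For the inductive step, assume the formula at $t-1$. Raising it to the power $\gamma_t=\eta_t/\eta_{t-1}$ gives
\[
[\phi^{t-1}_\v(e)]^{\gamma_t}=\exp\Big(-\eta_t\sum_{s<t-1}\omega^s_\v(e)\Big)\Big(\tfrac{Z^{t-1}_\v(v)}{Z^{t-1}_\v(u)}\Big)^{\gamma_t}.
\]
Multiplying by $\exp(-\eta_t\omega^{t-1}_\v(e))$ completes the cumulative sum up to $t-1$. Define $c(u):=(Z^{t-1}_\v(u))^{\gamma_t}$, so the factor $\big(Z^{t-1}_\v(v)/Z^{t-1}_\v(u)\big)^{\gamma_t}$ equals $c(v)/c(u)$.

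I then show by bottom-up induction over the DAG that $\Gamma^{t-1}_\v(u)=Z^t_\v(u)/c(u)$. At $d$ this holds because $c(d)=1$. At a general node, substitute into \cref{eq:gamma-update}:
\[
\sum_{v}\frac{Z^t_\v(v)}{c(v)}\cdot\frac{c(v)}{c(u)}\, e^{-\eta_t\sum_{s<t}\omega^s_\v(u\to v)}=\frac{1}{c(u)}\sum_v e^{-\eta_t\sum_{s<t}\omega^s_\v(u\to v)}Z^t_\v(v)=\frac{Z^t_\v(u)}{c(u)},
\]
where the last equality is the standard path-sum recursion for $Z$. Plugging this into \cref{eq:phi-update}, the $c$ factors cancel and the claimed formula at time $t$ follows.

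The main obstacle is the bookkeeping in the inductive step: the old normalizers $Z^{t-1}$ raised to $\gamma_t$ must be carried along. The device that handles this is the ansatz $\Gamma=Z^t/c$, and it works only because the $c$ factors cancel exactly in both the recursion and the ratio. One should also note that nodes with no path to $d$ never arise, since by \cref{eq:S-UB} every node can continue with zero bids. This guarantees $Z>0$ everywhere.
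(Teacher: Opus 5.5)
Your proof is correct, but you organize the induction differently from the paper.

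\textbf{The paper's route.} The paper inducts only on the path-level identity $\P^t_\v(\path)=\HP^t_\v(\path)$. Since $\prod_{e\in\path}[\phi^{t-1}_\v(e)]^{\gamma_t}=[\P^{t-1}_\v(\path)]^{\gamma_t}$, the hypothesis at $t-1$ plugs directly into $\prod_{e\in\path}\phi^t_\v(e)$. The $\Gamma$ ratios then telescope to $\Gamma^{t-1}_\v(d)/\Gamma^{t-1}_\v(s)$. The generic path-sum formula of \cref{cl:gamma-closed-form} rewrites $\Gamma^{t-1}_\v(s)$ as $\sum_{\path'}[\HP^{t-1}_\v(\path')]^{\gamma_t}\exp(-\eta_t\omega^{t-1}_\v(\path'))$. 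Finally, the multiplicative Decreasing Hedge recursion of \cref{cl:dec-hedge-recursion} closes the induction. In this route the previous normalizer appears only once, globally, and cancels inside \cref{cl:dec-hedge-recursion}.

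\textbf{Your route.} You instead face the local normalizers $(Z^{t-1}_\v(u))^{\gamma_t}$ at every node. That is why you need the ansatz $\Gamma^{t-1}_\v=Z^t_\v/c$ and a second bottom-up induction. The paper's approach is shorter and needs less bookkeeping.

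\textbf{What your stronger invariant buys.} Your edge-level invariant gives an explicit description: $\phi^t_\v(u\to v)$ is the exact conditional transition probability of the Hedge path distribution, and $\Gamma^{t-1}_\v$ is known in closed form. Since $\widehat{x}^{t-1}(e)v_\l+\widehat{y}^{t-1}(e)=-\eta_t\sum_{s<t}\omega^s_\v(e)$, your invariant shows that \cref{alg:weight-pushing-stoc} and the implementation in \cref{eq:gamma-update-alt,eq:phi-update-alt} produce identical edge probabilities, not merely identical path distributions. This gives the equivalence claimed in \cref{thm:regret-safe-alt} almost for free.

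Your observation that every node reaches $d$ (append a zero bid, using $s_\l\le W_\l\le W_{\l+1}$) also matters. It makes all normalizers positive, which settles a well-definedness point the paper leaves implicit.
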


We first establish a few preliminaries necessary to prove \cref{lem:dec-hedge-equiv}. 

\begin{claim}\label{cl:dec-hedge-recursion}
    For any $t\in[T]$, $\v\in\mathcal{V}$, and $\path\in\mathscr{P}_\v$,
    \begin{align*}
    \HP^{t}_\v(\path) &=\frac{[\HP^{t-1}_\v(\path)]^{\gamma_t}\exp(-\eta_t\omega^{t-1}_\v(\path))}{\sum_{\path'\in\mathscr{P}_\v}[\HP^{t-1}_\v(\path')]^{\gamma_t}\exp(-\eta_t\omega^{t-1}_\v(\path'))},
\end{align*}
where $\gamma_t$ is per \cref{def:gamma-t}, where we define $\HP^{0}_\v(\path):=1$ and $\omega^0_\v(\path)=0$ for all $\v\in\mathcal{V}$ and $\path\in\mathscr{P}_\v$.
\end{claim}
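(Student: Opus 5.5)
We prove \cref{cl:dec-hedge-recursion} by direct algebra from the definition in \cref{eq:dec-hedge-naive}. The idea is to write $\HP^t_\v$ in unnormalized form and raise $\HP^{t-1}_\v$ to the power $\gamma_t=\eta_t/\eta_{t-1}$. This rescales the cumulative-loss exponent from $\eta_{t-1}$ to $\eta_t$. The leftover normalizing constant is common to all paths, so it cancels in the ratio.

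\textbf{Step 1 (case $t\ge 2$).} Let $L^{t-1}_\v(\path):=\sum_{s=1}^{t-1}\omega^s_\v(\path)$. Also write $Z_{t-1}:=\sum_{\path'\in\mathscr P_\v}\exp(-\eta_{t-1}L^{t-2}_\v(\path'))$, so that $\HP^{t-1}_\v(\path)=\exp(-\eta_{t-1}L^{t-2}_\v(\path))/Z_{t-1}$. Then
\[
[\HP^{t-1}_\v(\path)]^{\gamma_t}=\exp\big(-\eta_t L^{t-2}_\v(\path)\big)\,Z_{t-1}^{-\gamma_t}.
\]
Multiplying by $\exp(-\eta_t\omega^{t-1}_\v(\path))$ gives $\exp(-\eta_t L^{t-1}_\v(\path))\,Z_{t-1}^{-\gamma_t}$. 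The factor $Z_{t-1}^{-\gamma_t}$ does not depend on $\path$, so it cancels between the numerator and the sum over $\path'$ in the denominator. What remains is exactly \cref{eq:dec-hedge-naive} at round $t$.

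\textbf{Step 2 (base case $t=1$).} With the conventions $\HP^0_\v\equiv 1$ and $\omega^0_\v\equiv 0$, the right-hand side equals $1/|\mathscr P_\v|$ for every $\eta_1$ and $\gamma_1$. This matches \cref{eq:dec-hedge-naive} at $t=1$, where the sum over $s$ is empty and the distribution is uniform. The case $t=2$ needs no separate treatment. Step 1 uses $\HP^1_\v$ from \cref{eq:dec-hedge-naive} (uniform, with $L^0_\v\equiv 0$), not the convention $\HP^0_\v$, so it applies verbatim.

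There is no real obstacle. The only points requiring care are the boundary conventions ($\eta_0=1$, $\HP^0_\v:=1$, $\omega^0_\v:=0$) and confirming that $\HP^{t-1}_\v>0$, so that fractional powers are well defined. Positivity holds because every path has finite weight.
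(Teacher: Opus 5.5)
Your proof is correct and follows the same route as the paper's: split off the round-$(t-1)$ loss and rewrite $\eta_t=\gamma_t\eta_{t-1}$ to recognize $[\HP^{t-1}_\v(\path)]^{\gamma_t}$. The resulting path-independent normalizer then cancels. Your explicit tracking of $Z_{t-1}^{-\gamma_t}$ and your separate check of the $t=1$ case under the convention $\HP^0_\v:=1$ are slightly more careful than the paper's write-up, which leaves both implicit.
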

\begin{proof}
For any $t\in[T]$, $\v\in\mathcal{V}$, and $\path\in\mathscr{P}_\v$,
   \begin{align*}
    \HP^{t}_\v(\path) &\stackrel{\eqref{eq:dec-hedge-naive}}{=} \frac{\exp(-\eta_t\sum_{s=1}^{t-1}\omega^s_\v(\path))}{\sum_{\path'\in\mathscr{P}_\v}\exp(-\eta_t\sum_{s=1}^{t-1}\omega^s_\v(\path'))}\\
    &=\frac{\exp(-\eta_t\sum_{s=1}^{t-2}\omega^s_\v(\path))\cdot\exp(-\eta_t\omega^{t-1}_\v(\path))}{\sum_{\path'\in\mathscr{P}_\v}\exp(-\eta_t\sum_{s=1}^{t-2}\omega^s_\v(\path'))\cdot\exp(-\eta_t\omega^{t-1}_\v(\path'))}\\
    &=\frac{\exp(-\frac{\eta_t}{\eta_{t-1}}\cdot\eta_{t-1}\sum_{s=1}^{t-2}\omega^s_\v(\path))\cdot\exp(-\eta_t\omega^{t-1}_\v(\path))}{\sum_{\path'\in\mathscr{P}_\v}\exp(-\frac{\eta_t}{\eta_{t-1}}\cdot\eta_{t-1}\sum_{s=1}^{t-2}\omega^s_\v(\path'))\cdot\exp(-\eta_t\omega^{t-1}_\v(\path'))}\\
    &=\frac{[\HP^{t-1}_\v(\path)]^{\gamma_t}\exp(-\eta_t\omega^{t-1}_\v(\path))}{\sum_{\path'\in\mathscr{P}_\v}[\HP^{t-1}_\v(\path')]^{\gamma_t}\exp(-\eta_t\omega^{t-1}_\v(\path'))},
\end{align*} 
where the last line follows from \cref{eq:dec-hedge-naive} corresponding round $t-1$. 
\end{proof}

\begin{claim}\label{cl:gamma-closed-form}
    For any node $u\in\mathsf{N}_\v$, let $\mathscr{P}_\v(u)$ be the set of paths starting at $u$ and terminating in $d$. Then, 
    \begin{align*}
        \Gamma^{t-1}_\v(u)=\sum_{\path\in\mathscr{P}_\v(u)}\prod_{e\in\path} [\phi^{t-1}_\v(e)]^{\gamma_t}\cdot\exp(-\eta_t\omega^{t-1}_\v(e))\,.
    \end{align*}
\end{claim}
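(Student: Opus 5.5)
We prove \cref{cl:gamma-closed-form} by backward induction over the layers of $\DAG{}$, from the destination $d$ back to the source $s$. For brevity, write
\[
\psi(e):=[\phi^{t-1}_\v(e)]^{\gamma_t}\exp(-\eta_t\omega^{t-1}_\v(e))
\]
for the per-edge factor appearing in \cref{eq:gamma-update}. The claim then reads $\Gamma^{t-1}_\v(u)=\sum_{\path\in\mathscr P_\v(u)}\prod_{e\in\path}\psi(e)$. The structure we exploit is that every edge goes from layer $\l-1$ to layer $\l$, so the longest path from $u$ to $d$ is bounded. This makes "distance to $d$" (equivalently, $\maxbid+1$ minus the layer index) a valid induction parameter.

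\textbf{Base case.} Take $u=d$. Here $\mathscr P_\v(d)$ contains only the trivial path with no edges, whose empty product is $1$. This agrees with the initialization $\Gamma^{t-1}_\v(d)=1$. If one prefers to avoid empty paths, start instead at layer $\maxbid$. Each node there has the single edge $u\to d$, so \cref{eq:gamma-update} gives $\Gamma^{t-1}_\v(u)=\psi(u\to d)$, which is the sum over the unique path $u\to d$.

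\textbf{Inductive step.} Take a node $u$ in layer $\l-1$ and assume the formula holds for every node in layer $\l$ (all out-neighbours of $u$ lie there). The key combinatorial fact is that $\mathscr P_\v(u)$ is the disjoint union, over out-neighbours $v$ with $u\to v\in\mathsf E_\v$, of the sets $\{u\to v\}\circ\mathscr P_\v(v)$. Every $u$--$d$ path has a unique first edge, and conversely prepending $u\to v$ to any $v$--$d$ path yields a valid $u$--$d$ path. Because the product over a concatenated path factorizes, we get
\[
\sum_{\path\in\mathscr P_\v(u)}\prod_{e\in\path}\psi(e)=\sum_{v:u\to v\in\mathsf E_\v}\psi(u\to v)\sum_{\path'\in\mathscr P_\v(v)}\prod_{e\in\path'}\psi(e).
\]
Substituting the inductive hypothesis for the inner sum turns this into $\sum_v \psi(u\to v)\,\Gamma^{t-1}_\v(v)$, which is exactly the recursion \cref{eq:gamma-update}.

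There is no genuine obstacle in this argument. The only point that needs care is verifying the path decomposition, in particular that it is a bijection with no double counting. This holds because edges only join consecutive layers, so paths cannot revisit nodes and the first edge of a path is uniquely determined. Nodes from which $d$ is unreachable do not occur, since every node in layer $\l$ has the outgoing edge with $b_{\l+1}=0$ and this edge preserves the NOB constraint $s_{\l+1}=s_\l\le W_\l\le W_{\l+1}$. Even if some node had no path to $d$, both sides of the claim would equal $0$ and the identity would still hold.
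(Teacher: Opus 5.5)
Your proposal is correct and follows essentially the same backward-induction argument as the paper: base case $\Gamma^{t-1}_\v(d)=1$, then substituting the induction hypothesis into the recursion \cref{eq:gamma-update} and regrouping the $u$--$d$ paths by their first edge. Your explicit check that this first-edge decomposition is a bijection, and your remark about reachability of $d$, just spell out the regrouping that the paper does in one line.
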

\begin{proof}
    We prove the result by backward induction. For the base case, $\Gamma^{t-1}_\v(d)=1$. Suppose the result holds true for all the nodes in layer $\l+1$ for some $0\leq \l\leq \maxbid$. Then, for any node $u$ in layer $\l$, 
    \begin{align*}
        \Gamma^{t-1}_\v(u)&\stackrel{\eqref{eq:gamma-update}}{=}\sum_{v:u\to v\in\mathsf{E}_\v}\Gamma^{t-1}_\v(v)\cdot[\phi^{t-1}_\v(u\to v)]^{\gamma_t}\cdot\exp(-\eta_t \omega^{t-1}_\v(u\to v))\\
        &=\sum_{v:u\to v\in\mathsf{E}_\v}\Big(\sum_{\path\in\mathscr{P}_\v(v)}\prod_{e\in\path} [\phi^{t-1}_\v(e)]^{\gamma_t}\cdot\exp(-\eta_t\omega^{t-1}_\v(e))\Big)\cdot[\phi^{t-1}_\v(u\to v)]^{\gamma_t}\cdot\exp(-\eta_t \omega^{t-1}_\v(u\to v))\\
        &=\sum_{\path\in\mathscr{P}_\v(u)}\prod_{e\in\path} [\phi^{t-1}_\v(e)]^{\gamma_t}\cdot\exp(-\eta_t\omega^{t-1}_\v(e))\,.
    \end{align*}
    Here, the second equality follows from the induction hypothesis.
\end{proof}

Having all the necessary pieces, we now prove \cref{lem:dec-hedge-equiv} by induction on round $t$. 
\begin{proof}[Proof of \cref{lem:dec-hedge-equiv}] For $t=1$, $\HP^1_\v(\path)\stackrel{\eqref{eq:dec-hedge-naive}}{=}\frac{1}{|\mathscr{P}_\v|}$. For $t=1$, 
\begin{align*}
    \P^1_\v(\path)\stackrel{\eqref{eq:P-markovian}}{=}\prod_{e\in\path}\phi^1_\v(e)\stackrel{\eqref{eq:phi-update}}{=}\frac{\Gamma^0_\v(d)}{\Gamma^0_\v(s)},
\end{align*}
where the last equality holds as $\omega^0_\v(\cdot)=0$ and $\phi^0_\v(\cdot)=1$. Recall that $\Gamma^0_\v(d)=1$ and by \cref{cl:gamma-closed-form},
\begin{align*}
   \Gamma^0_\v(s)=\sum_{\path\in\mathscr{P}_\v(s)}1=|\mathscr{P}_\v(s)|=|\mathscr{P}_\v|\implies \P^1_\v(\path)=\frac{1}{|\mathscr{P}_\v|}\,.
\end{align*}
Hence, we have that $\P^1_\v(\path)=\HP^1_\v(\path)$. Suppose the result holds for any round $t-1$, i.e., $\P^{t-1}_\v(\path)=\HP^{t-1}_\v(\path)$. Then, 
\begin{align*}
    \P^t_\v(\path)&\stackrel{\eqref{eq:P-markovian}}{=}\prod_{e\in\path}\phi^t_\v(e)\\
    &\stackrel{\eqref{eq:phi-update}}{=}\prod_{e=u\to v\in\path}[\phi^{t-1}_\v(e)]^{\gamma_t}\cdot\exp(-\eta_t \omega^{t-1}_\v(e))\cdot\frac{\Gamma^{t-1}_\v(v)}{\Gamma^{t-1}_\v(u)}\\
    &=[\HP^{t-1}_\v(\path)]^{\gamma_t}\cdot\exp(-\eta_t\omega^{t-1}_\v(\path))\cdot\frac{\Gamma^{t-1}_\v(d)}{\Gamma^{t-1}_\v(s)},
\end{align*}
where the last line follows as $\HP^{t-1}_\v(\path)=\P^{t-1}_\v(\path)=\prod_{e\in\path}\phi^{t-1}_\v(e)$. Now, recall that $\Gamma^{t-1}_\v(d)=1$ and by \cref{cl:gamma-closed-form},
\begin{align*}
   \Gamma^{t-1}_\v(s) = \sum_{\path'\in\mathscr{P}_\v}\prod_{e\in\path'} [\phi^{t-1}_\v(e)]^{\gamma_t}\cdot\exp(-\eta_t\omega^{t-1}_\v(e))=\sum_{\path'\in\mathscr{P}_\v}[\HP^{t-1}_\v(\path')]^{\gamma_t}\cdot\exp(-\eta_t\omega^{t-1}_\v(\path'))\,.
\end{align*}
Combining everything together and using \cref{cl:dec-hedge-recursion}, we get that
\begin{align*}
    \P^t_\v(\path)=\frac{[\HP^{t-1}_\v(\path)]^{\gamma_t}\cdot\exp(-\eta_t\omega^{t-1}_\v(\path))}{\sum_{\path'\in\mathscr{P}_\v}[\HP^{t-1}_\v(\path')]^{\gamma_t}\cdot\exp(-\eta_t\omega^{t-1}_\v(\path'))}=\HP^t_\v(\path),
\end{align*}
which is the desired result.
\end{proof}

\noindent\textbf{Regret Analysis.} For any $\v\in\mathcal{V}$ and $\eta>0$, define
\begin{align*}
    \Phi^t_\v(\eta):=\frac{1}{\eta}\log\Big(\frac{1}{|\mathscr{P}_\v|}\sum_{\path\in\mathscr{P}_\v}\exp\Big(-\eta \sum_{s=1}^t\omega^s_\v(\path)\Big)\Big)
\end{align*}
Then,
\begin{align*}
    \Phi^t_\v(\eta_t)-\Phi^{t-1}_\v(\eta_t) &=\frac{1}{\eta_t}\log\left(\frac{\sum_{\path\in\mathscr{P}_\v}\exp(-\eta_t \sum_{s=1}^t\omega^s_\v(\path))}{\sum_{\path'\in\mathscr{P}_\v}\exp(-\eta_t \sum_{s=1}^{t-1}\omega^s_\v(\path'))}\right)\\
    &=\frac{1}{\eta_t}\log\left(\frac{\sum_{\path\in\mathscr{P}_\v}\exp(-\eta_t \sum_{s=1}^{t-1}\omega^s_\v(\path))\cdot\exp(-\eta_t\omega^t_\v(\path))}{\sum_{\path'\in\mathscr{P}_\v}\exp(-\eta_t \sum_{s=1}^{t-1}\omega^s_\v(\path'))}\right)\\
    &\stackrel{\eqref{eq:dec-hedge-naive}}{=}\frac{1}{\eta_t}\log\left(\sum_{\path\in\mathscr{P}_\v}\HP^t_\v(\path)\cdot\exp(-\eta_t\omega^t_\v(\path))\right)\\
    &=\frac{1}{\eta_t}\log\left(\sum_{\path\in\mathscr{P}_\v}\P^t_\v(\path)\cdot\exp(-\eta_t\omega^t_\v(\path))\right)\,.
\end{align*}
By \cref{eq:P-markovian}, $\P^t_\v(\path)=\P[\path^t=\path|\v^t=\v]$. For $x\geq0$, $e^{-x}\leq 1-x+x^2$. Thus,
\begin{align*}
 \Phi^t_\v(\eta_t)-\Phi^{t-1}_\v(\eta_t) &\leq  \frac{1}{\eta_t}\log\left(\sum_{\path\in\mathscr{P}_\v}\P[\path^t=\path|\v^t=\v](1-\eta_t\omega^t_\v(\path)+\eta_t^2\omega^t_\v(\path)^2)\right)\\
 &=\frac{1}{\eta_t}\log\left(1+\sum_{\path\in\mathscr{P}_\v}\P[\path^t=\path|\v^t=\v](-\eta_t\omega^t_\v(\path)+\eta_t^2\omega^t_\v(\path)^2)\right)\,.
\end{align*}
Since $-\eta_t\omega^t_\v(\path)+\eta_t^2\omega^t_\v(\path)^2\geq -\tfrac{1}{4}$, $\sum_{\path\in\mathscr{P}_\v}\P[\path^t=\path|\v^t=\v](-\eta_t\omega^t_\v(\path)+\eta_t^2\omega^t_\v(\path)^2)\geq -\tfrac{1}{4}$. Moreover, $\log(1+x)\leq x$ for all $x>-1$. Thus,
\begin{align*}
   \Phi^t_\v(\eta_t)-\Phi^{t-1}_\v(\eta_t)  
   &\leq -\sum_{\path\in\mathscr{P}_\v}\P[\path^t=\path|\v^t=\v]\omega^t_\v(\path) + \eta_t\sum_{\path\in\mathscr{P}_\v}\P[\path^t=\path|\v^t=\v]\omega^t_\v(\path)^2
\end{align*}
Summing from $t=1$ to $t=T$ and using $\Phi^{0}_\v(\eta_1)=0$, the left hand side becomes 
\begin{align*}
   \sum_{t=1}^T\left(\Phi^t_\v(\eta_t)-\Phi^{t-1}_\v(\eta_t) \right) = \Phi^T_\v(\eta_T)-\Phi^{0}_\v(\eta_1) +\sum_{t=1}^{T-1}\left(\Phi^t_\v(\eta_{t})-\Phi^{t}_\v(\eta_{t+1})\right)\,.
\end{align*}

By \cref{thm:DAG-uf-policy}, there is a bijection between paths $\path\in\mathscr P_\v$ and strategies $\ibid\in\bidclass_\v$. Recall that
\[
\safepolicy=\Big\{\pi:\mathcal V\to \bigcup_{\v\in\mathcal V}\bidclass_\v \ \text{s.t.}\ \pi(\v)\in \bidclass_\v\Big\}.
\]
Composing this bijection with the policy mapping, we define a path-based policy class as follows:
\begin{align}\label{eq:tilde-policy-class}
  \widetilde{\safepolicy}:=\Big\{\widetilde{\pi}: \mathcal{V}\to\bigcup_{\v\in\mathcal{V}}\mathscr{P}_\v, ~\text{s.t.}~ \widetilde{\pi}(\v)\in\mathscr{P}_\v\Big\}\,.  
\end{align}
By construction, there is a one-to-one correspondence between $\pi\in\safepolicy$ and $\widetilde{\pi}\in\widetilde{\safepolicy}$. So, for any $\widetilde{\pi}\in\widetilde{\safepolicy}$, 
\begin{align*}
    \Phi^T_\v(\eta_T)\geq \frac{1}{\eta_T}\log\left(\frac{1}{|\mathscr{P}_\v|}\cdot\exp\left(-\eta_T\sum_{t=1}^T\omega^t_\v(\widetilde{\pi}(\v))\right)\right) = -\frac{\log |\mathscr{P}_\v|}{\eta_T} - \sum_{t=1}^T\omega^t_\v(\widetilde{\pi}(\v)) \\
    \implies \sum_{t=1}^T\left(\Phi^t_\v(\eta_t)-\Phi^{t-1}_\v(\eta_t) \right) \geq  -\frac{\log |\mathscr{P}_\v|}{\eta_T} - \sum_{t=1}^T\omega^t_\v(\widetilde{\pi}(\v))+ \sum_{t=1}^{T-1}\left(\Phi^t_\v(\eta_{t})-\Phi^{t}_\v(\eta_{t+1})\right)\,.
\end{align*}
This implies
\begin{align}\label{eq:pre-final-regret-bound}
   -&\frac{\log |\mathscr{P}_\v|}{\eta_T} - \sum_{t=1}^T\omega^t_\v(\widetilde{\pi}(\v))+ \sum_{t=1}^{T-1}\left(\Phi^t_\v(\eta_{t})-\Phi^{t}_\v(\eta_{t+1})\right)\nonumber \\
   &\leq  -\sum_{t=1}^T\sum_{\path\in\mathscr{P}_\v}\P[\path^t=\path|\v^t=\v]\omega^t_\v(\path) + \sum_{t=1}^T\sum_{\path\in\mathscr{P}_\v}\P[\path^t=\path|\v^t=\v]\eta_t\omega^t_\v(\path)^2\,. 
\end{align}

\begin{claim}\label{cl:non-decreasing-func}
    For any $\v\in\mathcal{V}$ and $t\in[T]$, $\Phi^t_\v(\eta)$ is non-decreasing for $\eta>0$.
\end{claim}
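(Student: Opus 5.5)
We need to show that $\eta\mapsto\Phi^t_\v(\eta)=\frac1\eta\log\big(\frac{1}{|\mathscr P_\v|}\sum_{\path}\exp(-\eta L(\path))\big)$ is non-decreasing on $\eta>0$, where $L(\path):=\sum_{s=1}^t\omega^s_\v(\path)$. The plan is to recognize $\Phi^t_\v(\eta)$ as a (negated) power mean, or equivalently to differentiate it and identify the sign of the derivative with a KL divergence.

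\textbf{Power-mean route (first choice).} Let $U$ be uniform on $\mathscr P_\v$ and set $X:=\exp(-L(U))>0$. Then
\[
\Phi^t_\v(\eta)=\log\big(\E[X^\eta]\big)^{1/\eta}=\log M_\eta(X),
\]
where $M_\eta(X)=(\E[X^\eta])^{1/\eta}$ is the power mean of order $\eta$. By Jensen (or Lyapunov's inequality), for $0<\eta_1\le\eta_2$ the map $y\mapsto y^{\eta_2/\eta_1}$ is convex on $y>0$. Hence
\[
\E[X^{\eta_2}]=\E\big[(X^{\eta_1})^{\eta_2/\eta_1}\big]\ge(\E[X^{\eta_1}])^{\eta_2/\eta_1}.
\]
Taking the $1/\eta_2$ power gives $M_{\eta_2}(X)\ge M_{\eta_1}(X)$, and since $\log$ is increasing, $\Phi^t_\v(\eta_2)\ge\Phi^t_\v(\eta_1)$. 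The claim is then a one-line consequence of this classical inequality.

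\textbf{Derivative route (backup).} Write $\Phi=\frac1\eta F(\eta)$ with $F(\eta)=\log\E[e^{-\eta L}]$, so that $\Phi'=\frac{\eta F'(\eta)-F(\eta)}{\eta^2}$. Let $Q_\eta(\path)\propto e^{-\eta L(\path)}$ denote the Gibbs distribution. Then $F'(\eta)=-\E_{Q_\eta}[L]$, and a direct computation gives
\[
\eta F'(\eta)-F(\eta)=\mathrm{KL}\big(Q_\eta\,\|\,\mathrm{Unif}(\mathscr P_\v)\big)\ge0.
\]
To verify this, note that $\log\frac{Q_\eta}{1/|\mathscr P_\v|}=-\eta L-F$, whose $Q_\eta$-expectation is $\eta F'-F$.

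\textbf{Potential obstacle.} The only real care needed is the direction of the convexity inequality (exponent ratio $\ge1$) and the fact that $X>0$ with finite support, so all expectations are finite. I expect no genuine obstacle.
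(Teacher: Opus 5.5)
Your proof is correct, and your primary argument takes a different route from the paper's.

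\textbf{The paper's route.} It writes $\Phi^t_\v(\eta)=g(\eta)/\eta$, where $g$ is the log of the average of $\exp(-\eta L(\path))$. It uses convexity of $g$ (log-sum-exp composed with an affine map) and shows the derivative numerator $h(\eta)=\eta g'(\eta)-g(\eta)$ is nonnegative. This follows from $h'(\eta)=\eta g''(\eta)\ge 0$ together with the boundary value $h(0)=-g(0)=0$.

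\textbf{Your power-mean route.} It needs no calculus. Writing $\Phi^t_\v(\eta)=\log M_\eta(X)$ with $X=\exp(-L(U))>0$ reduces the claim to Lyapunov's inequality, i.e., Jensen with exponent $\eta_2/\eta_1\ge 1$. This makes clear that the statement is just the classical monotonicity of power means in their order. It also requires neither differentiability nor extending $g$ to $\eta=0$. It also covers the gains version used later in the unknown-distribution bandit analysis (the paper's footnote) with no change, since $X=\exp\bigl(+\sum_s\widehat{\omega}^s_\v(\path)\bigr)$ is still positive.

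\textbf{Your backup route.} This is essentially the paper's derivative computation. The difference is how you certify the sign: you identify $h(\eta)=\mathrm{KL}\bigl(Q_\eta\,\|\,\mathrm{Unif}(\mathscr{P}_\v)\bigr)$ pointwise, instead of integrating $h'\ge 0$ up from $\eta=0$. A side benefit is that it shows the monotonicity is strict unless all paths have the same cumulative loss.
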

\begin{proof}
Recall that $\Phi^t_\v(\eta)=\frac{1}{\eta}\log\left(\frac{1}{|\mathscr{P}_\v|}\sum_{\path\in\mathscr{P}_\v}\exp\left(-\eta \sum_{s=1}^t\omega^s_\v(\path)\right)\right)$. Define
    \begin{align*}
        g(\eta)&=\log\left(\frac{1}{|\mathscr{P}_\v|}\sum_{\path\in\mathscr{P}_\v}\exp\left(-\eta \sum_{s=1}^t\omega^s_\v(\path)\right)\right),
    \end{align*}
    such that $\Phi^t_\v(\eta)=g(\eta)/\eta$. Note that $g(\eta)$ is convex because it is the composition of a convex function~(LogSumExp) with an affine map of $\eta$,$-\eta \sum_{s=1}^t\omega^s_\v(\path)$, plus a constant, $-\log |\mathscr{P}_\v|$. As $\Phi^t_\v(\eta)$ is differentiable, we have
\begin{align*}
    [\Phi^t_\v(\eta)]' = \frac{\eta g'(\eta)-g(\eta)}{\eta^2}.
\end{align*}
    Define $h(\eta)=\eta g'(\eta)-g(\eta)$ which implies $h'(\eta)=\eta g''(\eta)$. As $g(\eta)$ is convex and twice differentiable, $g''(\eta)\geq0$ which implies for $\eta>0$, $h'(\eta)\geq0$, i.e., $h(\eta)$ is a non-decreasing function. Furthermore, $h(0)=-g(0)=0$. Hence, $h(\eta)\geq0$, for all $\eta>0$ which implies $[\Phi^t_\v(\eta)]' = h(\eta)/\eta\geq 0, \forall \eta>0$. Thus, $\Phi^t_\v(\eta)$ is non-decreasing in $\eta>0$.
\end{proof}

    Since $\eta_t\geq \eta_{t+1}$, by \cref{cl:non-decreasing-func}, we have $\sum_{t=1}^{T-1}\left(\Phi^t_\v(\eta_{t})-\Phi^{t}_\v(\eta_{t+1})\right)\geq 0$. Thus, after rearranging and using the fact that $|\mathscr{P}_\v|\lesssim \Big(\frac{1}{\epsilon}\Big)^{\maxbid}$, \cref{eq:pre-final-regret-bound} becomes 
    \begin{align}\label{eq:regret-stoc-in-path}
        \sum_{t=1}^T\sum_{\path\in\mathscr{P}_\v}\P[\path^t=\path|\v^t=\v]\omega^t_\v(\path) - \sum_{t=1}^T\omega^t_\v(\widetilde{\pi}(\v))&\lesssim \frac{\maxbid\log 1/\epsilon}{\eta_T} + \sum_{t=1}^T\sum_{\path\in\mathscr{P}_\v}\P[\path^t=\path|\v^t=\v]\eta_t\omega^t_\v(\path)^2\,. 
    \end{align}
Recall that for edge $e=(\l-1, b_{\l-1}, s_{\l-1})\to(\l, b_{\l}, s_{\l})$ in layer $\l\in[\maxbid]$, the edge weight is:
\begin{align}
    \omega^t_\v(e) &= \frac{1-(v_\l-\alpha b_\l)\cdot\ind{b_\l\geq \ordotherbid{t}{\l}}}{1+\alpha}\,.\tag{\eqref{eq:edge-weight-online} restated}\\
    \implies\omega^t_\v(\path) &=\sum_{e\in\path}\omega^t_\v(e)\stackrel{\eqref{eq:utility-decomp}}{=}\frac{M-\ut_{\v}(\ibid; \otherbid{t})}{1+\alpha},\label{eq:path-weight-to-utility}
\end{align} 
where $\ibid\in\bidclass_\v$ is bidding strategy corresponding to the path $\path\in\mathscr{P}_\v$. Using the facts that \(\omega^t_\v(\path)\leq \maxbid\) and existence of a bijective mapping between policies in $\widetilde{\safepolicy}$ and $\safepolicy$, \cref{eq:regret-stoc-in-path} becomes
\begin{align*}
   &\sum_{t=1}^T\ut_{\v}(\pi(\v); \otherbid{t}) - \sum_{t=1}^T\sum_{\ibid\in\bidclass_\v}\P[\ibid^t=\ibid|\v^t=\v]\ut_{\v}(\ibid; \otherbid{t})\\
   &\lesssim (1+\alpha)\left[\frac{\maxbid\log 1/\epsilon}{\eta_T} + \maxbid^2\sum_{t=1}^T\eta_t\sum_{\ibid\in\bidclass_\v}\P[\ibid^t=\ibid|\v^t=\v]\right]\\
    &=(1+\alpha)\left[\frac{\maxbid\log 1/\epsilon}{\eta_T} + \maxbid^2\sum_{t=1}^T\eta_t\right] 
\end{align*}

Taking expectations with respect to the randomness of the valuation vectors, and using the facts that $\alpha\leq 1$, and that the choice of ${\pi}$ was arbitrary, we obtain
\begin{align*}
   \mathsf{Reg}_{nb}(T) &=\mathsf{OPT}_{nb}-\sum_{t=1}^T\E[\ut_{\v^t}(\allbids^t)]\\
   &\stackrel{\eqref{eq:opt-sto-NB}}{=}\max_{\pi\in\safepolicy}\sum_{t=1}^T\E[\ut_{\v^t}(\pi(\v^t); \otherbid{t})] - \sum_{t=1}^T\E[\ut_{\v^t}(\allbids^t)] \lesssim \frac{\maxbid\log 1/\epsilon}{\eta_T} + \maxbid^2\sum_{t=1}^T\eta_t,
\end{align*}
where $\allbids^t=(\ibid; \otherbid{t}), \forall t\geq 1$. Setting $\eta_t= \sqrt{\frac{\log1/\epsilon}{\maxbid t}}$, and using the fact that $\sum_{t=1}^T\frac{1}{\sqrt{t}}\leq 2\sqrt{T}$, we get
\begin{align*}
    \mathsf{Reg}_{nb}(T)\lesssim \maxbid^{3/2}\sqrt{T\log 1/\epsilon}\,.
\end{align*}

\textbf{Space and Time Complexity.} The main bottleneck of implementing \cref{alg:weight-pushing-stoc} in the full information setting is updating the edge probabilities, $\phi^t_\v(\cdot)$ for all the contexts. For a fixed $\v\in\mathcal{V}$, updating $\phi^t_\v(\cdot)$ requires a forward pass and a backward pass over the edges $\mathsf{E}_\v$. Thus, \cref{alg:weight-pushing-stoc} requires $O(|\mathcal{V}|\cdot\max_{\v\in\mathcal{V}}|\mathsf{E}_\v|)=O(|\mathcal{V}|\maxbid^2/\epsilon^3)$ space and time per round.

\subsection{Proof of \cref{thm:regret-safe-bandit}}\label{apx:thm:regret-safe-bandit}
Recall that for any $\v\in\mathcal{V}$, $e\in\mathsf{E}_\v$ and $t\in[T]$, 
\begin{align}
    \widehat{\omega}^t_\v(e) &=
        \frac{\omega^t_\v(e)}{q^t(e)}\cdot\ind{e\in\path^t},\tag{\eqref{eq:w-hat-estimate-exp3} restated}
\end{align}
where $q^t(e)=\sum_{\v\in\mathcal{V}}\P[\v^t=\v]\sum_{\path\in\mathscr{P}_\v:e\in\path}\P[\path^t=\path|\v^t=\v]$. Then,
\begin{lemma}\label{lem:properties-bandit-known}
    For any $\v\in\mathcal{V}$, $e\in\mathsf{E}_\v$ and $t\in[T]$, 
    \begin{enumerate}
        \item $\widehat{\omega}^t_\v(e)$ is well defined,
        \item $\widehat{\omega}^t_\v(e)\in[0, \infty)$,
        \item $\E[\widehat{\omega}^t_\v(e)]={\omega}^t_\v(e)$, and
        \item $\E[\widehat{\omega}^t_\v(e)^2]\leq \frac{1}{q^t(e)}$\,.
    \end{enumerate}
    Here, the expectation is taken with respect to the randomness in contexts as well as the learning algorithm conditioned on the history up to round $t$.
\end{lemma}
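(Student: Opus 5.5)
I will verify the four properties in order, conditioning throughout on the history $\mathcal{H}_{t-1}$ up to the start of round $t$. Under this conditioning the edge probabilities $\phi^t_{\v'}(\cdot)$ for every $\v'\in\mathcal V$ are fixed. The competing bids $\otherbid{t}$ are adversarial but do not depend on the learner's round-$t$ randomness, so $\omega^t_\v(e)$ is deterministic given $\mathcal{H}_{t-1}$. The only randomness is in $\v^t\sim\mathcal D$ and the Markovian path sample $\path^t$. The key identity comes directly from \cref{eq:def-q}: $q^t(e)=\P[e\in\path^t\mid\mathcal{H}_{t-1}]$, averaged over $\v^t\sim\mathcal D$.

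\textbf{Well-definedness.} I need $q^t(e)>0$ for every $e\in\mathsf E_\v$. By \cref{eq:def-q}, $q^t(e)\ge \P[\v^t=\v]\sum_{\path\in\mathscr P_\v:e\in\path}\P^t_\v(\path)$. Every $\v\in\mathcal V$ lies in the support, so $\P[\v^t=\v]>0$ (taking $\mathcal V$ finite and equal to the support). Every edge of $\mathsf E_\v$ lies on at least one $s$--$d$ path; I will check this in the DAG construction, pruning dead-end nodes if needed, e.g., by completing with zero bids, which preserves NOB. By \cref{lem:dec-hedge-equiv}, $\P^t_\v(\path)=\HP^t_\v(\path)$, and this is a softmax of finite quantities, hence strictly positive. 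Therefore $q^t(e)>0$.

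\textbf{Nonnegativity and unbiasedness.} Nonnegativity is immediate because $\omega^t_\v(e)\in[0,1]$ by \cref{eq:edge-weight-online} (recall $v_\l-\alpha b_\l\le 1$) and $q^t(e)>0$. For unbiasedness,
\[
\E[\widehat{\omega}^t_\v(e)\mid\mathcal H_{t-1}]=\frac{\omega^t_\v(e)}{q^t(e)}\,\P[e\in\path^t\mid\mathcal H_{t-1}].
\]
Conditioning on $\v^t$, the probability $\P[e\in\path^t\mid\mathcal H_{t-1}]$ equals $\sum_{\v'}\P[\v^t=\v']\sum_{\path\in\mathscr P_{\v'}:e\in\path}\P[\path^t=\path\mid\v^t=\v']=q^t(e)$. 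Substituting gives $\omega^t_\v(e)$. The point to stress is that the estimator for context $\v$ uses the path drawn under whichever context $\v'$ was realized. This is legitimate because $\omega^t_\v(e)$ is computable from the observed allocation. That computability is exactly the complete cross-learning identity: edges on $\path^t$ reveal $\ind{b_\l\ge\ordotherbid{t}{\l}}$, since the allocation $x(\allbids^t)$ determines exactly which of the bidder's sorted bids won. I expect stating this observability precisely to be the main subtlety; the algebra itself is trivial.

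\textbf{Second moment.} Since $\ind{\cdot}^2=\ind{\cdot}$,
\[
\E[\widehat{\omega}^t_\v(e)^2\mid\mathcal H_{t-1}]=\frac{\omega^t_\v(e)^2}{q^t(e)^2}\,q^t(e)=\frac{\omega^t_\v(e)^2}{q^t(e)}\le\frac{1}{q^t(e)},
\]
using $\omega^t_\v(e)\le 1$. Edges from layer $\maxbid$ to $d$ have weight $0$, so all claims hold for them trivially.
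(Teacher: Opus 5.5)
Your proposal is correct and follows the paper's proof step for step: $q^t(e)>0$ from a positive-probability context, an $s$--$d$ path through $e$, and strictly positive (softmax) path probabilities; then nonnegativity from $\omega^t_\v(e)\in[0,1]$, unbiasedness from $\E[\ind{e\in\path^t}]=q^t(e)$, and the second-moment bound from $\omega^t_\v(e)\le 1$. Your remarks on observability and pruning make explicit points the paper leaves implicit. One correction on pruning: the edges that must be removed are those leaving nodes unreachable from $s$ (e.g., a layer-1 node with $s_1\neq b_1$), whereas completing with zero bids only shows that every node can reach $d$.
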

\begin{proof}
Fix any $\v\in\mathcal{V}$, $e\in\mathsf{E}_\v$ and $t\in[T]$.
\begin{enumerate}
    \item Without loss of generality, assume $\P[\v^t=\v]>0$.\footnote{If there exists context $\v\in\mathcal{V}$ such that $\P[\v^t=\v]=0,$ consider $\mathcal{V}'=\mathcal{V}\setminus\{\v\}$.} This implies that $q^t(e)>0$, since every $e\in\mathsf{E}_\v$ lies on some path $\path\in\mathscr{P}_\v$ and every $\path$ satisfies $\P[\path^1=\path| \v^1=\v]>0$. The edge probabilities updates in \cref{alg:weight-pushing-stoc} ensure that $\P[\path^t=\path|\v^t=\v]>0$ for all subsequent rounds $t$. Hence, $\widehat{\omega}^t_\v(e)$ is well defined. 
    \item The result follows directly from the definition in \cref{eq:w-hat-estimate-exp3}.
    \item Since
    \begin{align*}
\E\left[\ind{e\in\path^t}\right]=\sum_{\v\in\mathcal{V}}\P[\v^t=\v]\sum_{\path\in\mathscr{P}_\v:e\in\path}\P[\path^t=\path|\v^t=\v]=q^t(e), 
    \end{align*}
    we get that $\E[\widehat{\omega}^t_\v(e)]={\omega}^t_\v(e)$.
    \item By definition, 
    \begin{align}\label{eq:omega-hat-UB}
  \E[\widehat{\omega}^t_\v(e)^2] &\stackrel{\eqref{eq:w-hat-estimate-exp3}}{=} \frac{\omega^t_\v(e)^2}{q^t(e)^2}\cdot\E\left[\ind{e\in\path^t}\right] \leq \frac{1}{q^t(e)},
\end{align}
where the first inequality follows as $0\leq \omega^t_\v(e)\leq 1$~(see \cref{eq:edge-weight-online}).
\end{enumerate}
\end{proof}

\noindent Following the proof of \cref{thm:regret-safe} up to \cref{eq:regret-stoc-in-path},\footnote{The analysis of \cref{thm:regret-safe} up to \cref{eq:regret-stoc-in-path} only requires $\widehat{\omega}^t_\v(\cdot)\ge 0$ which is ensured by \cref{lem:properties-bandit-known} (2).} for any stationary policy $\widetilde{\pi} \in \widetilde{\safepolicy}$, we have
\begin{align}\label{eq:bandit-UB-intermediate}
&\sum_{t=1}^T\sum_{\path\in\mathscr{P}_\v}\P[\path^t=\path|\v^t=\v]\widehat{\omega}^t_\v(\path) - \sum_{t=1}^T\widehat{\omega}^t_\v(\widetilde{\pi}(\v))\nonumber\\
&\lesssim \frac{\maxbid\log 1/\epsilon}{\eta_T} + \sum_{t=1}^T\sum_{\path\in\mathscr{P}_\v}\P[\path^t=\path|\v^t=\v]\eta_t\widehat{\omega}^t_\v(\path)^2\nonumber\\
&\leq \frac{\maxbid\log 1/\epsilon}{\eta_T} + \maxbid \sum_{t=1}^T\eta_t\sum_{\path\in\mathscr{P}_\v}\P[\path^t=\path|\v^t=\v]\sum_{e\in\path}\widehat{\omega}^t_\v(e)^2,
\end{align}
where the last line follows due to Cauchy-Schwarz inequality.
 
Taking expectations with respect to the randomness of contexts and \cref{alg:weight-pushing-stoc} conditioned on the history and using \cref{eq:omega-hat-UB}, we have 
\begin{align*}
  \sum_{t=1}^T\sum_{\path\in\mathscr{P}_\v}\P[\path^t=\path|\v^t=\v]{\omega}^t_\v(\path) - \sum_{t=1}^T{\omega}^t_\v(\widetilde{\pi}(\v))&\lesssim \frac{\maxbid\log 1/\epsilon}{\eta_T} + \maxbid \sum_{t=1}^T\eta_t\sum_{\path\in\mathscr{P}_\v}\P[\path^t=\path|\v^t=\v]\sum_{e\in\path}\frac{1}{q^t(e)}\,. 
\end{align*}
Relating the path weights in the DAG to the utility using \cref{eq:path-weight-to-utility} and leveraging the bijective mapping between policies in $\widetilde{\safepolicy}$ and $\safepolicy$, the left hand side becomes
\[
\sum_{t=1}^T\ut_{\v}(\pi(\v); \otherbid{t})- \sum_{t=1}^T\sum_{\ibid\in\bidclass_\v}\P[\ibid^t=\ibid|\v^t=\v]\ut_{\v}(\ibid; \otherbid{t})\,.
\]

Taking expectations with respect to the contexts and the history, we get
\begin{align*}
   \mathsf{Reg}_{nb}(T) &=\sum_{t=1}^T\E[\ut_{\v^t}(\pi(\v^t); \otherbid{t})]- \sum_{t=1}^T\E[\ut_{\v^t}(\allbids^t)]\\
   &\lesssim\frac{\maxbid \log 1/\epsilon}{\eta_T} + \maxbid\E\left[\sum_{t=1}^T\eta_t\sum_{\v\in\mathcal{V}}\P[\v^t=\v]\sum_{e\in \mathsf{E}_\v}\frac{1}{q^t(e)}\sum_{\path\in\mathscr{P}_\v:e\in\path}\P[\path^t=\path|\v^t=\v]\right]
\end{align*}

Define $\mathsf{E}:=\bigcup_{\v\in\mathcal{V}}\mathsf{E}_\v$. This implies
\begin{align*}
\mathsf{Reg}_{nb}(T)
&\lesssim \frac{\maxbid \log 1/\epsilon}{\eta_T} + \maxbid\E\left[\sum_{t=1}^T\eta_t\sum_{e\in\mathsf{E}}\frac{1}{q^t(e)}\sum_{\v\in\mathcal{V}}\P[\v^t=\v]\sum_{\path\in\mathscr{P}_\v:e\in\path}\P[\path^t=\path|\v^t=\v]\right]\\
&\stackrel{\eqref{eq:def-q}}{=}\frac{\maxbid \log 1/\epsilon}{\eta_T} + \maxbid\E\left[\sum_{t=1}^T\eta_t\sum_{e\in \mathsf{E}}\frac{1}{q^t(e)}\cdot q^t(e)\right]\\
   &= \frac{\maxbid \log 1/\epsilon}{\eta_T} + \maxbid |\mathsf{E}|\sum_{t=1}^T\eta_t\,.
\end{align*}

To bound $|\mathsf E|$, recall the super DAG $\mathcal G(\overline{\mathsf N},\overline{\mathsf E})$ constructed in \cref{sec:infinite} and that $\mathsf E_\v\subseteq \overline{\mathsf E}$ for all $\v\in\mathcal V$. Hence, $\mathsf E= \bigcup_{\v\in\mathcal V}\mathsf E_\v \subseteq \overline{\mathsf E}$. Moreover, $|\overline{\mathsf E}|=O\!\left(\frac{\maxbid^2}{\epsilon^3}\right)$. Since, $|\mathsf E|\le |\overline{\mathsf E}|$, we have

\begin{align*}
    \mathsf{Reg}_{nb}(T)\lesssim\frac{\maxbid \log 1/\epsilon}{\eta_T} + \frac{\maxbid^3}{\epsilon^3}\sum_{t=1}^T\eta_t\,.
\end{align*}

Setting $\eta_t=\frac{1}{\maxbid}\sqrt{\frac{\epsilon^3\log 1/\epsilon}{t}}$, and using the fact that $\sum_{t=1}^T\frac{1}{\sqrt{t}}\leq 2\sqrt{T}-1$, we get
\begin{align*}
    \mathsf{Reg}_{nb}(T)\lesssim \frac{\maxbid^2}{\epsilon^{3/2}}\sqrt{T\log 1/\epsilon}\,.
\end{align*}
\textbf{Space and Time Complexity.} In this setting, the main bottleneck of implementing \cref{alg:weight-pushing-stoc} is computing $q^t(e)$~(equivalently $\sum_{\path\in\mathscr{P}_\v:e\in\path}\P[\path^t=\path|\v^t=\v]$ for each $\v\in\mathcal{V}$) per \cref{eq:def-q}. For a fixed $\v\in\mathcal{V}$ and $e\in\mathsf{E}_\v$, 
\begin{align}\label{eq:sum-prod}
   \sum_{\path\in\mathscr{P}_\v:e\in\path}\P[\path^t=\path|\v^t=\v] \stackrel{\eqref{eq:P-markovian}}{=}  \sum_{\path\in\mathscr{P}_\v:e\in\path}\prod_{e'\in\path}\phi^t_\v(e')
\end{align}
Leveraging \citet[Theorem 4.4]{golrezaei2025learning}, for any $\v\in\mathcal{V}$, \cref{eq:sum-prod} can be computed in $O(|\mathsf{E}_\v|)$ time and space per round. Hence, \cref{alg:weight-pushing-stoc} runs in $O(|\mathcal{V}|\cdot\max_{\v\in\mathcal{V}}|\mathsf{E}_\v|)=O\Big(\frac{ |\mathcal{V}|\maxbid^2}{\epsilon^3}\Big)$ time and space per round.

\subsection{Proof of \cref{thm:regret-safe-bandit-unk}}\label{apx:thm:regret-safe-bandit-unk}
We first present the learning algorithm in the bandit setting with unknown context distribution.

\begin{algorithm}[!tbh]
\caption{\small No Budget Constraints~(Bandit Feedback, Unknown Distribution)}
\label{alg:weight-pushing-bandit-unk}
\small{
\begin{algorithmic}[1]
\Require Set of valuation vectors $\mathcal V$, learning rates $\eta_t>0$, exploration parameter $\delta\in(0,1]$, and edge path cover $\mathcal C$ of the super DAG.
Define $\eta_0=1$, $\phi^0_\v(e)=1$, and $\widehat\omega^0_\v(e)=0$, $\forall e\in{\overline{\mathsf E}}$.
\For{$t=1,2,\dots$}
    \State Observe an i.i.d. valuation vector sample $\v^t\sim\mathcal D$.
    \State Construct $\DAG{t}$ without weights $\forall\v\in\mathcal{V}$.\label{line:map-unk-start}
    \State Sample $Z_t\sim\mathrm{Unif}[0,1]$.
    \For{$\v\in\mathcal V$}
        \State Obtain edge probabilities $\phi^t_\v(\cdot)$ following \cref{eq:gamma-update} and \cref{eq:phi-update}, with $\omega^{t-1}_\v(\cdot)$ replaced by $-\widehat\omega^{t-1}_\v(\cdot)$, where $\widehat\omega^{t-1}_\v(\cdot)$ is defined in \cref{eq:w-hat-estimate-exp3-unk}.
    \EndFor
    \If{$Z_t\le\delta$}
        \State Select path $\path^t$ uniformly at random from $\mathcal C$.
    \Else
        \State Define initial node $u=s$ and path $\path^t=s$.
        \While{$u\neq d$}
            \State Sample $v$ with probability $\phi^t_{\v^t}(u\to v)$.
            \State Append $v$ to the path $\path^t$; set $u\gets v$.
        \EndWhile
    \EndIf
    \State Map $\path^t=s\to(1,b_1,s_1)\to\dots\to(\maxbid,b_\maxbid,s_\maxbid)\to d$, and submit $\ibid^t=[b_1,\dots,b_\maxbid]$.\label{line:map-unk-end}
    \State Set estimated edge gains per \cref{eq:w-hat-estimate-exp3-unk} for all $\v\in\mathcal V$ and $e\in{\overline{\mathsf E}}$.
\EndFor
\end{algorithmic}}
\end{algorithm}

\noindent Recall that for any $\v\in\mathcal{V}$, $e\in\overline{\mathsf{E}}$ and $t\in[T]$, if $\v^t=\v'$, the estimator is given as 
\begin{align}
    \widehat{\omega}^t_\v(e) &=
        \frac{\omega^t_\v(e)}{p^t_{\v'}(e)}\cdot\ind{e\in\path^t},\tag{\eqref{eq:w-hat-estimate-exp3-unk} restated}
\end{align}
where $\omega^t_\v(e)$ is per \cref{eq:edge-weight-online-unk}, and $p^t_{\v'}(e)=\P[e\in\path^t| \v^t=\v']$ is the edge marginal under the mixture distribution in \cref{alg:weight-pushing-bandit-unk}.

\begin{lemma}\label{lem:properties-bandit-unknown}
For any $\v\in\mathcal V$, $e\in{\overline{\mathsf{E}}}$, and $t\in[T]$,
\begin{enumerate}
    \item $\widehat{\omega}^t_\v(e)$ is well-defined;
    \item $\widehat{\omega}^t_\v(e)\in[0,|\overline{\mathsf E}|/\delta]$;
    \item $\E[\widehat{\omega}^t_\v(e)]=\omega^t_\v(e)$;
    \item $\E[(\widehat{\omega}^t_\v(e))^2]\le |\overline{\mathsf E}|/\delta$.
\end{enumerate}
Here, $\overline{\mathsf E}$ is the edge set of the super DAG
$\overline{\mathcal G}=(\overline{\mathsf N},\overline{\mathsf E})$ constructed in \cref{sec:learning-safe}, and $\E[\cdot]$ denotes expectation over the randomness of the context and the learning algorithm in round $t$, conditional on the history before round $t$.
\end{lemma}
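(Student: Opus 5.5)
The plan is to verify the four properties in order, conditioning throughout on the history before round $t$. Under that conditioning, the exponential-weights edge probabilities $\phi^t_\v(\cdot)$ for every context are fixed. The only randomness in round $t$ comes from the context $\v^t$, the coin $Z_t$, and the path sampling. The key structural fact is a lower bound on the edge marginal. Suppose $\v^t=\v'$. With probability $\delta$ the path is drawn uniformly from $\mathcal C$, and every $e\in\overline{\mathsf E}$ lies on at least one path of $\mathcal C$. Hence
\[
p^t_{\v'}(e)\ \ge\ \delta\cdot\frac{1}{|\mathcal C|}\ \ge\ \frac{\delta}{|\overline{\mathsf E}|},
\]
using $|\mathcal C|\le|\overline{\mathsf E}|$. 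This gives item 1 immediately, since the denominator in \cref{eq:w-hat-estimate-exp3-unk} is strictly positive for every realized $\v'$ and every $e\in\overline{\mathsf E}$. Notably, this holds even for edges outside $\mathsf E_{\v'}$, which is exactly why the cover of the super DAG is used.

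For item 2, I use that $\omega^t_\v(e)\in[0,1]$ by \cref{eq:edge-weight-online-unk}. This holds because $\alpha+(v_\ell-\alpha b_\ell)\ind{\cdot}\ge \alpha-\alpha b_\ell\ge0$ and is at most $\alpha+1$. Combined with the indicator being at most $1$ and the marginal lower bound, this gives $0\le\widehat\omega^t_\v(e)\le|\overline{\mathsf E}|/\delta$. One subtlety needs a remark. Under cover exploration the submitted bid may violate NOB for $\v'$, or for $\v$, so the edge may not belong to $\mathsf E_\v$. However, $\omega^t_\v(e)$ is still given by the same formula with $b_\ell\in[0,1]$, so the bounds are unaffected.

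For item 3, I condition on $\v^t=\v'$ first. Then $\E[\ind{e\in\path^t}\mid \v^t=\v']=p^t_{\v'}(e)$ by definition of the mixture edge marginal. The quantity $\omega^t_\v(e)$ is deterministic given the history, because the competing bids are adversarial and fixed, and it does not depend on $\v'$. So $\E[\widehat\omega^t_\v(e)\mid\v^t=\v']=\omega^t_\v(e)$. Averaging over $\v'\sim\mathcal D$ yields the claim. This is where the choice of conditioning on the realized context in the denominator pays off: no knowledge of $\mathcal D$ is needed. The one point to state carefully is that $p^t_{\v'}(e)$ is itself a function of $\v'$ and the history only, not of the path sampled in round $t$, so it can be pulled out of the conditional expectation.

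For item 4, conditional on $\v^t=\v'$, I compute
\[
\E\big[\widehat\omega^t_\v(e)^2\mid \v^t=\v'\big]=\frac{\omega^t_\v(e)^2}{p^t_{\v'}(e)}\le\frac{1}{p^t_{\v'}(e)}\le\frac{|\overline{\mathsf E}|}{\delta}.
\]
I then average over $\v'$. The only real obstacle is the bookkeeping in item 3, namely making sure the estimator's normalization matches the conditional law of $\path^t$ given the realized context, including the exploration branch. Beyond that, all steps are direct.
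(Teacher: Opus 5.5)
Your proposal is correct and matches the paper's proof step for step. The cover-based bound $p^t_{\v'}(e)\ge \delta/|\mathcal C|\ge \delta/|\overline{\mathsf E}|$, together with $\omega^t_\v(e)\in[0,1]$, gives items 1 and 2, and conditioning on $\v^t=\v'$ before averaging over the context gives unbiasedness and the bound $\omega^t_\v(e)^2/p^t_{\v'}(e)\le |\overline{\mathsf E}|/\delta$ (your explicit check that $\omega^t_\v(e)\in[0,1]$ and the remark on edges outside $\mathsf E_{\v'}$ are small clarifications the paper leaves implicit).
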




\begin{proof}
Following the proof of \cref{thm:regret-safe}, Lines 6 and 10--13 of
\cref{alg:weight-pushing-bandit-unk} implement Decreasing Hedge over
$s$-$d$ paths in the context-dependent DAG $\xDAG{t}{\v'}$. In particular, conditional on
$\v^t=\v'$, the exponential-weights component selects path $\path$ with
probability
\begin{align}\label{eq:hedge-unk}
    \widetilde{\P}[\path^t=\path\mid \v^t=\v']
    =
    \frac{
        \exp\!\left(\eta_t\sum_{s=1}^{t-1}\widehat{\omega}^{s}_{\v'}(\path)\right)
    }{
        \sum_{\path'\in{\mathscr P_{\v'}}}
        \exp\!\left(\eta_t\sum_{s=1}^{t-1}\widehat{\omega}^{s}_{\v'}(\path')\right)
    } .
\end{align}
We extend $\widetilde{\P}[\cdot\mid \v^t=\v']$ to $\overline{\mathscr P}$ by setting it equal to zero outside $\mathscr P_{\v'}$, where $\overline{\mathscr P}$ is the set of $s$-$d$ paths in the super DAG. Therefore, under the mixture distribution used by
\cref{alg:weight-pushing-bandit-unk},
\begin{align}\label{eq:def-P}
    \P[\path^t=\path\mid \v^t=\v']
    =
    (1-\delta)\widetilde{\P}[\path^t=\path\mid \v^t=\v']
    +
    \delta\cdot
    \frac{\textsc{Count}[\path\in\mathcal C]}{|\mathcal C|},
\end{align}
where $\textsc{Count}[\path\in\mathcal C]$ is the multiplicity of
$\path$ in the edge path cover $\mathcal C$.

\begin{enumerate}
\item
Fix any $\v'\in\mathcal V$. For every $e\in{\overline{\mathsf{E}}}$,
\begin{align}\label{eq:min-prob-LB}
    p^t_{\v'}(e)
    &=
    \sum_{\path\in\overline{\mathscr P}:e\in\path}
    \P[\path^t=\path\mid \v^t=\v'] \ge
    \delta
    \sum_{\path\in\overline{\mathscr P}:e\in\path}
    \frac{\textsc{Count}[\path\in\mathcal C]}{|\mathcal C|}
    \ge
    \frac{\delta}{|\mathcal C|}
    \ge
    \frac{\delta}{|\overline{\mathsf E}|}.
\end{align}
The second inequality uses that $\mathcal C$ covers every edge, and the
last inequality uses $|\mathcal C|\le |\overline{\mathsf E}|$. Hence
$p^t_{\v'}(e)>0$, so \cref{eq:w-hat-estimate-exp3-unk} is well-defined.

\item
Since $\omega^t_\v(e)\in[0,1]$ by
\cref{eq:edge-weight-online-unk}, we have
$\widehat{\omega}^t_\v(e)\ge0$. Moreover, by
\cref{eq:min-prob-LB},
\[
    \widehat{\omega}^t_\v(e)
    =
    \frac{\omega^t_\v(e)}{p^t_{\v'}(e)}
    \ind{e\in\path^t}
    \le
    \frac{1}{p^t_{\v'}(e)}
    \le
    \frac{|\overline{\mathsf E}|}{\delta}.
\]

\item
Condition on $\v^t=\v'$ and on the pre-sampling history in round $t$.
Then
\begin{align*}
    \E\!\left[\widehat{\omega}^t_\v(e)\mid \v^t=\v'\right]
    &=
    \frac{\omega^t_\v(e)}{p^t_{\v'}(e)}
    \E\!\left[\ind{e\in\path^t}\mid \v^t=\v'\right] =
    \frac{\omega^t_\v(e)}{p^t_{\v'}(e)}
    p^t_{\v'}(e)
    =
    \omega^t_\v(e).
\end{align*}
Taking expectation over $\v^t$ gives
$\E[\widehat{\omega}^t_\v(e)]=\omega^t_\v(e)$.

\item
Again conditioning on $\v^t=\v'$,
\begin{align*}
    \E\!\left[(\widehat{\omega}^t_\v(e))^2\mid \v^t=\v'\right]
    &=
    \frac{(\omega^t_\v(e))^2}{(p^t_{\v'}(e))^2}
    \E\!\left[\ind{e\in\path^t}\mid \v^t=\v'\right]=
    \frac{(\omega^t_\v(e))^2}{p^t_{\v'}(e)}
    \le
    \frac{1}{p^t_{\v'}(e)}
    \le
    \frac{|\overline{\mathsf E}|}{\delta},
\end{align*}
where the first inequality uses $\omega^t_\v(e)\in[0,1]$ and the last
uses \cref{eq:min-prob-LB}. Taking expectation over $\v^t$ gives
$\E[(\widehat{\omega}^t_\v(e))^2]\le |\overline{\mathsf E}|/\delta$.
\end{enumerate}
\end{proof}

We now prove the bound on the number of overbidding rounds. In any round $t$, the learner samples from the path set $\mathscr P_{\v^t}$ with probability $1-\delta$. On this event, the submitted bid lies in $\mathcal B_{\v^t}$. Therefore, overbidding can occur only when the learner samples from the edge path cover $\mathcal C$, which happens with probability $\delta$. Let
\[
    Z_t:=\ind{\text{the learner samples from } \mathcal C \text{ in round }t}.
\]
Then
\[
    \ind{\ibid^t\notin\mathcal B_{\v^t}}\le Z_t,
    \qquad
    \mathbb E[Z_t]=\delta.
\]
Hence
\[
    \mathbb E\left[
    \#\{t\in[T]:\ibid^t\notin\mathcal B_{\v^t}\}
    \right]
    \le
    \sum_{t=1}^T\mathbb E[Z_t]
    =
    \delta T.
\]
Since $Z_1,\dots,Z_T$ are independent Bernoulli random variables with mean
$\delta$, Hoeffding's inequality implies that with probability at least
$1-\zeta$,
\[
    \sum_{t=1}^T Z_t
    \le
    \delta T+\sqrt{\frac{T\log(1/\zeta)}{2}}.
\]
This proves the claimed high-probability bound on overbidding rounds.

\noindent\textbf{Regret Analysis.} For any $\v\in\mathcal{V}$ and $\eta>0$, $\delta\in(0, 1]$ and $t\in [T]$, let
\begin{align*}
    \Phi^t_\v(\eta):=\frac{1}{\eta}\log\left(\frac{1}{|\mathscr{P}_{\v}|}\sum_{\path\in\mathscr{P}_{\v}}\exp\left(\eta \sum_{s=1}^t\widehat{\omega}^s_\v(\path)\right)\right)\,.
\end{align*}
So, for some $\eta_t\leq \frac{\delta}{\maxbid|\overline{\mathsf{E}}|}$ and $\delta\in(0, 1]$,
\begin{align*}
    &\Phi^t_\v(\eta_t) - \Phi^{t-1}_\v(\eta_t) \\
    &= \frac{1}{\eta_t}\log\left(\frac{\sum_{\path\in\mathscr{P}_{\v}}\exp(\eta_t \sum_{s=1}^{t-1}\widehat{\omega}^s_\v(\path))\cdot\exp(\eta_t\widehat{\omega}^t_\v(\path))}{\sum_{\path'\in\mathscr{P}_{\v}}\exp(\eta_t \sum_{s=1}^{t-1}\widehat{\omega}^s_\v(\path'))}\right)\\
    &\stackrel{\eqref{eq:hedge-unk}}{=}\frac{1}{\eta_t}\log\left(\sum_{\path\in\mathscr{P}_{\v}}\widetilde{\P}[\path^t=\path|\v^t=\v]\cdot\exp(\eta_t\widehat{\omega}^t_\v(\path))\right)\\
    &\stackrel{(i)}{\leq} \frac{1}{\eta_t}\log\left(\sum_{\path\in\mathscr{P}_{\v}}\widetilde{\P}[\path^t=\path|\v^t=\v]\cdot(1+\eta_t\widehat{\omega}^t_\v(\path)+\eta_t^2\widehat{\omega}^t_\v(\path)^2)\right)\\
    &=\frac{1}{\eta_t}\log\left(1+ \sum_{\path\in\mathscr{P}_{\v}}\widetilde{\P}[\path^t=\path|\v^t=\v]\cdot(\eta_t\widehat{\omega}^t_\v(\path)+\eta_t^2\widehat{\omega}^t_\v(\path)^2)\right)\\
    &\stackrel{(ii)}{\leq} \frac{1}{\eta_t}\log\left(1+\frac{\eta_t}{1-\delta}\sum_{\path\in\overline{\mathscr{P}}}\P[\path^t=\path|\v^t=\v]\widehat{\omega}^t_\v(\path)+\frac{\eta_t^2}{1-\delta}\sum_{\path\in\overline{\mathscr{P}}}\P[\path^t=\path|\v^t=\v]\widehat{\omega}^t_\v(\path)^2\right)\\
    &\stackrel{(iii)}{\leq} \frac{1}{1-\delta}\left(\sum_{\path\in\overline{\mathscr{P}}}\P[\path^t=\path|\v^t=\v]\widehat{\omega}^t_\v(\path)+\eta_t\sum_{\path\in\overline{\mathscr{P}}}\P[\path^t=\path|\v^t=\v]\widehat{\omega}^t_\v(\path)^2\right)\,.
\end{align*}
Here, $(i)$ uses $e^x\leq 1+x+x^2$ for $x\leq1$ because for $\eta_t\leq \frac{\delta}{\maxbid|\overline{\mathsf{E}}|}$, any $\v\in\mathcal{V}$ and $\path\in\mathscr{P}_{\v}$, 
\begin{align*}
   \eta_t\widehat{\omega}^t_\v(\path)=\eta_t\sum_{e\in\path}\widehat{\omega}^t_\v(e)\leq \eta_t\sum_{e\in\path}\frac{|\overline{\mathsf{E}}|}{\delta}\leq \frac{\eta_t\maxbid|\overline{\mathsf{E}}|}{\delta}\leq 1\,.
\end{align*}
where the first inequality follows from \cref{lem:properties-bandit-unknown} (2). The inequality $(ii)$ holds because, for $\path\in\mathscr P_\v$,
\[
  \widetilde{\P}[\path^t=\path|\v^t=\v]
  =
  \frac{\P[\path^t=\path|\v^t=\v]
  -\delta\cdot\textsc{Count}[\path\in\mathcal C]/|\mathcal C|}
  {1-\delta}
  \le
  \frac{\P[\path^t=\path|\v^t=\v]}{1-\delta}.
\]
Since all estimated gains are nonnegative, we may enlarge the resulting sums
from $\mathscr P_\v$ to $\overline{\mathscr P}$,
and $(iii)$ is true as $\log(1+x)\leq x, \forall x>-1$. Summing over $t=1, \dots, T$, the left hand side becomes
\begin{align*}
  \sum_{t=1}^T\left(\Phi^t_\v(\eta_t) - \Phi^{t-1}_\v(\eta_t) \right) &=  \Phi^T_\v(\eta_T) - \Phi^{0}_\v(\eta_{1})+ \sum_{t=1}^{T-1}\left(\Phi^t_\v(\eta_t) - \Phi^{t}_\v(\eta_{t+1}) \right).
\end{align*}

By definition, $\Phi^0_\v(\eta_1)=0$. Recall the definition of the policy class $\widetilde{\safepolicy}$ from \cref{eq:tilde-policy-class}. For any $\widetilde{\pi}\in\widetilde{\safepolicy}$,
\begin{align*}
   \Phi^T_\v(\eta_T) &= \frac{1}{\eta_T}\log\left(\frac{1}{|\mathscr{P}_{\v}|}\sum_{\path\in\mathscr{P}_{\v}}\exp\left(\eta_T \sum_{s=1}^T\widehat{\omega}^s_\v(\path)\right)\right)\\
   &\geq \frac{1}{\eta_T}\log\left(\frac{1}{|\mathscr{P}_{\v}|}\cdot\exp\left(\eta_T \sum_{s=1}^T\widehat{\omega}^s_\v(\widetilde{\pi}(\v))\right)\right)\\
   &=-\frac{\log |\mathscr{P}_{\v}|}{\eta_T} + \sum_{t=1}^T\widehat{\omega}^t_\v(\widetilde{\pi}(\v))\,.\numberthis \label{eq:part1}
\end{align*}
By \cref{cl:non-decreasing-func}, $\Phi^t_\v(\cdot)$ is a non-decreasing function which implies that $\Phi^t_\v(\eta_t)\geq \Phi^t_\v(\eta_{t+1})$ for $\eta_t\geq \eta_{t+1}$.\footnote{Observe that the proof of \cref{cl:non-decreasing-func} does not rely on the sign of the coefficient of $\eta$ in the exponential.} Hence, 
\begin{align*}
    \sum_{t=1}^T\left(\Phi^t_\v(\eta_t) - \Phi^{t-1}_\v(\eta_t) \right) \stackrel{\eqref{eq:part1}}{\geq}-\frac{\log |\mathscr{P}_{\v}|}{\eta_T} + \sum_{t=1}^T\widehat{\omega}^t_\v(\widetilde{\pi}(\v))\,.
\end{align*}
Thus,
\begin{align*}
  -&\frac{\log |\mathscr{P}_{\v}|}{\eta_T}
  + \sum_{t=1}^T\widehat{\omega}^t_\v(\widetilde{\pi}(\v)) \\
  &\leq
  \frac{1}{1-\delta}
  \left(
  \sum_{t=1}^T\sum_{\path\in\overline{\mathscr{P}}}
  \P[\path^t=\path\mid\v^t=\v]\widehat{\omega}^t_\v(\path)
  +
  \sum_{t=1}^T\eta_t
  \sum_{\path\in\overline{\mathscr{P}}}
  \P[\path^t=\path\mid\v^t=\v]\widehat{\omega}^t_\v(\path)^2
  \right)\\
  &\leq
  \frac{1}{1-\delta}
  \left(
  \sum_{t=1}^T\sum_{\path\in\overline{\mathscr{P}}}
  \P[\path^t=\path\mid\v^t=\v]\widehat{\omega}^t_\v(\path)
  +
  \maxbid\sum_{t=1}^T\eta_t
  \sum_{\path\in\overline{\mathscr{P}}}
  \P[\path^t=\path\mid\v^t=\v]
  \sum_{e\in\path}(\widehat{\omega}^t_\v(e))^2
  \right),
\end{align*}
where the last inequality follows from Cauchy-Schwarz:
\(
    \widehat{\omega}^t_\v(\path)^2
    =
    \left(\sum_{e\in\path}\widehat{\omega}^t_\v(e)\right)^2
    \le
    \maxbid\sum_{e\in\path}(\widehat{\omega}^t_\v(e))^2 .
\) 
Taking conditional expectations and using \cref{lem:properties-bandit-unknown}, we obtain
\begin{align*}
  -&\frac{\log |\mathscr{P}_{\v}|}{\eta_T}
  + \sum_{t=1}^T \omega^t_\v(\widetilde{\pi}(\v))\\
  &\leq
  \frac{1}{1-\delta}
  \left(
  \sum_{t=1}^T\sum_{\path\in\overline{\mathscr{P}}}
  \P[\path^t=\path\mid\v^t=\v]\omega^t_\v(\path)
  +
  \maxbid\sum_{t=1}^T\eta_t
  \sum_{\path\in\overline{\mathscr{P}}}
  \P[\path^t=\path\mid\v^t=\v]
  \sum_{e\in\path}
  \frac{|\overline{\mathsf E}|}{\delta}
  \right)\\
  &\leq
  \frac{1}{1-\delta}
  \left(
  \sum_{t=1}^T\sum_{\path\in\overline{\mathscr{P}}}
  \P[\path^t=\path\mid\v^t=\v]\omega^t_\v(\path)
  +
  \frac{\maxbid^2|\overline{\mathsf E}|}{\delta}
  \sum_{t=1}^T\eta_t
  \right).
\end{align*}
Multiplying both sides by $(1-\delta)$, rearranging, and using
$\omega^t_\v(\widetilde{\pi}(\v))\le \maxbid$, we get
\begin{align*}
  \sum_{t=1}^T{\omega}^t_\v(\widetilde{\pi}(\v))
  -\sum_{t=1}^T\sum_{\path\in\overline{\mathscr{P}}}
  \P[\path^t=\path\mid\v^t=\v]{\omega}^t_\v(\path)
  &\le
  \frac{\log |\mathscr{P}_{\v}|}{\eta_T}
  + \frac{\maxbid^2|\overline{\mathsf E}|}{\delta}\sum_{t=1}^T\eta_t
  +\maxbid T\delta .
\end{align*}
Recall that for edge
$e=(\ell-1,b_{\ell-1},s_{\ell-1})\to(\ell,b_\ell,s_\ell)$ in layer
$\ell\in[\maxbid]$,
\begin{align}
    \omega^t_\v(e)
    &=
    \frac{\alpha+(v_\ell-\alpha b_\ell)\ind{b_\ell\geq \ordotherbid{t}{\ell}}}{1+\alpha}.
    \tag{\eqref{eq:edge-weight-online-unk} restated}
\end{align}
Thus, for the bid vector $\ibid\in\bidclass$ corresponding to $\path\in\overline{\mathscr{P}}$,
\begin{align*}
    \omega^t_\v(\path)
    =
    \sum_{e\in\path}\omega^t_\v(e)
    \stackrel{\eqref{eq:utility-decomp}}{=}
    \frac{\maxbid\alpha+\ut_\v(\ibid;\otherbid{t})}{1+\alpha}.
\end{align*}
Therefore, for any $\pi\in\safepolicy$,
\begin{align*}
    \sum_{t=1}^T\ut_{\v}(\pi(\v);\otherbid{t})
    -
    \sum_{t=1}^T\sum_{\ibid\in\bidclass}
    \P[\ibid^t=\ibid\mid\v^t=\v]\ut_{\v}(\ibid;\otherbid{t})
    &\le
    (1+\alpha)
    \left(
    \frac{\log |\mathscr{P}_{\v}|}{\eta_T}
    + \frac{\maxbid^2|\overline{\mathsf E}|}{\delta}\sum_{t=1}^T\eta_t
    +\maxbid T\delta
    \right)\\
    &\lesssim
    \frac{\maxbid\log(1/\epsilon)}{\eta_T}
    + \frac{\maxbid^2|\overline{\mathsf E}|}{\delta}\sum_{t=1}^T\eta_t
    +\maxbid T\delta,
\end{align*}
where the last line uses $\alpha\le1$ and $\log|\mathscr{P}_{\v}|\lesssim \maxbid\log(1/\epsilon)$.
Taking expectations with respect to the contexts and the history, and maximizing over the policies in $\safepolicy$, we have
\begin{align*}
    \mathsf{Reg}_{nb}(T) &=\max_{\pi\in\safepolicy}\sum_{t=1}^T\E[\ut_{\v^t}(\pi(\v^t); \otherbid{t})]- \sum_{t=1}^T\E[\ut_{\v^t}(\allbids^t)]\lesssim \frac{\maxbid\log 1/\epsilon}{\eta_T}+ \frac{\maxbid^2|\overline{\mathsf{E}}|}{\delta}\sum_{t=1}^T\eta_t+\maxbid T\delta\,.
\end{align*}
Suppose for all $t\geq1$, $\eta_t=\eta=\frac{\delta^2}{\maxbid|\overline{\mathsf{E}}|}$. Then, for $T\geq T_0=:\maxbid |\overline{\mathsf{E}}|\log 1/\epsilon$, set
\begin{align*}
    \delta = \left(\frac{\maxbid |\overline{\mathsf{E}}|\log 1/\epsilon}{T}\right)^{1/3}.
\end{align*}
Note that $\eta_t\leq \frac{\delta}{\maxbid|\overline{\mathsf{E}}|}$ since $\delta\leq 1$. Recall from \cref{sec:learning-safe} that $|\overline{\mathsf{E}}|=O(\maxbid^2/\epsilon^3)$. Hence, for $T\geq T_0$,
\begin{align*}
    \mathsf{Reg}_{nb}(T) &\lesssim \frac{\maxbid^2T^{2/3}(\log 1/\epsilon)^{1/3}}{\epsilon}\,.
\end{align*}
Since regret in any round is bounded by $O(\maxbid)$, the regret in the first $T_0$ rounds is $\lesssim\maxbid T_0$. Hence, 
\begin{align*}
    \mathsf{Reg}_{nb}(T) &\lesssim \frac{\maxbid^2T^{2/3}(\log 1/\epsilon)^{1/3}}{\epsilon}+\frac{\maxbid^4\log 1/\epsilon}{\epsilon^3}\,.
\end{align*}
\textbf{Space and Time Complexity.} In this setting, the bidder alternates between two modes depending on the mixing parameter $\delta$. If the bidder samples a path from the edge path cover, the per-round time and space complexity is $O(\maxbid^2/\epsilon^3)$. If instead the bidder follows the exponential-weights updates, the complexity matches the bandit setting with known context distribution, namely $O(|\mathcal V|\maxbid^2/\epsilon^3)$. Therefore, the overall per-round time and space complexity is $O(|\mathcal V|\maxbid^2/\epsilon^3)$.

\subsection{Proof of \cref{thm:regret-LB}}
We build upon and generalise the proof of regret lower bound for FPA in \cite{han2024optimal} intended for profit maximizers~($\alpha=1$). We construct a stochastic adversary whose distribution makes it harder for the bidder to determine their optimal bid. 

Concretely, let $\mathcal{V}=\{\v\}$, $K=\maxbid$ and $\v=[1, \dots, 1]\in\R^\maxbid$. Assume ties are always resolved in favor of the bidder in consideration and the cost of capital $\alpha\in(\tfrac{1}{2}, 1]$. Define 
\begin{align}\label{eq:regret-LB-competing-bids}
    \otherbid{\clubsuit} = [c, \dots, c]\quad\text{and}\quad
    \otherbid{\diamondsuit} = [\gamma c, \dots, \gamma c],
\end{align}
where
\[
\gamma > \frac{\alpha}{2\alpha-1}, \qquad c = \frac{1}{\alpha(2\gamma-1)}\,.
\]
It is easy to verify that $\gamma>1$ and
\[
\gamma c = \frac{\gamma}{\alpha(2\gamma-1)} < 1,
\]
since $f(x)=\frac{x}{2x-1}$ is decreasing over $\gamma\in(1, \infty)$. 

Consider the following two scenarios: 

\noindent\textbf{Scenario 1.} In this scenario, for every $t\in[T]$, the competing bids $\otherbid{t}$ are:
\begin{align*}
    \otherbid{t}=\begin{cases}
        \otherbid{\clubsuit}, &\text{ w.p. } \frac{1}{2}+\delta,\\
        \otherbid{\diamondsuit}, &\text{ w.p. } \frac{1}{2}-\delta,\\
    \end{cases}
\end{align*}

\noindent\textbf{Scenario 2.} In this scenario, for every $t\in[T]$, the competing bids $\otherbid{t}$ are:
\begin{align*}
    \otherbid{t}=\begin{cases}
        \otherbid{\clubsuit}, &\text{ w.p. } \frac{1}{2}-\delta,\\
        \otherbid{\diamondsuit}, &\text{ w.p. } \frac{1}{2}+\delta,\\
    \end{cases}
\end{align*}

for some $\delta\in(0, \tfrac{1}{4})$ to be determined shortly. Assume the randomness used in different rounds are independent. Then, for $\delta\in(0, \frac{1}{4})$,
\[
    \textsc{KL}(P||Q) = T\cdot\textsc{KL}(\textsc{Bern}(0.5+\delta)||\textsc{Bern}(0.5-\delta))=2T\delta\log\Big(\frac{1+2\delta}{1-2\delta}\Big)\leq \frac{8T\delta^2}{1-2\delta}\leq 16T\delta^2
\]
where the first inequality follows from $\log(\frac{1+x}{1-x})\leq \frac{2x}{1-x}$. By \citet[Lemma 2.6]{tsybakov2009introduction},
\[
    1-\textsc{TV}(P, Q) \geq \frac{1}{2}\exp{(-\textsc{KL}(P||Q))}\geq \frac{1}{2}\exp{\left(-16T\delta^2\right)}\,.
\]

\noindent Now, we inspect the separation between the two scenarios. Observe that
\begin{align*}
    \max_{\ibid\in\bidclass_{\v}}\E_P[\ut_{\v}(\ibid; \otherbid{t})] &\geq \E_P[\ut_{\v}([c, \dots, c]; \otherbid{t})] = \Big(\frac{1}{2}+\delta\Big)\maxbid(1-\alpha c),\\
    \max_{\ibid\in\bidclass_{\v}}\E_Q[\ut_{\v}(\ibid; \otherbid{t})] &\geq \E_Q[\ut_{\v}([\gamma c, \dots, \gamma c]; \otherbid{t})] = \maxbid(1-\gamma\alpha c).
\end{align*}
Now, we aim to compute $\max_{\ibid\in\bidclass_{\v}}\E_{(P+Q)/2}[\ut_{\v}(\ibid; \otherbid{t})]$. Since $\v=[1, \dots, 1]$,
\begin{align*}
   \max_{\ibid\in\bidclass_{\v}}\E_{(P+Q)/2}[\ut_{\v}(\ibid; \otherbid{t})] &\leq \max_{\ibid: 1\geq b_1\geq\cdots\geq b_\maxbid\geq0}\E_{(P+Q)/2}[\ut_{\v}(\ibid; \otherbid{t})] \\
   &=\frac{1}{2}\Big(\sum_{i=1}^\maxbid (1-\alpha b_i)\cdot(\ind{b_i\geq \gamma c}+\ind{b_i\geq c})\Big)\,.
\end{align*}
Thus, we can only consider threshold strategies of the form \(\ibid_j=[\gamma c,\dots,\gamma c, c, \dots, c]\) that contain $j$ entries of $\gamma c$ and $\maxbid-j$ entries of $c$ for $j\in\{0,\dots,\maxbid\}$. For any $j\in\{0, \dots, \maxbid\}$,
\begin{align*}
    \E_{(P+Q)/2}[\ut_{\v}(\ibid_j; \otherbid{t})] =\frac{1}{2}\left( \maxbid(1-\alpha c) + j(1-\alpha c(2\gamma-1))\right) = \frac{1}{2}\maxbid(1-\alpha c),
\end{align*}
since $\alpha c(2\gamma-1)=1$. Hence, $\max_{\ibid\in\bidclass_{\v}}\E_{(P+Q)/2}[\ut_{\v}(\ibid; \otherbid{t})]\leq \frac{1}{2}\maxbid(1-\alpha c)$. For any $\ibid\in\bidclass_\v$,
\begin{align*}
    &\max_{\ibid^*\in\bidclass_{\v}}\E_{P}[\ut_{\v}(\ibid^*; \otherbid{t})-\ut_{\v}(\ibid; \otherbid{t})] + \max_{\ibid^*\in\bidclass_{\v}}\E_{Q}[\ut_{\v}(\ibid^*; \otherbid{t})-\ut_{\v}(\ibid; \otherbid{t})]\\
    &\geq \max_{\ibid^*\in\bidclass_{\v}}\E_{P}[\ut_{\v}(\ibid^*; \otherbid{t})] + \max_{\ibid^*\in\bidclass_{\v}}\E_{Q}[\ut_{\v}(\ibid^*; \otherbid{t})]-2\max_{\ibid^*\in\bidclass_\v}\E_{(P+Q)/2}[\ut_{\v}(\ibid^*; \otherbid{t})]\\
    &=\Big(\frac{1}{2}+\delta\Big)\maxbid(1-\alpha c) + \maxbid(1-\gamma\alpha c) - \maxbid(1-\alpha c)=\maxbid(1-\alpha c)\delta=\frac{2\maxbid\delta(\gamma-1)}{2\gamma-1},
\end{align*}
where the last line uses $\alpha c(2\gamma-1)=1$. 

Thus, any $\ibid\in\bidclass_\v$ incurs a total regret of $\frac{\maxbid T\delta(\gamma-1)}{2\gamma-1}$ under $P$~(Scenario 1), or a total regret of $\frac{\maxbid T\delta(\gamma-1)}{2\gamma-1}$ under $Q$~(Scenario 2). By two-point method from~\citet[Theorem 2.2]{tsybakov2009introduction},
\begin{align*}
    \E_{(P+Q)/2}[\textsf{Reg}_{nb}(T)] \geq \frac{\maxbid T\delta(\gamma-1)}{2\gamma-1}\cdot(1-\text{TV}(P, Q))\geq \frac{\maxbid T\delta(\gamma-1)}{2(2\gamma-1)}\exp{\left(-16T\delta^2\right)}
\end{align*}

Setting $\delta=\frac{1}{4\sqrt{2T}}$, we get $\E_{(P+Q)/2}[\textsf{Reg}_{nb}(T)] =\Omega(\maxbid\sqrt{T})$.

\subsection{Proof of \cref{thm:regret-with-budgets}}\label{apx:thm:regret-with-budgets}
Before analyzing regret, we first state the necessary changes in the bandit setting.

\noindent\textbf{Known context distribution.} The algorithm is identical to \cref{alg:primal-dual}, except that edge weights $\omega^t_\v(e)$ are replaced with their unbiased estimators from \cref{eq:w-hat-estimate-exp3}, where $\omega^t_\v(e)$ is defined in \cref{eq:edge-weight-online-PD}.

\noindent\textbf{Unknown context distribution.} For any $\v\in\mathcal{V}$, round $t$, and edge $e=(\l-1,b_{\l-1},s_{\l-1})\to(\l,b_\ell,s_\ell)$ in layer $\ell\in[\maxbid]$, the edge weight is
\begin{align}\label{eq:edge-weight-online-PD-2}
\omega^t_\v(e) = \frac{\alpha+1/\rho+(v_\ell-(\alpha+\lambda_t)b_\ell)\cdot\ind{b_\ell\geq \ordotherbid{t}{\ell}}}{1+\alpha+1/\rho},
\end{align}
with all edges from layer $M$ to $d$ assigned weight $0$. Here $\lambda_t\in[0,1/\rho]$ is the dual variable, ensuring $\omega^t_\v(e)\in[0,1]$. As earlier, we add an $s$-$d$ edge for the null action $\emptyset$, with weight $\maxbid(\alpha+1/\rho)/(1+\alpha+1/\rho)$. In this case, \ARef*{line:PD-primal} is replaced as follows: Submit $\ibid^t\in\bidclass$ via \ARef*{line:map-unk-start} to \ARef**{line:map-unk-end}. Finally, the edge weights are replaced with their estimators in \cref{eq:w-hat-estimate-exp3-unk} with $\omega^t_\v(e)$ per \cref{eq:edge-weight-online-PD-2}.

\noindent\textbf{Regret Analysis.} We begin with the following result about the primal regret minimizer:
\begin{lemma}
    For any sequence of dual variables $\{\lambda_t\}_{t\ge1}$ with
    $\lambda_t\in[0,1/\rho]$, we have
    \begin{align}\label{eq:primal-UB}
        \max_{\pi\in\safepolicy\cup\bot}
        \sum_{t=1}^T\E[\ut_{\v^t}(\pi(\v^t); \otherbid{t})-\lambda_t\price(\pi(\v^t); \otherbid{t})]
        - \sum_{t=1}^T\E[\ut_{\v^t}(\allbids^t)-\lambda_t\price(\allbids^t)]
        \lesssim \mathsf{R}_P^T ,
    \end{align}
    where $\allbids^t=(\ibid^t; \otherbid{t})$, $\bot$ is the null policy, and $\mathsf{R}_P^T$ is
    \begin{itemize}
    \item $\frac{\maxbid^{3/2}}{\rho}\sqrt{T\log 1/\epsilon}$ in the full-information setting;
    \item $\frac{\maxbid^2}{\rho\epsilon^{3/2}}\sqrt{T\log 1/\epsilon}$ in the bandit setting with known context distribution;
    \item $\frac{\maxbid^2 T^{2/3}(\log 1/\epsilon)^{1/3}}{\rho\epsilon} + \frac{\maxbid^4\log 1/\epsilon}{\rho\epsilon^3}$ in the bandit setting with unknown context distribution.
    \end{itemize}
\end{lemma}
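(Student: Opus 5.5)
The plan is to reduce \cref{eq:primal-UB} to the unconstrained guarantees (\cref{thm:regret-safe,thm:regret-safe-bandit,thm:regret-safe-bandit-unk}). We view the primal RM as the same Decreasing-Hedge/weight-pushing learner run on the dual-adjusted loss sequence \cref{eq:edge-weight-online-PD}, with the null action added as an extra $s$-$d$ edge.

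\textbf{Step 1: rewrite path weights as the Lagrangian.} Fix $\v$ and a path $\path$ corresponding to $\ibid\in\bidclass_\v$. Summing \cref{eq:edge-weight-online-PD} over the $\maxbid$ layers and using the decomposition \cref{eq:utility-decomp}, together with $\price(\ibid;\otherbid{t})=\sum_j b_j\ind{b_j\ge\ordotherbid{t}{j}}$, gives
\[
\omega^t_\v(\path)=\frac{\maxbid-\big(\ut_\v(\ibid;\otherbid{t})-\lambda_t\price(\ibid;\otherbid{t})\big)}{1+\alpha+1/\rho}.
\]
The null edge has weight $\maxbid/(1+\alpha+1/\rho)$, so the same identity holds for $\emptyset$ with $\ut=\price=0$. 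Since $\lambda_t\le 1/\rho$ and $v_\l,b_\l\in[0,1]$, every edge weight lies in $[0,1]$. Hence the losses satisfy exactly the boundedness hypotheses used in the earlier proofs. The path set now has size $|\mathscr P_\v|+1$, which changes $\log|\mathscr P_\v|\lesssim \maxbid\log(1/\epsilon)$ only by a constant.

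\textbf{Step 2: invoke the earlier regret analyses with adaptive losses.} The proofs of \cref{thm:regret-safe,thm:regret-safe-bandit,thm:regret-safe-bandit-unk} never use that the losses are fixed in advance. They only need the following:
\begin{itemize}
\item $\omega^t_\v$ is determined before the round-$t$ path is drawn, given the history;
\item the losses lie in $[0,1]$;
\item in the bandit cases, the importance-weighted estimators are conditionally unbiased, with the second moments bounded as in \cref{lem:properties-bandit-known,lem:properties-bandit-unknown}.
\end{itemize}
The dual $\lambda_t$ is a function of the history up to round $t-1$, so it is fixed when $\path^t$ is sampled and the conditional arguments carry over verbatim. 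I will re-run each potential argument per context and compare against an arbitrary comparator policy in $\safepolicy\cup\bot$. I then take expectation over $\v^t\sim\mathcal D$, which is i.i.d. and independent of the history. This yields a path-weight regret of the same order as before.

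\textbf{Step 3: rescale.} Converting path-weight regret back to Lagrangian regret multiplies by $1+\alpha+1/\rho\lesssim 1/\rho$, since $\rho<1$ and $\alpha\le1$. This produces the three stated rates, each with an extra $1/\rho$ factor. In the unknown-distribution case the gain form \cref{eq:edge-weight-online-PD-2} is used with the same normalization.

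\textbf{Main obstacle.} I expect the main subtlety to be justifying that the adaptively chosen $\lambda_t$ does not break the bandit estimator arguments. The algorithm also switches to $\emptyset$ once $B_t<\maxbid$, so rounds after depletion are not generated by the primal RM. I would handle this by noting that the primal RM still performs its update in those rounds using full-information dual-adjusted losses of the sampled (but unsubmitted) path. Alternatively, I would treat the statement as being about the primal RM's own choices, with the stopping effect charged later in the $\maxbid/\rho$ term of \cref{thm:regret-with-budgets}.
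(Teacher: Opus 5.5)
Your proposal is correct and is essentially the paper's proof: write the dual-adjusted path weight as $\bigl(\maxbid-(\ut_\v-\lambda_t\price)\bigr)/(1+\alpha+1/\rho)$, rerun the unconstrained Hedge/weight-pushing analyses unchanged, and rescale using $1+\alpha+1/\rho\le 3/\rho$. Your extra remarks go beyond the paper. The history-measurability of $\lambda_t$ is handled correctly. Charging the post-depletion rounds to the stopping argument is exactly how the paper uses the lemma, since it applies the bound only up to $\tau$. One small correction: the null edge's weight $\maxbid/(1+\alpha+1/\rho)$ can exceed $1$, so not every edge weight lies in $[0,1]$. This is harmless, because the argument only uses the path-level bound $\omega^t_\v(\path)\le\maxbid$, plus a lower-order variance term in the bandit cases.
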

\begin{proof}
We prove the result for the full information setting. The results in the other settings follows accordingly. Following the analysis of \cref{thm:regret-safe} up to \cref{eq:regret-stoc-in-path}, we get
\begin{align*}
        \sum_{t=1}^T\sum_{\path\in\mathscr{P}_\v}\P[\path^t=\path|\v^t=\v]\omega^t_\v(\path) - \sum_{t=1}^T\omega^t_\v(\widetilde{\pi}(\v))&\lesssim \frac{\maxbid\log 1/\epsilon}{\eta_T} + \sum_{t=1}^T\sum_{\path\in\mathscr{P}_\v}\P[\path^t=\path|\v^t=\v]\eta_t\omega^t_\v(\path)^2\,. 
    \end{align*}
    Using the definition of edge weights in \cref{eq:edge-weight-online-PD}, we get
    \begin{align*}
        \omega^t_\v(\path) &=\sum_{e\in\path}\omega^t_\v(e)=\frac{M-(\ut_{\v}(\ibid; \otherbid{t})-\lambda_t\price(\ibid; \otherbid{t}))}{1+\alpha+1/\rho}\,.
    \end{align*}
    Then, continuing with analysis of \cref{thm:regret-safe} and using the fact that $1+\alpha+1/\rho\leq 3/\rho$ since $\alpha\leq 1$ and $\rho<1$, we get the desired regret bound.
\end{proof}

Furthermore, the OGD algorithm~(dual regret minimizer) satisfies the following regret bound:
\begin{proposition}[\cite{hazan2016introduction}]
    Let $\zeta_t>0, \forall t\geq 1$ be the learning rate in round $t$. For any fixed $\lambda\in[0, \frac{1}{\rho}]$,
    \begin{align}\label{eq:dual-UB}
        \mathsf{Reg}_D(T):=\sum_{t=1}^T(\price(\allbids^t)-\rho\maxbid)(\lambda-\lambda_t)\lesssim \frac{1}{\zeta_T\rho^2}+\maxbid^2\sum_{t=1}^T\zeta_t\,.
    \end{align}
    where $\allbids^t=(\ibid^t; \otherbid{t}), \forall t\geq 1$. Setting $\zeta_t=\frac{1}{\rho\maxbid\sqrt{t}}$ yields $\mathsf{Reg}_D(T)\lesssim \frac{M\sqrt{T}}{\rho}$.
\end{proposition}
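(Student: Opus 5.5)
This bound is the standard projected online gradient descent guarantee on the interval $\Lambda=[0,1/\rho]$. I will run the textbook potential argument with the affine losses $g_t\lambda$, where $g_t=-(\price(\allbids^t)-\rho\maxbid)$. Then $\mathsf{Reg}_D(T)=\sum_t g_t(\lambda_t-\lambda)$, and the update in \cref{eq:dual-updates} is exactly $\lambda_{t+1}=[\lambda_t-\zeta_t g_t]_0^{1/\rho}$.

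\textbf{Step 1 (one-step inequality).} Fix $\lambda\in\Lambda$. Projection onto the convex set $\Lambda$ is nonexpansive, so
\[
(\lambda_{t+1}-\lambda)^2\le(\lambda_t-\zeta_t g_t-\lambda)^2=(\lambda_t-\lambda)^2-2\zeta_t g_t(\lambda_t-\lambda)+\zeta_t^2g_t^2 .
\]
Rearranging gives
\[
g_t(\lambda_t-\lambda)\le\frac{(\lambda_t-\lambda)^2-(\lambda_{t+1}-\lambda)^2}{2\zeta_t}+\frac{\zeta_t}{2}g_t^2 .
\]

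\textbf{Step 2 (bounded gradients).} Under NOB, $\price(\allbids^t)\le\maxbid$, and the null action pays $0$. Hence $g_t\in[-\maxbid,\rho\maxbid]$, so $g_t^2\le\maxbid^2$ since $\rho<1$. Summing the second term over $t$ therefore yields $\tfrac{\maxbid^2}{2}\sum_t\zeta_t$.

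\textbf{Step 3 (telescoping with decreasing steps).} This is the only mildly delicate step. I rewrite
\[
\sum_{t=1}^T\frac{(\lambda_t-\lambda)^2-(\lambda_{t+1}-\lambda)^2}{2\zeta_t}\le\frac{(\lambda_1-\lambda)^2}{2\zeta_1}+\sum_{t=2}^{T}(\lambda_t-\lambda)^2\Big(\frac{1}{2\zeta_t}-\frac{1}{2\zeta_{t-1}}\Big).
\]
The final negative term has been dropped. Since $\zeta_t$ is non-increasing, each bracket is nonnegative, and $(\lambda_t-\lambda)^2\le\mathrm{diam}(\Lambda)^2=1/\rho^2$. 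The sum then telescopes to at most $\frac{1}{2\zeta_T\rho^2}$. Combining with Step 2 gives
\[
\mathsf{Reg}_D(T)\lesssim\frac{1}{\zeta_T\rho^2}+\maxbid^2\sum_{t=1}^T\zeta_t .
\]

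\textbf{Step 4 (plug in $\zeta_t$).} With $\zeta_t=\frac{1}{\rho\maxbid\sqrt t}$, the first term is $\frac{\rho\maxbid\sqrt T}{\rho^2}=\frac{\maxbid\sqrt T}{\rho}$. Using $\sum_t t^{-1/2}\le2\sqrt T$, the second term is at most $\frac{2\maxbid\sqrt T}{\rho}$. Together these give $\mathsf{Reg}_D(T)\lesssim \maxbid\sqrt T/\rho$. No step is a real obstacle. The only care needed is to use the diameter $1/\rho$ and not a $\maxbid$-dependent radius, and to use the gradient bound $\maxbid$, which comes from NOB.
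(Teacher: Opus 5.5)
Your proof is correct and is exactly the textbook projected online gradient descent argument (diameter $1/\rho$, gradient bound $\maxbid$) that the paper cites from \citet{hazan2016introduction} without reproducing. You rightly rely on $\zeta_t$ being non-increasing, which the statement leaves implicit. One minor point: $\price(\allbids^t)\le\maxbid$ holds simply because every bid lies in $[0,1]$ and at most $\maxbid$ units are won. The gradient bound therefore does not need NOB, which matters for the unknown-distribution bandit variant, where exploration rounds may overbid.
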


In what follows, we conduct the analysis for the full-information setting and the bandit setting with known context distribution. The bandit setting with unknown context distribution follows from the same argument with minor modifications at the end of the proof.

Let $\tau=\min\{t\in[T]: B_t<\maxbid\}$, the stopping time of \cref{alg:primal-dual}. If no such round exists, define $\tau=T$. Let the primal regret upper bound~(the right hand side of \cref{eq:primal-UB}) up to $\tau$ be denoted as $\mathsf{R}_P^\tau$. Rearranging \cref{eq:primal-UB} we get,
\begin{align}\label{eq:primal-regret}
    \sum_{t=1}^\tau \E[\ut_{\v^t}(\allbids^t)] \gtrsim \max_{\pi\in\safepolicy\cup\bot}\sum_{t=1}^\tau\E[\ut_{\v^t}(\pi(\v^t); \otherbid{t})-\lambda_t\price(\pi(\v^t); \otherbid{t})] + \sum_{t=1}^\tau \E[\lambda_t\price(\allbids^t)] - \mathsf{R}_P^\tau.
\end{align}
Similarly, define the OGD regret bound up to round $\tau$ as $\mathsf{R}_D^\tau=\frac{\maxbid\sqrt{\tau}}{\rho}$. Rearranging \cref{eq:dual-UB}, we get
\begin{align}
    \sum_{t=1}^\tau\lambda_t \price(\allbids^t) \gtrsim \sum_{t=1}^\tau \rho\maxbid(\lambda_t-\lambda) + \sum_{t=1}^\tau \lambda \price(\allbids^t) - \mathsf{R}_D^\tau\,.
\end{align}
Substituting in \cref{eq:primal-regret}, we get
\begin{align}\label{eq:first-step}
    \sum_{t=1}^\tau \E[\ut_{\v^t}(\allbids^t)] &\gtrsim \max_{\pi\in\safepolicy\cup\bot}\sum_{t=1}^\tau\E[\ut_{\v^t}(\pi(\v^t); \otherbid{t})-\lambda_t\price(\pi(\v^t); \otherbid{t})]\nonumber\\
    &\quad+ \sum_{t=1}^\tau \E[\rho\maxbid(\lambda_t-\lambda)] + \sum_{t=1}^\tau \lambda \E[\price(\allbids^t)] - \mathsf{R}_D^\tau - \mathsf{R}_P^\tau\nonumber\\
    &=\max_{\pi\in\safepolicy\cup\bot}\sum_{t=1}^\tau\E[\ut_{\v^t}(\pi(\v^t); \otherbid{t}) + \lambda_t(\rho\maxbid- \price(\pi(\v^t); \otherbid{t}))]\nonumber\\
    &\quad+ \sum_{t=1}^\tau \lambda \E[\price(\allbids^t)] - \mathsf{R}_D^\tau - \mathsf{R}_P^\tau-\lambda\rho\maxbid \tau\,.
\end{align}

Define
\begin{align}\label{eq:opt-nb-def}
    \mathsf{OPT}_{nb}^\tau&=\max_{\pi\in\safepolicy}\sum_{t=1}^\tau\E[\ut_{\v^t}(\pi(\v^t); \otherbid{t})]\\
    \pi^*&=\argmax_{\pi\in\safepolicy}\sum_{t=1}^\tau\E[\ut_{\v^t}(\pi(\v^t); \otherbid{t})]\label{eq:pi-star-def}
\end{align}
 In words, $\mathsf{OPT}_{nb}^\tau$ is the maximum utility obtained by a stationary policy in $\safepolicy$ over first $\tau$ rounds without the budget constraints and $\pi^*$ is the stationary policy achieving it. Then, 
 
\begin{claim}\label{eq:main-LB}
For any sequence of competing bids $\{\otherbid{t}\}_{t\geq 1}$, dual variables $\{\lambda_t\}_{t\geq 1}$ and $\rho<1$, we have
  \begin{align*}
    \max_{\pi\in\safepolicy\cup\bot}\sum_{t=1}^\tau\E[\ut_{\v^t}(\pi(\v^t); \otherbid{t}) + \lambda_t(\rho\maxbid- \price(\pi(\v^t); \otherbid{t}))] \geq \rho\cdot\mathsf{OPT}_{nb}^\tau\,.
\end{align*}  
where $\mathsf{OPT}_{nb}^\tau$ is defined per \cref{eq:opt-nb-def}.
\end{claim}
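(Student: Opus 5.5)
The plan is the standard randomized-mixture argument from \citet{castiglioni2022online}: the maximum over $\safepolicy\cup\bot$ is at least the value of any distribution over policies in $\safepolicy\cup\bot$, because the objective is linear in the policy's per-round contribution. Concretely, I will compare against the mixture that plays $\pi^*$ from \cref{eq:pi-star-def} with probability $\rho$ and the null policy $\bot$ with probability $1-\rho$. Equivalently, I take the convex combination of the two objective values. The maximum is at least this combination since it is at least each term.

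\textbf{Step 1 (null policy).} Since $\val(\emptyset;\cdot)=\price(\emptyset;\cdot)=0$, the objective evaluated at $\bot$ equals $\sum_{t=1}^\tau\E[\lambda_t\rho\maxbid]$.

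\textbf{Step 2 (the policy $\pi^*$).} Evaluated at $\pi^*$, the objective equals
\[
\mathsf{OPT}_{nb}^\tau+\sum_{t=1}^\tau\E\big[\lambda_t(\rho\maxbid-\price(\pi^*(\v^t);\otherbid{t}))\big].
\]

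\textbf{Step 3 (combine).} Taking $\rho$ times Step 2 plus $(1-\rho)$ times Step 1 gives
\[
\rho\,\mathsf{OPT}_{nb}^\tau+\sum_{t=1}^\tau\E\big[\lambda_t(\rho\maxbid-\rho\,\price(\pi^*(\v^t);\otherbid{t}))\big].
\]
It therefore suffices to show $\rho\maxbid-\rho\,\price(\pi^*(\v^t);\otherbid{t})\ge0$ for every $t$. This holds because $\pi^*$ is NOB, so $\price\le\val\le \maxbid$. Combined with $\lambda_t\ge0$, each correction term is nonnegative, and the max is at least $\rho\cdot\mathsf{OPT}_{nb}^\tau$.

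\textbf{Main obstacle.} The one subtlety is that $\lambda_t$ is random and depends on history, while $\v^t$ is fresh. The mixture bound, however, is applied pointwise: for each realization the per-round inequality $\lambda_t(\rho\maxbid-\rho\price)\ge0$ holds. The convex-combination step only uses that the maximum dominates each of the two deterministic policies' objective values, since $\bot$ and $\pi^*$ are both fixed members of $\safepolicy\cup\bot$. One also needs $\tau$ to be handled consistently. Because all terms are summed up to the same $\tau$ on both sides and the inequalities hold termwise, this causes no difficulty.
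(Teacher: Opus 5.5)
Your proof is correct, but it combines the two comparators differently from the paper. The paper also restricts attention to $\pi^*$ and $\bot$, but it splits into cases on the sign of $\sum_{t=1}^\tau\E[\ut_{\v^t}(\pi^*(\v^t);\otherbid{t})-\lambda_t\price(\pi^*(\v^t);\otherbid{t})]$. When this quantity is nonnegative, the paper evaluates the Lagrangian at $\pi^*$ and uses $\price\le\maxbid$ to lower bound it by $\sum_t\E[\ut_{\v^t}(\pi^*(\v^t);\otherbid{t})-(1-\rho)\lambda_t\price(\pi^*(\v^t);\otherbid{t})]$. It then uses the case hypothesis to trade $\sum_t\E[\lambda_t\price]$ for $\sum_t\E[\ut]$. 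When the quantity is negative, it evaluates at $\bot$ and chains $\rho\maxbid\sum_t\E[\lambda_t]\ge\rho\sum_t\E[\lambda_t\price(\pi^*(\v^t);\otherbid{t})]\ge\rho\,\mathsf{OPT}_{nb}^\tau$, again via the case hypothesis. You instead take $a$ and $b$ to be the objective values at $\pi^*$ and $\bot$ and use $\max(a,b)\ge\rho a+(1-\rho)b$, which is valid since $0\le\rho<1$. The $\lambda_t$-terms then collapse exactly to $\rho\lambda_t(\maxbid-\price(\pi^*(\v^t);\otherbid{t}))\ge0$, so you need no case split and no aggregate sign condition. Your route is shorter and makes the source of the $\rho$ factor explicit: the $\rho$-mixture of $\pi^*$ with the null action spends at most $\rho\maxbid$ per round, which is the standard adversarial-knapsack intuition. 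The paper's route additionally identifies which pure comparator certifies the bound in each regime, though that information is not used later. Both arguments need only $\lambda_t\ge0$ and $\price\le\maxbid$ pointwise, so, as you note, the history dependence of $\lambda_t$ causes no trouble in either.
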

\begin{proof}
    By definition,
    \begin{align}\label{eq:main-pivot}
&\max_{\pi\in\safepolicy\cup\bot}\sum_{t=1}^\tau\E[\ut_{\v^t}(\pi(\v^t); \otherbid{t}) + \lambda_t(\rho\maxbid- \price(\pi(\v^t); \otherbid{t}))] \nonumber\\
&\quad\geq \max_{\pi\in\{\pi^*, \bot\}}\sum_{t=1}^\tau\E[\ut_{\v^t}(\pi(\v^t); \otherbid{t}) + \lambda_t(\rho\maxbid- \price(\pi(\v^t); \otherbid{t}))],
\end{align}
where $\pi^*$ is per \cref{eq:pi-star-def}. Now, consider two cases:
\begin{enumerate}
    \item Suppose $\sum_{t=1}^\tau\E[\ut_{\v^t}(\pi^*(\v^t); \otherbid{t}) -\lambda_t\price(\pi^*(\v^t); \otherbid{t}))]\geq 0$. Then,
    \begin{align*}
        &\max_{\pi\in\safepolicy\cup\bot}\sum_{t=1}^\tau\E[\ut_{\v^t}(\pi(\v^t); \otherbid{t}) + \lambda_t(\rho\maxbid- \price(\pi(\v^t); \otherbid{t}))] \\
        &\quad\geq \sum_{t=1}^\tau\E[\ut_{\v^t}(\pi^*(\v^t); \otherbid{t}) + \lambda_t(\rho\maxbid- \price(\pi^*(\v^t); \otherbid{t}))] \\
        &\quad\geq \sum_{t=1}^\tau\E[\ut_{\v^t}(\pi^*(\v^t); \otherbid{t}) -(1-\rho) \lambda_t\price(\pi^*(\v^t); \otherbid{t})] \\
        &\quad\geq \sum_{t=1}^\tau\E[\ut_{\v^t}(\pi^*(\v^t); \otherbid{t}) -(1-\rho) \ut_{\v^t}(\pi^*(\v^t); \otherbid{t})] \\
        &=\rho\sum_{t=1}^\tau\E[\ut_{\v^t}(\pi^*(\v^t); \otherbid{t})]= \rho\cdot\mathsf{OPT}^\tau_{nb}\,.
    \end{align*}
    Here, the second inequality holds as $\price(\pi^*(\v^t); \otherbid{t})\leq \maxbid$ and third inequality holds because by assumption, $\sum_{t=1}^\tau\E[\ut_{\v^t}(\pi^*(\v^t); \otherbid{t})] \geq\sum_{t=1}^\tau\E[\lambda_t\price(\pi^*(\v^t); \otherbid{t}))]$.

    \item Suppose $\sum_{t=1}^\tau\E[\ut_{\v^t}(\pi^*(\v^t); \otherbid{t}) -\lambda_t\price(\pi^*(\v^t); \otherbid{t}))]< 0$. Setting $\pi=\bot$ in the right hand side of \cref{eq:main-pivot},
    \begin{align*}
        \max_{\pi\in\safepolicy\cup\bot}\sum_{t=1}^\tau\E[\ut_{\v^t}(\pi(\v^t); \otherbid{t}) + \lambda_t(\rho\maxbid- \price(\pi(\v^t); \otherbid{t}))] &\geq \rho\maxbid \sum_{t=1}^\tau \E[\lambda_t] \\
        &\geq \rho\sum_{t=1}^\tau \E[\lambda_t\price(\pi^*(\v^t); \otherbid{t})]\\
        &\geq \rho\sum_{t=1}^\tau \E[\ut_{\v^t}(\pi^*(\v^t); \otherbid{t})]\\
        &=\rho\cdot\mathsf{OPT}_{nb}^\tau\,.
    \end{align*}
\end{enumerate}
\end{proof}

Substituting the result of \cref{eq:main-LB} in \cref{eq:first-step} and rearranging, we get
\begin{align*}
  \sum_{t=1}^\tau \E[\ut_{\v^t}(\allbids^t)-\lambda \price(\allbids^t)] &\gtrsim  \rho\cdot\mathsf{OPT}_{nb}^\tau - \mathsf{R}_D^\tau - \mathsf{R}_P^\tau-\lambda\rho\maxbid \tau
\end{align*}
\begin{claim}\label{cl:opt-LB}
    Recall that $\mathsf{OPT}$ is the optimal value of \eqref{eq:opt-sto}. Then, $\mathsf{OPT}_{nb}^\tau \geq \mathsf{OPT} -\maxbid(T-\tau)$.
\end{claim}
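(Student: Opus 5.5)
We compare the optimal budget-feasible policy over all $T$ rounds with its own restriction to the first $\tau$ rounds. Let $\pi^{\mathsf{OPT}}\in\safepolicy\cup\bot$ attain $\mathsf{OPT}$ in \eqref{eq:opt-sto}. The plan has three steps.

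\textbf{Step 1: split the horizon.} We write
\[
\mathsf{OPT}=\sum_{t=1}^\tau\E[\ut_{\v^t}(\pi^{\mathsf{OPT}}(\v^t);\otherbid{t})]+\sum_{t=\tau+1}^T\E[\ut_{\v^t}(\pi^{\mathsf{OPT}}(\v^t);\otherbid{t})].
\]

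\textbf{Step 2: bound the tail.} Each summand in the tail is at most $\maxbid$. This holds because the policy is NOB, so $\ut_\v\le \val_\v\le W_{\maxbid}\le \maxbid$; the null action gives utility $0$. Hence the tail contributes at most $\maxbid(T-\tau)$.

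\textbf{Step 3: bound the head by $\mathsf{OPT}_{nb}^\tau$.} If $\pi^{\mathsf{OPT}}=\bot$, the head equals $0$. In that case $\mathsf{OPT}=0$, and the claim is immediate because $\mathsf{OPT}_{nb}^\tau\ge 0$: every NOB strategy has nonnegative utility by \cref{assum:overbid}. Otherwise $\pi^{\mathsf{OPT}}\in\safepolicy$ is feasible for the unconstrained maximization \eqref{eq:opt-nb-def} over the first $\tau$ rounds. Dropping the budget constraint only enlarges the feasible set, so the head is at most $\mathsf{OPT}_{nb}^\tau$. Combining Steps 1--3 gives $\mathsf{OPT}\le \mathsf{OPT}_{nb}^\tau+\maxbid(T-\tau)$, which is the claim.

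\textbf{Main obstacle.} The delicate point is that $\tau$ is a random stopping time that depends on the algorithm's trajectory, while $\mathsf{OPT}$ and $\mathsf{OPT}_{nb}^\tau$ are expectations over contexts. I would handle this by arguing pathwise, conditioning on the realized $\tau$ and on the adversarial sequence $\otherbid{t}$. The competing bids are fixed (oblivious), and the contexts $\v^t$ are i.i.d. and independent of which rounds are counted. So the per-round expectation $\E[\ut_{\v^t}(\pi(\v^t);\otherbid{t})]$ for a stationary policy $\pi$ is a deterministic function of $t$. The comparison in Step 3 is then termwise for any fixed value of $\tau$, and the inequality survives taking expectation over $\tau$. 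This is consistent with how $\mathsf{OPT}_{nb}^\tau$ is used afterward in the proof of \cref{thm:regret-with-budgets}.
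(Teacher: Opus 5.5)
Your proof is correct and follows the paper's argument: split $\mathsf{OPT}$ at $\tau$, bound the tail by $\maxbid(T-\tau)$, and bound the head by $\mathsf{OPT}_{nb}^\tau$. The paper routes the head bound through an intermediate budget-constrained quantity $\mathsf{OPT}^\tau$ and then uses $\mathsf{OPT}_{nb}^\tau\ge\mathsf{OPT}^\tau$, whereas you bound the head directly and handle the $\bot$ case explicitly via nonnegativity of NOB utilities, which the paper leaves implicit. Your stopping-time discussion is more than you need. With unconditional per-round expectations, the inequality holds for every deterministic $k\in[T]$ in place of $\tau$, so it holds for the realized $\tau$ without any independence claim. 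That matters because $\tau$ does depend on the contexts through the algorithm's spending.
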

\begin{proof}
With slight abuse of notation, define 
\begin{align*}
     \mathsf{OPT}^s:=\max_{\pi\in\safepolicy\cup\bot}&\sum_{t=1}^{s} \E[\ut_{\v^t}(\pi(\v^t); \otherbid{t})]\quad\text{such that}\quad \sum_{t=1}^s \price(\pi(\v^t); \otherbid{t}) \leq \rho \maxbid T\,.
\end{align*}
Observe that $\mathsf{OPT}_{nb}^\tau \geq \mathsf{OPT}^\tau$ because $\mathsf{OPT}^\tau$ is the optimal objective value of the constrained problem whereas $\mathsf{OPT}_{nb}^\tau$ is the optimal objective value of the unconstrained problem. 


Let $\pi^\dagger\in\safepolicy\cup\bot$ be the policy that maximizes \eqref{eq:opt-sto}. Then,
\begin{align*}
    \mathsf{OPT} &= \sum_{t=1}^\tau\E[\ut_{\v^t}(\pi^\dagger(\v^t); \otherbid{t})] + \sum_{t=\tau+1}^T \E[\ut_{\v^t}(\pi^\dagger(\v^t); \otherbid{t})]\quad\text{such that}\quad \sum_{t=1}^T\price(\pi^\dagger(\v^t); \otherbid{t}) \leq \rho \maxbid T\\
    &\leq \sum_{t=1}^\tau\E[\ut_{\v^t}(\pi^\dagger(\v^t); \otherbid{t})] + \maxbid(T-\tau)\quad\text{such that}\quad \sum_{t=1}^T\price(\pi^\dagger(\v^t); \otherbid{t}) \leq \rho \maxbid T\\
    &\leq \max_{\pi\in\safepolicy\cup\bot}\sum_{t=1}^\tau\E[\ut_{\v^t}(\pi(\v^t); \otherbid{t})] + \maxbid(T-\tau)\quad\text{such that}\quad \sum_{t=1}^\tau \price(\pi(\v^t); \otherbid{t}) \leq \rho \maxbid T\\
    &=\mathsf{OPT}^\tau + \maxbid(T-\tau)\,.
\end{align*}
Hence, $\mathsf{OPT}^\tau \geq \mathsf{OPT} -\maxbid(T-\tau)\implies \mathsf{OPT}_{nb}^\tau \geq \mathsf{OPT} -\maxbid(T-\tau)$. 
\end{proof}

Hence, by \cref{cl:opt-LB}, we have
\begin{align*}
   \sum_{t=1}^T\E[\ut_{\v^t}(\allbids^t)] \geq \sum_{t=1}^\tau\E[\ut_{\v^t}(\allbids^t)] &\gtrsim  \rho\cdot(\mathsf{OPT} -\maxbid(T-\tau))+ \sum_{t=1}^\tau \lambda \E[\price(\allbids^t)] - \mathsf{R}_D^\tau - \mathsf{R}_P^\tau-\lambda\rho\maxbid \tau\,. 
\end{align*}
If $\tau=T$, set $\lambda=0$. Rearranging the terms and substituting the value of $\mathsf{R}_D^T$ yields
\begin{align}\label{eq:final-res1}
\rho\cdot\mathsf{Reg}(T) = \rho\cdot\mathsf{OPT} -  \sum_{t=1}^T\E[\ut_{\v^t}(\allbids^t)] \lesssim \frac{\maxbid\sqrt{T}}{\rho} + \mathsf{R}_P^T.
\end{align}
 On the other hand, if $\tau< T$, $\sum_{t=1}^\tau \price(\allbids^t) + \maxbid \geq \rho\maxbid T$. So, 
\begin{align*}
   \sum_{t=1}^T\E[\ut_{\v^t}(\allbids^t)] &\gtrsim  \rho\cdot(\mathsf{OPT} -\maxbid(T-\tau))+ \lambda(\rho\maxbid T-\maxbid) - \mathsf{R}_D^\tau - \mathsf{R}_P^\tau-\lambda\rho\maxbid \tau. 
\end{align*}
Setting $\lambda=\frac{1}{\rho}$,
\begin{align*}
   \sum_{t=1}^T\E[\ut_{\v^t}(\allbids^t)] &\gtrsim  \rho\cdot(\mathsf{OPT} -\maxbid(T-\tau))+\maxbid(T-\tau)-\frac{\maxbid}{\rho}- \mathsf{R}_D^\tau - \mathsf{R}_P^\tau.
\end{align*}
Rearranging, 
\begin{align*}
   \rho\cdot\mathsf{Reg}(T)=\rho\cdot\mathsf{OPT} -\sum_{t=1}^T\E[\ut_{\v^t}(\allbids^t)] &\lesssim  \rho\maxbid(T-\tau)-\maxbid(T-\tau)+\frac{\maxbid}{\rho}+ \mathsf{R}_D^\tau + \mathsf{R}_P^\tau\\
   \implies\rho\cdot\mathsf{Reg}(T)&\lesssim \frac{\maxbid}{\rho}+ \frac{\maxbid\sqrt{T}}{\rho} + \mathsf{R}_P^T,
\end{align*}
where the last inequality follows as $\rho\leq 1$ and $\tau\leq T$. Combining with \cref{eq:final-res1}, we get the desired regret bound.

\subsection{Proof of \cref{thm:regret-safe-alt}}

The key idea of the proof is to show that \cref{alg:weight-pushing-stoc-alt} is an efficient implementation of Decreasing Hedge (equivalently, \cref{alg:weight-pushing-stoc}) and, crucially, that its time and space complexity are independent of the number of contexts.

Recall that in the full-information setting, for each $\v\in\mathcal V$ and each edge
$e=(\l-1,b_{\l-1},s_{\l-1})\to(\l,b_\l,s_\l)$ in layer $\l\in[\maxbid]$, the edge weight is of the form 
\[\omega_\v^t(e)=x^t(e)\,v_\l+y^t(e)\]
and all edges from nodes in layer $\maxbid$ to the destination node $d$ have weight $0$. Here,
\[
x^t(e)=-\frac{\ind{b_\l\ge \ordotherbid{t}{\l}}}{1+\alpha},
\qquad
y^t(e)=\frac{1+\alpha b_\l\cdot\ind{b_\l\ge \ordotherbid{t}{\l}}}{1+\alpha}.
\]
We further defined 
\[
\widehat{x}^{t}(e):=-\eta_{t+1}\sum_{s=1}^{t}x^s(e),
\qquad
\widehat{y}^{t}(e):=-\eta_{t+1}\sum_{s=1}^{t}y^s(e),
\]
where $\widehat{x}^{0}(e)=\widehat{y}^{0}(e)=0$. This gives
\begin{align}
\widehat{x}^{t}(e) &= \frac{\eta_{t+1}}{\eta_t}\,\widehat{x}^{t-1}(e)-\eta_{t+1}x^t(e), \tag{\eqref{eq:main-iteration1} restated}\\
\widehat{y}^{t}(e) &= \frac{\eta_{t+1}}{\eta_t}\,\widehat{y}^{t-1}(e)-\eta_{t+1}y^t(e). \tag{\eqref{eq:main-iteration2} restated}
\end{align}
Recall that $\omega^s_\v(\path)=\sum_{e\in\path}\omega^s_\v(e)$. Then, for any $\path=s\to (1, b_1, s_1)\to\dots\to (\maxbid, b_\maxbid, s_\maxbid)\to d$,
\begin{align}\label{eq:cumu-loss}
   -\eta_t\sum_{s=1}^{t-1}\omega^s_\v(\path) = \langle\widehat{\x}^{t-1}(\path), \v\rangle+\widehat{y}^{t-1}(\path), 
\end{align}
where $\widehat{\x}^{t-1}(\path)\in \R^\maxbid$ and the $\l^{th}$ coordinate corresponding to $e=(\l-1, b_{\l-1}, s_{\l-1})\to(\l, b_{\l}, s_{\l})$ is 
\begin{align*}
    [\widehat{\x}^{t-1}(\path)]_\l = \widehat{x}^{t-1}(e) , \quad\text{and}\quad \widehat{y}^{t-1}(\path)=\sum_{e\in\path} \widehat{y}^{t-1}(e)\,.
\end{align*}
Recall that in round $t$, the probability of selecting a path $\path$ is $\P_\v^t(\path)=\prod_{e\in\path}\phi^t_{\v}(e),$ and for any edge $e=(\l-1, b_{\l-1}, s_{\l-1})\to (\l, b_{\l}, s_{\l}) \in \mathsf{E}_\v$, the edge probabilities are
\begin{align}
     \phi^t_\v(e) = \exp(\widehat{x}^{t-1}(e)\cdot v_\l + \widehat{y}^{t-1}(e)) \cdot \frac{\Gamma^{t-1}_\v(\l, b_{\l}, s_{\l})}{\Gamma^{t-1}_\v(\l-1, b_{\l-1}, s_{\l-1})},\tag{\eqref{eq:phi-update-alt} restated}
\end{align}
where $\Gamma^{t-1}_\v(d)=1$ and for $\Gamma^{t-1}_\v(\cdot)$ is computed recursively in a bottom-to-top fashion for every $u=(\l-1, b_{\l-1}, s_{\l-1})\in\mathsf{N}_\v$:
\begin{align}
\Gamma^{t-1}_\v(u)=\sum_{v=(\l, b_\l, s_\l):u\to v=:e\in\mathsf{E}_\v}\Gamma^{t-1}_\v(v)\cdot\exp(\widehat{x}^{t-1}(e)\cdot v_\l + \widehat{y}^{t-1}(e))\,.\tag{\eqref{eq:gamma-update-alt} restated}
\end{align}
Here, $\eta_0=1$ and $\gamma_t=\frac{\eta_t}{\eta_{t-1}}, \forall t\geq 1$. Recall that in a na\"ive implementation of the Decreasing Hedge algorithm, the probability of selecting path $\path$ in round $t$ is
\begin{align}
    \HP^{t}_\v(\path) = \frac{\exp(-\eta_t\sum_{s=1}^{t-1}\omega^s_\v(\path))}{\sum_{\path'\in\mathscr{P}_\v}\exp(-\eta_t\sum_{s=1}^{t-1}\omega^s_\v(\path'))},\tag{\eqref{eq:dec-hedge-naive} restated}
\end{align}
where $\mathscr{P}_\v$ is the set of all $s$-$d$ paths in $\DAG{}$. To show that \cref{alg:weight-pushing-stoc-alt} is an efficient implementation of Decreasing Hedge, we prove that $\P^t_\v(\path)=\HP^{t}_\v(\path)$. Since $\P^t_\v(\path)=\prod_{e\in\path}\phi^t_\v(e)$ and 
\begin{align*}
   \HP^{t}_\v(\path)
   \stackrel{\eqref{eq:dec-hedge-naive}}{=}
   \frac{\exp(-\eta_t\sum_{s=1}^{t-1}\omega^s_\v(\path))}{\sum_{\path'\in\mathscr{P}_\v}\exp(-\eta_t\sum_{s=1}^{t-1}\omega^s_\v(\path'))}
   \stackrel{\eqref{eq:cumu-loss}}{=} \frac{\exp(\langle\widehat{\x}^{t-1}(\path), \v\rangle+\widehat{y}^{t-1}(\path))}{\sum_{\path'\in\mathscr{P}_\v}\exp(\langle\widehat{\x}^{t-1}(\path'), \v\rangle+\widehat{y}^{t-1}(\path'))},
\end{align*}
it suffices to show that 
\begin{align*}
   \prod_{e\in\path}\phi^t_\v(e) = \frac{\exp(\langle\widehat{\x}^{t-1}(\path), \v\rangle+\widehat{y}^{t-1}(\path))}{\sum_{\path'\in\mathscr{P}_\v}\exp(\langle\widehat{\x}^{t-1}(\path'), \v\rangle+\widehat{y}^{t-1}(\path'))}\,.
\end{align*}
To this end, we first show the following result (similar to \cref{cl:gamma-closed-form}):
\begin{claim}\label{cl:gamma-closed-form-alt}
    For any node $u=(\l-1, b_{\l-1}, s_{\l-1})\in\mathsf{N}_\v$, let $\mathscr{P}_\v(u)$ be the set of paths starting at $u$ and terminating in $d$. Then, 
    \begin{align*}
        \Gamma^{t-1}_\v(u)=\sum_{\path\in\mathscr{P}_\v(u)}\prod_{e\in\path}\exp(\widehat{x}^{t-1}(e)\cdot v_k + \widehat{y}^{t-1}(e))\,.
    \end{align*}
    where $e=(k-1, b_{k-1}, s_{k-1})\to (k, b_k, s_k)$ is an edge between layer $k-1$ and layer $k$ for $\l\leq k\leq \maxbid$.
\end{claim}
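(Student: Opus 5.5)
I would prove \cref{cl:gamma-closed-form-alt} by backward induction on the layer index, mirroring the proof of \cref{cl:gamma-closed-form}. Throughout, write $\psi^{t-1}_\v(e):=\exp(\widehat{x}^{t-1}(e)\cdot v_k+\widehat{y}^{t-1}(e))$ for an edge $e$ from layer $k-1$ to layer $k$. For the edges from layer $\maxbid$ to $d$ I set $\psi^{t-1}_\v(e)=1$. This convention is consistent with the algorithm: those edges carry weight $0$ in every round, so their shared coefficients stay zero.

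\textbf{Base cases.} At $u=d$, $\mathscr{P}_\v(d)$ contains only the trivial path, whose empty product is $1$, and this matches $\Gamma^{t-1}_\v(d)=1$. Next take a node $u$ in layer $\maxbid$. By \cref{eq:gamma-update-alt} its unique out-edge goes to $d$ and has factor $1$, so $\Gamma^{t-1}_\v(u)=1$. This equals the single product over the one path $u\to d$.

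\textbf{Inductive step.} Suppose the identity holds for every node in layer $\l$. Fix $u=(\l-1,b_{\l-1},s_{\l-1})\in\mathsf N_\v$ and apply \cref{eq:gamma-update-alt}:
\begin{align*}
\Gamma^{t-1}_\v(u)=\sum_{v:\,e=u\to v\in\mathsf E_\v}\psi^{t-1}_\v(e)\sum_{\path\in\mathscr P_\v(v)}\prod_{e'\in\path}\psi^{t-1}_\v(e').
\end{align*}
Every path in $\mathscr P_\v(u)$ decomposes uniquely as a first edge $u\to v\in\mathsf E_\v$ followed by a path in $\mathscr P_\v(v)$. The DAG is layered, so every such $v$ lies in layer $\l$, and conversely each such concatenation is a valid path in $\mathscr P_\v(u)$. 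Distributing the product over this decomposition therefore gives exactly $\sum_{\path\in\mathscr P_\v(u)}\prod_{e\in\path}\psi^{t-1}_\v(e)$.

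\textbf{Main subtlety.} The one step that needs care is that the recursion \cref{eq:gamma-update-alt} runs only over edges in $\mathsf E_\v$, not over all of $\overline{\mathsf E}$. Consequently the paths being enumerated are $s$--$d$ paths of the context-dependent DAG, even though the coefficients $\widehat{x},\widehat{y}$ are stored on the super DAG. Because $\mathsf E_\v\subseteq\overline{\mathsf E}$, those coefficients are defined on every edge the recursion touches. I also need the layer index $k$ attached to each edge factor to be the layer of that edge's head, so that $v_k$ is the correct coordinate. This holds by the layered construction, and it is exactly the indexing used in \cref{eq:cumu-loss}.
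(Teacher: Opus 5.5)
Your proof is correct and follows the paper's argument: backward induction over the layers from the base case $\Gamma^{t-1}_\v(d)=1$, substituting the induction hypothesis into the recursion \cref{eq:gamma-update-alt}, and regrouping paths in $\mathscr{P}_\v(u)$ by their first edge. Your convention that edges into $d$ contribute a factor of $1$, together with the separate layer-$\maxbid$ base case, makes explicit a point the paper leaves implicit (its product only ranges over $\l\le k\le\maxbid$), but it is the same route.
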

\begin{proof}
    We prove the result by backward induction. For the base case, $\Gamma^{t-1}_\v(d)=1$. Suppose the result holds true for all the nodes in layer $\l$ for some $\l\in[\maxbid+1]$. Then, for any node $v=(\l, b_{\l}, s_{\l})$ in layer $\l$, we have
    \begin{align}\label{eq:induction-hypo}
        \Gamma^{t-1}_\v(v)=\sum_{\path\in\mathscr{P}_\v(v)}\prod_{e'\in\path}\exp(\widehat{x}^{t-1}(e')\cdot v_{j} + \widehat{y}^{t-1}(e'))\,.
    \end{align}
    where $e=(j-1, b_{j-1}, s_{j-1})\to (j, b_j, s_j)$ is an edge between layer $j-1$ and layer $j$ for $\l+1\leq j\leq \maxbid$. Thus, for $u=(\l-1, b_{\l-1}, s_{\l-1})$ in layer $\l-1$, 
    \begin{align*}
        \Gamma^{t-1}_\v(u)&\stackrel{\eqref{eq:gamma-update-alt}}{=}\sum_{v=(\l, b_\l, s_\l):u\to v=:e\in\mathsf{E}_\v}\Gamma^{t-1}_\v(v)\cdot\exp(\widehat{x}^{t-1}(e)\cdot v_\l + \widehat{y}^{t-1}(e))\\
        &\stackrel{\eqref{eq:induction-hypo}}{=}\sum_{v=(\l, b_\l, s_\l):u\to v=:e\in\mathsf{E}_\v}\Big(\sum_{\path\in\mathscr{P}_\v(v)}\prod_{e'\in\path} \exp(\widehat{x}^{t-1}(e')\cdot v_{j} + \widehat{y}^{t-1}(e'))\Big)\cdot\exp(\widehat{x}^{t-1}(e)\cdot v_\l + \widehat{y}^{t-1}(e))\\
        &=\sum_{\path\in\mathscr{P}_\v(u)}\prod_{e\in\path}\exp(\widehat{x}^{t-1}(e)\cdot v_k + \widehat{y}^{t-1}(e))\,.
    \end{align*}
\end{proof}
Hence, the probability of sampling path $\path$ in round $t$ when the context is $\v$ is
\begin{align*}
    \prod_{e\in\path}\phi^t_\v(e)&\stackrel{\eqref{eq:phi-update-alt}}{=}\prod_{e=(\l-1, b_{\l-1}, s_{\l-1})\to (\l, b_{\l}, s_{\l})\in\path} \exp(\widehat{x}^{t-1}(e)\cdot v_\l + \widehat{y}^{t-1}(e)) \cdot \frac{\Gamma^{t-1}_\v(\l, b_{\l}, s_{\l})}{\Gamma^{t-1}_\v(\l-1, b_{\l-1}, s_{\l-1})}\\
    &=\frac{\Gamma^{t-1}_\v(d)}{\Gamma^{t-1}_\v(s)}\prod_{e=(\l-1, b_{\l-1}, s_{\l-1})\to (\l, b_{\l}, s_{\l})\in\path} \exp(\widehat{x}^{t-1}(e)\cdot v_\l + \widehat{y}^{t-1}(e))\\
    &=\frac{\exp(\langle\widehat{\x}^{t-1}(\path), \v\rangle+\widehat{y}^{t-1}(\path))}{\Gamma^{t-1}_\v(s)},
\end{align*}
where the last inequality follows from the definition of $\widehat{\x}^{t-1}(\path), \widehat{y}^{t-1}(\path)$ and the fact that $\Gamma^{t-1}_\v(d)=1$. Finally, by \cref{cl:gamma-closed-form}, we get that
\begin{align*}
   \Gamma^{t-1}_\v(s)=\sum_{\path'\in\mathscr{P}_\v}\prod_{e\in\path}\exp(\widehat{x}^{t-1}(e)\cdot v_\l + \widehat{y}^{t-1}(e))=\sum_{\path'\in\mathscr{P}_\v}\exp(\langle\widehat{\x}^{t-1}(\path'), \v\rangle+\widehat{y}^{t-1}(\path'))\,. 
\end{align*}
Thus, 
\begin{align*}
   \prod_{e\in\path}\phi^t_\v(e)= \frac{\exp(\langle\widehat{\x}^{t-1}(\path), \v\rangle+\widehat{y}^{t-1}(\path))}{\sum_{\path'\in\mathscr{P}_\v}\exp(\langle\widehat{\x}^{t-1}(\path'), \v\rangle+\widehat{y}^{t-1}(\path'))},
\end{align*}
which was the desired result. 


\textbf{Space and Time Complexity.} Constructing the context-dependent DAG $\DAG{}$ for the realized context and performing the updates in \cref{eq:gamma-update-alt,eq:phi-update-alt} require $O(|\mathsf E_\v|)$ time and space. Sampling a path takes $O(M/\epsilon)$ time. Updating the shared coefficients via \cref{eq:main-iteration1,eq:main-iteration2} requires $O(|\overline{\mathsf E}|)$ time and space, where $|\overline{\mathsf E}|$ denotes the number of edges in the super DAG. Since $|\mathsf E_\v|\le |\overline{\mathsf E}|$ for all $\v\in\mathcal V$ (and in fact the two are of the same order), the overall per-round complexity of \cref{alg:weight-pushing-stoc-alt} is $O(|\overline{\mathsf E}|)$ in both time and space. \textit{In particular, the implementation is independent of $|\mathcal V|$, and thus applies even when the context space is infinite.}

\subsection{Proof of \cref{thm:regret-safe-bandit-unk-alt}}\label{apx:ssec:infinite-bandit}
We begin by presenting the efficient implementation of the learning algorithm in this setting.

\begin{algorithm}[!tbh]
\caption{\small No Budget Constraint (Bandit Setting, Unknown Distribution) -- Efficient Implementation}
\label{alg:weight-pushing-bandit-unk-alt}
\small{
\begin{algorithmic}[1]
\Require Learning rates $\eta_t>0$, $\delta\in(0, 1]$, and an edge path cover $\mathcal C$ of the super DAG. Define $\eta_0=1$. For all $e\in\overline{\mathsf{E}}$, define $\widehat{x}^0(e)=\widehat{y}^0(e)=0$.
\For{$t = 1, 2, \dots, T$}
    \State Observe an i.i.d. valuation vector sample $\v^t\sim\mathcal{D}$. Let $\v^t=\v'$.
    \State Sample $Z_t\sim\text{Unif}[0, 1]$.
    \State Construct $\xDAG{t}{\v'}$ and obtain edge probabilities $\phi^t_{\v'}(\cdot)$ following \cref{eq:gamma-update-alt} and \cref{eq:phi-update-alt}.
    \If{$Z_t\leq \delta$}
    \State Sample a path $\path^t$ from the edge path cover. 
    \Else\label{line:else-start}
    \State Define initial node $u=s$ and path $\path^t=s$. 
    \While{$u\neq d$}
    \State Sample $v$ with probability $\phi^t_{\v'}(u\to v)$.
    \State Append $v$ to the path $\path^t$; set $u\gets v$.
    \EndWhile\label{line:else-end-alt}
    \EndIf
    \State Map $\path^t=s\to (1, b_1, s_1)\to\dots\to (\maxbid, b_\maxbid, s_\maxbid)\to d$, and submit $\ibid^t=[b_1, \dots, b_\maxbid]$.\label{line:map-unk-end-alt}
    
    \State Update $\widehat{x}^t(e)$ and $\widehat{y}^t(e)$ for all $e\in\overline{\mathsf{E}}$ per \cref{eq:main-iteration1} and \cref{eq:main-iteration2} respectively, where $x^t(e)$ and $y^t(e)$ are defined in \cref{eq:xy-bandit-unk-alt}.
\EndFor
\end{algorithmic}}
\end{algorithm}

Observe that once the coefficients $\{x^t(e),y^t(e)\}_{e\in\overline{\mathsf E}}$ are available, the Hedge-style updates in \cref{alg:weight-pushing-bandit-unk-alt} follow the same structure as in the full-information implementation (cf.~\cref{alg:weight-pushing-stoc-alt}). Together with the uniform sampling over edge path cover, this shows that \cref{alg:weight-pushing-bandit-unk-alt} is equivalent to \cref{alg:weight-pushing-bandit-unk}. Thus the regret bound remains the same in this setting.

To obtain the space and time complexity, note that if the algorithm samples a path uniformly at random, this can be done in $O(|\overline{\mathsf{E}}|)$ time. If instead it performs exponential-weight updates, the edge marginals $\{p_{\v'}^t(e)\}_{e\in\overline{\mathsf{E}}}$ can be computed in $O(|\overline{\mathsf{E}}|)$ time and space (see \cref{apx:thm:regret-safe-bandit-unk}). The remaining steps mirror the full-information implementation and likewise take $O(|\overline{\mathsf{E}}|)$ time and space. Therefore, the overall per-round complexity of \cref{alg:weight-pushing-bandit-unk-alt} is $O(|\overline{\mathsf E}|)$ in both time and space. 


\end{document}